\newtheorem{Lemma}{Lemma}
\newtheorem{Prop}{Proposition}
\newtheorem{Theorem}{Theorem}
\newtheorem{assumption}{H\!\!}
\newtheorem{assumptionB}{B\!}
\newtheorem*{Lemma*}{Lemma}
\newtheorem*{Theorem*}{Theorem}
\renewenvironment{proof}[1][\proofname]{%
   \par\pushQED{\qed}\normalfont%
   \topsep6\p@\@plus6\p@\relax
   \trivlist\item[\hskip\labelsep\bfseries#1]%
   \ignorespaces
}{%
   \popQED\endtrivlist\@endpefalse
}
    \def\multilimits@{\bgroup
  \Let@
  \restore@math@cr
  \default@tag
 \baselineskip\fontdimen10 \scriptfont\tw@
 \advance\baselineskip\fontdimen12 \scriptfont\tw@
 \lineskip\thr@@\fontdimen8 \scriptfont\thr@@
 \lineskiplimit\lineskip
 \vbox\bgroup\ialign\bgroup\hfil$\m@th\scriptstyle{##}$\hfil\crcr}
    \def\Sb{_\multilimits@}
    \def\endSb{\crcr\egroup\egroup\egroup}
\newtheoremstyle{t}         
    {\baselineskip}{2\topsep}      
    {\rm}                   
    {0pt}{\bfseries}  
    {}                      
    { }                      
    {\thmname{#1}\thmnumber{#2}.}
\theoremstyle{t}
\DeclareRobustCommand*\cal{\@fontswitch\relax\mathcal}
\newcommand{\mathbbm}[1]{\text{\usefont{U}{bbm}{m}{n}#1}}
\begin{document}
\title{On the Global Convergence of (Fast) Incremental Expectation Maximization Methods}
\author{
  Belhal Karimi \\
  CMAP, \'{E}cole Polytechnique \\
  Palaiseau, France \\
  \texttt{belhal.karimi@polytechnique.edu} \\
  \And
  Hoi-To Wai \\
  The Chinese University of Hong Kong\\
  Shatin, Hong Kong\\
  \texttt{htwai@se.cuhk.edu.hk}
  \And
  Eric Moulines\\
  CMAP, \'{E}cole Polytechnique \\
  Palaiseau, France \\
  \texttt{eric.moulines@polytechnique.edu} \\
  \And
  Marc Lavielle \\
  INRIA Saclay\\
  Palaiseau, France \\
  \texttt{marc.lavielle@inria.fr} \\
}
\date{\today}

\maketitle

\begin{abstract}
\noindent The EM algorithm is one of the most popular algorithm for inference in latent data models. The original formulation of the EM algorithm does not scale to large data set, because the whole data set is required at each iteration of the algorithm. To alleviate this problem, \citet{neal1998view}  have  proposed   an incremental version of the EM (\IEM) in  which at each iteration the conditional expectation of the latent data (E-step)  is updated only for a mini-batch of observations. Another approach has been proposed by \citet{cappe2009line} in which the E-step is replaced by a stochastic approximation step, closely related to stochastic gradient. In this paper, we analyze incremental and stochastic version of the EM algorithm as well as the variance reduced-version of \citep{chen2018stochastic} in a common unifying framework. We also introduce a new version incremental version, inspired by the SAGA algorithm by \citet{defazio2014saga}. We establish non-asymptotic convergence bounds for global convergence.
Numerical applications are presented in this article to illustrate our findings.
\end{abstract}


\section{Introduction}
Many problems in machine learning pertain to tackling an empirical risk minimization  of the form
\beq \label{eq:em_motivate}
\min_{ \param \in \Param }~ \overline{\calL} ( \param ) \eqdef \Pen (\param) + \calL ( \param )~~\text{with}~~\calL ( \param ) = \frac{1}{n} \sum_{i=1}^n \calL_i( \param) \eqdef  \frac{1}{n} \sum_{i=1}^n \big\{ - \log g( y_i ; \param ) \big\}\eqs,
\eeq
where $\{y_i\}_{i=1}^n$ are the observations, $\Param$ is a convex subset of $\rset^d$ for the parameters,  $\Pen : \Param \rightarrow \rset$ is a smooth convex regularization function   and for each $\param \in \Param$, $g(y;\param)$ is the (incomplete) likelihood of each individual  observation. 
The objective function $ \overline{\calL} ( \param )$ is possibly \emph{non-convex} and is assumed to be lower bounded $ \overline{\calL} ( \param ) > - \infty$ for all $\param \in \Param$.
In the latent variable model,  $g(y_i ; \param)$, is the marginal of the
complete data likelihood defined as $f(z_i,y_i; \param)$, i.e. $g(y_i; \param) = \int_{\Zset} f (z_i,y_i;\param) \mu(\rmd z_i)$, where $\{ z_i \}_{i=1}^n$ are the (unobserved)
latent variables. We consider the setting where the complete data likelihood  belongs to the curved exponential family, \ie
\beq \label{eq:exp}
f(z_i,y_i; \param) = h  (z_i,y_i) \exp \big( \pscal{S(z_i,y_i)}{\phi(\param)} - \psi(\param) \big)\eqs,
\eeq
where $\psi(\param)$, $h(z_i,y_i)$ are scalar functions, $\phi(\param) \in \rset^k$ is a vector function, and $S(z_i,y_i) \in \rset^k$ is the complete data sufficient statistics.
Latent variable models are widely used in machine learning and statistics; examples include mixture models for density estimation, clustering document, and topic modelling; see \citep{mclachlan2007algorithm} and the references therein.

The basic "batch" EM (bEM) method iteratively computes a sequence of estimates $\{ \param^k, k \in \nset\}$ with an initial parameter $\param^0$. Each iteration of bEM is composed of two steps. In the {\sf E-step}, a surrogate function is computed as $\param \mapsto Q(\param,\param^{k-1}) = \sum_{i=1}^n Q_i(\param,\param^{k-1})$ where $Q_i(\param,\param') \eqdef - \int_{\Zset} \log f(z_i,y_i;\theta) p(z_i|y_i;\param') \mu(\rmd z_i)$ such that $p(z_i|y_i;\param) \eqdef f(z_i,y_i;\param)/ g(y_i,\param)$ is the conditional probability density of the latent variables $z_i$ given the observations $y_i$. When $f(z_i,y_i;\param)$ follows the curved exponential family model, the {\sf E-step} amounts to computing the conditional expectation of the complete data sufficient statistics, 
\begin{equation}
\label{eq:definition-overline-bss}
\overline{\bss}(\param)= \frac{1}{n} \sum_{i=1}^n \overline{\bss}_i(\param) \quad  \text{where}  \quad \overline{\bss}_i(\param)= \int_{\Zset} S(z_i,y_i) p(z_i|y_i;\param) \mu(\rmd z_i) \,.
\end{equation}
In the {\sf M-step}, the surrogate function is minimized producing a new fit of the parameter $\param^{k} = \argmax_{\param \in \Param} Q(\param,\param^{k-1})$.
The EM method has several appealing features -- it is monotone where the likelihood do not decrease at each iteration, invariant with respect to the parameterization,  numerically stable when the optimization set is well defined, etc.
The EM method  has been the subject of considerable interest since its formalization in \citep{dempster1977Maximum}.

With the sheer size of data sets today, the bEM method is not applicable as the {\sf E-step} \eqref{eq:definition-overline-bss} involves a full pass over the dataset of $n$ observations.
Several approaches based on stochastic optimization have been proposed to address this problem. 
\citet{neal1998view} proposed (but not analyzed) an incremental version of EM, referred to as the \IEM\ method. 
\citet{cappe2009line} developed the online EM (\SEM) method which uses a stochastic approximation procedure to track the sufficient statistics defined in \eqref{eq:definition-overline-bss}.
Recently, \citet{chen2018stochastic} proposed a variance reduced sEM (\SEMVR) method which is inspired by the SVRG algorithm popular in stochastic convex optimization \citep{johnson:zhang:2013}.
The applications of the above stochastic EM methods are numerous, especially with the \IEM\ and \SEM\ methods; e.g., \citep{ThiessonAccelerating2001}  for inference with missing data, \citep{ngChoice2003} for mixture models and unsupervised clustering, \citep{hinton2006fast} for inference of deep belief networks, \citep{hofmann2017probabilistic} for probabilistic latent semantic analysis, \citep{wainwright2008graphical,BleiVariational2017} for variational inference of graphical models and \citep{ablin2018algorithms} for Independent Component Analysis. 


This paper focuses on the theoretical aspect of stochastic EM methods
by establishing novel \emph{non-asymptotic} and \emph{global} convergence rates for them.  
Our contributions are as follows. 
\begin{itemize}
\item We offer two complementary views for the global convergence of EM methods -- one focuses on the parameter space, and one on the sufficient statistics space. On one hand, the EM method can be studied as an \emph{majorization-minimization} (MM) method in the parameter space.
On the other hand, the EM method can be studied as a \emph{scaled-gradient method} in the sufficient statistics space.
\item Based on the two views described, we derive non-asymptotic convergence rate for stochastic EM methods. First, we show that the \IEM\ method \citep{neal1998view} is a special instance of the MISO framework \citep{mairal2015incremental}, and takes ${\cal O}( n / \epsilon )$ iterations to find an $\epsilon$-stationary point to the ML estimation problem. Second, the \SEMVR\ method \citep{chen2018stochastic} is an instance of variance reduced stochastic scaled-gradient method, which takes ${\cal O}( n^{2/3} / \epsilon )$ iterations to find to an $\epsilon$-stationary point.
\item Lastly, we develop a Fast Incremental EM (\FIEM) method based on the SAGA algorithm \citep{defazio2014saga,reddi2016fast} for stochastic optimization. We show that the new method is again a scaled-gradient method with the same iteration complexity as \SEMVR. This new method offers trade-off between storage cost and computation complexity.
\end{itemize}
Importantly, our results capitalizes on the efficiency of stochastic EM methods applied on large datasets, and we support the above findings using numerical experiments. 

\paragraph{Prior Work} 
Since the empirical risk minimization problem \eqref{eq:em_motivate} is typically \emph{non-convex}, most prior work studying the convergence of EM methods considered either the \emph{asymptotic} and/or \emph{local} behaviors.
For the classical study, the global convergence to a stationary point (either a local minimum or a saddle point) of the bEM method  has been established by
\cite{wu1983convergence} (by making the arguments developed in \cite{dempster1977Maximum} rigorous). The global convergence is a direct consequence of the EM method to be monotone. It is also known that in the neighborhood of a stationary point and under regularity conditions, the local rate of convergence of the bEM is linear and is given by the amount of \emph{missing information} \cite[Chapters~3 and 4]{mclachlan2007algorithm}.

The convergence of the \IEM\ method was first tackled by
\citet{gunawardana2005convergence} exploiting the interpretation of the method as an alternating minimization procedure under the information geometric framework developed in \citep{csiszar:tusnady:1984}. Although the EM algorithm is presented as an alternation between the {\sf E-step} and {\sf M-step}, it is also possible to take a variational perspective on EM to view both steps as maximization steps.
Nevertheless, \citet{gunawardana2005convergence} assume that the latent variables take only a finite number of values and the order in which the observations are processed remains the same from one pass to the other.

More recently, the \emph{local but non-asymptotic convergence} of EM methods has been studied in several works.
These results typically require the initializations to be within a neighborhood of an isolated stationary point and the (negated) log-likelihood function to be strongly convex locally. Such conditions are either difficult to verify in general or have been derived only for specific models; see for example \citep{wang:gu:liu:2015,xu:hsu:maleki:2016,balakrishnan2017statistical} and the references therein.
The local convergence of \SEMVR\ method has been studied in \citep[Theorem~1]{chen2018stochastic} but under a pathwise global stability condition. The authors'  work \citep{karimi2019non} provided the first global non-asymptotic analysis of the online (stochastic) EM method \citep{cappe2009line}. In comparison, the present work analyzes the variance reduced variants of EM method. Lastly, it is worthwhile to mention that \citet{zhu2017high} analyzed a variance reduced \emph{gradient} EM method similar to \citep{balakrishnan2017statistical}.



\section{Stochastic Optimization Techniques for EM methods} \label{sec:sEM} 
Let $k \geq 0$ be the iteration number. The $k$th iteration of a generic stochastic EM method is composed of two sub-steps --- firstly,
\beq \label{eq:sestep}
\textsf{sE-step}:~\hat{\bss}^{(k+1)} = \hat{\bss}^{(k)} - \gamma_{k+1} \big( \hat{\bss}^{(k)} - \StocEstep^{(k+1)}  \big),
\eeq
which is a stochastic version of the {\sf E-step} in \eqref{eq:definition-overline-bss}. Note $\{ \gamma_{k} \}_{k=1}^\infty \in [0,1]$ is a sequence of step sizes, $\StocEstep^{(k+1)}$ is a proxy for $\overline{\bss}( \hat{\param}^{(k)} )$, and $\overline{\bss}$ is defined in \eqref{eq:definition-overline-bss}. Secondly, the {\sf M-step} is given by
\beq \label{eq:mstep}
\textsf{M-step:}~~\hat{\param}^{(k+1)} = \overline{\param}( \hat{\bm s}^{(k+1)} ) \eqdef \argmin_{ \param \in \Param } ~\big\{ \Pen( \param ) + \psi( \param) - \pscal{ \hat{\bm s}^{(k+1)} }{ \phi ( \param) } \big\},
\eeq
which depends on the sufficient statistics in the {\sf sE-step}. 
The stochastic EM methods differ in the way that $\StocEstep^{(k+1)}$ is computed. Existing methods employ stochastic approximation or variance reduction without the need to fully compute $\overline{\bss}( \hat{\param}^{(k)} )$.
To simplify notations, we define
\beq \label{eq:estep_upd}
\overline{\bss}_i^{(k)} \eqdef \overline{\bss}_i ( \hat{\param}^{(k)} )  = \int_{\Zset} S(z_{i},y_i) p(z_i|y_i;\hat{\param}^{(k)}) \mu(\rmd z_i)  \quad \text{and} \quad
\overline{\bss}^{(\ell)} \eqdef \overline{\bss}( \hat{\param}^{(\ell)} ) = \frac{1}{n} \sum_{i=1}^n \overline{\bss}_i^{(\ell)}.
\eeq
If $\StocEstep^{(k+1)} = \overline{\bss}^{(k)}$ and $\gamma_{k+1} = 1$,   \eqref{eq:sestep} reduces to the  {\sf E-step} in the classical bEM method.
To formally describe the stochastic EM methods, we let $i_k \in \inter$ be a random index drawn at iteration $k$ and $\tau_i^k = \max \{ k' : i_{k'} = i,~k' < k \}$ be the iteration index such that $i \in \inter$ is last drawn prior to iteration $k$. The proxy $\StocEstep^{(k+1)}$ in \eqref{eq:sestep} is drawn as:\vspace{-.2cm}
\begin{align}
&\emph{(\IEM\ \citep{neal1998view})} & \StocEstep^{(k+1)} &= \StocEstep^{(k)} + {\textstyle \frac{1}{n}}\big( \overline{\bss}_{i_k}^{(k)}  - \overline{\bss}_{i_k}^{(\tau_{i_k}^k)} \big) \label{eq:iem} \\
&\emph{(\SEM\ \citep{cappe2009line})} & \StocEstep^{(k+1)} &= \overline{\bss}_{i_k}^{(k)}  \label{eq:oem} \\
&\emph{(\SEMVR\ \citep{chen2018stochastic})} &\StocEstep^{(k+1)} &= \overline{\bss}^{(\ell(k))} +  \big( \overline{\bss}_{i_k}^{(k)}  - \overline{\bss}_{i_k}^{(\ell(k))}   \big) \label{eq:svrgem}
\end{align}
The stepsize is set to $\gamma_{k+1} = 1$ for the \IEM\ method; $\gamma_{k+1} = \gamma$ is  constant for the \SEMVR\ method.
In the original version of the \SEM\ method, the sequence of step
 $\gamma_{k+1}$ is a diminishing step size. Moreover, for \IEM\ we initialize with $\StocEstep^{(0)} = \overline{\bss}^{(0)}$; for \SEMVR, we set an epoch size of $m$ and define $\ell(k) \eqdef m \lfloor k/m \rfloor$ as the first iteration number in the epoch that iteration $k$ is in. \vspace{-.2cm}

\paragraph{\FIEM} Our analysis framework can handle a new, yet natural application of a popular variance reduction technique to the EM method. The new method, called \FIEM, is developed from the SAGA method \citep{defazio2014saga} in a similar vein as in  \SEMVR.

For iteration $k \geq 0$, the \FIEM\ method draws \emph{two} indices \emph{independently} and uniformly as $i_k, j_k \in \inter$. In addition to $\tau_i^k$ which was defined \wrt $i_k$, we define $t_j^k = \{ k' : j_{k'} = j , k' < k \}$ to be the iteration index where the sample $j \in \inter$ is last drawn as $j_k$ prior to iteration $k$. With the initialization $\overline{\StocEstep}^{(0)} = \overline{\bss}^{(0)}$, we use a slightly different update rule from SAGA inspired by \citep{reddi2016fast}, as described by the following recursive updates
\beq \label{eq:sagaem}
\StocEstep^{(k+1)} = \overline{\StocEstep}^{(k)} + \big( \overline{\bss}_{i_k}^{(k)}  - \overline{\bss}_{i_k}^{(t_{i_k}^k)} \big),~~
\overline{\StocEstep}^{(k+1)} = \overline{\StocEstep}^{(k)} + n^{-1}
\big( \overline{\bss}_{j_k}^{(k)}  - \overline{\bss}_{j_k}^{(t_{j_k}^k)} \big).
\eeq
\begin{wrapfigure}[18]{r}{.5\linewidth}\vspace{-0.2cm}
\begin{minipage}{\linewidth}
 \algsetup{indent=1em}
\begin{algorithm}[H]
\caption{Stochastic EM methods.}\label{alg:sem}
  \begin{algorithmic}[1]
  \STATE \textbf{Input:} initializations $\hat{\param}^{(0)} \leftarrow 0$, $\hat{\bss}^{(0)} \leftarrow \overline{\bss}^{(0)}$, $K_{\sf max}$ $\leftarrow$ max.~iteration number. \STATE Set the terminating iteration number, $K \in \{0,\dots,K_{\sf max}-1\}$, as a discrete r.v.~with:\vspace{-.1cm}
  \beq \label{eq:random}
   P( K = k ) = \frac{ \gamma_{k} }{\sum_{\ell=0}^{K_{\sf max}-1} \gamma_\ell}.\vspace{-.2cm}
  \eeq
  \FOR {$k=0,1,2,\dots, K$}
  \STATE Draw index $i_k \in \inter$ uniformly (and $j_k \in \inter$ for \FIEM).
   \STATE Compute the surrogate sufficient statistics $\StocEstep^{(k+1)}$ using \eqref{eq:oem} or \eqref{eq:iem} or \eqref{eq:svrgem} or \eqref{eq:sagaem}.
   \STATE Compute $\hat{\bss}^{(k+1)}$ via the {\sf sE-step} \eqref{eq:sestep}.
   \STATE Compute $\hat{\param}^{(k+1)}$ via the {\sf M-step} \eqref{eq:mstep}.
\ENDFOR
\STATE \textbf{Return}: $\hat{\param}^{(K)}$.
  \end{algorithmic}
\end{algorithm}\vspace{.1cm}
\end{minipage}\end{wrapfigure}
where we set a constant step size as $\gamma_{k+1} = \gamma$.

In the above, the update of $\StocEstep^{(k+1)}$ corresponds to an \emph{unbiased estimate} of $\overline{\bss}^{(k)}$, while the update for $\overline{\StocEstep}^{(k+1)}$ maintains the structure that $\overline{\StocEstep}^{(k)} = n^{-1} \sum_{i=1}^n \overline{\bss}_i^{(t_i^k)}$ for any $k \geq 0$.
The two updates of \eqref{eq:sagaem} are based on two different and independent indices $i_k,j_k$ that are randomly drawn from $\inter[n]$. This is used for our fast convergence analysis in Section~\ref{sec:main}.

We summarize the \IEM, \SEMVR, \SEM, \FIEM\ methods in Algorithm~\ref{alg:sem}. The random termination number \eqref{eq:random} is inspired by \citep{ghadimi2013stochastic} which enables one to show non-asymptotic convergence to stationary point for non-convex optimization. Due to their stochastic nature, the per-iteration complexity for all the stochastic EM methods are independent of $n$, unlike the bEM method. They are thus applicable to large datasets with $n \gg 1$.

\subsection{Example: Gaussian Mixture Model} \label{sec:gmm_main}
We discuss an example of learning a Gaussian Mixture Model (GMM) from a set of $n$ observations $\{ y_i \}_{i=1}^n$. We focus on a simplified setting where there are $M$ components of unit variance and unknown means, the GMM is parameterized by $\param = ( \{ \omega_m \}_{m=1}^{M-1} , \{ \mu_m \}_{m=1}^M ) \in \Theta = \Delta^M \times \rset^M$, where $\Delta^M \subseteq \rset^{M-1}$ is the reduced $M$-dimensional probability simplex [see \eqref{eq:const0_main}]. We use the penalization  
$\Pen(\param)= \frac{\delta}{2}\sum_{m=1}^M \mu_m^2 - \log \Dir(\bomega; M, \epsilon)$ where $\delta > 0$ and $\Dir(\cdot; M,\epsilon)$ is the $M$ dimensional symmetric Dirichlet distribution with concentration parameter $\epsilon > 0$.
Furthermore, 
we use $z_i \in \inter[M]$ as the latent label. 
The complete data log-likelihood is given by
\beq \label{eq:comp_like}  
\log f( z_i, y_i; \param) = 
\sum_{m=1}^{M} \indiacc{m=z_i} \left[ \log(\omega_m) - \mu_m^2/2 \right] + \sum_{m=1}^M \indiacc{m=z_i} \mu_m y_i + {\rm constant} ,
\eeq
where $\indiacc{m=z_i} = 1$ if $m=z_i$; otherwise $\indiacc{m=z_i} = 0$. 
The above can be rewritten in the same form as \eqref{eq:exp}, particularly with $S( y_i,z_i ) \equiv ( s_{i,1}^{(1)}, ..., s_{i,M-1}^{(1)} , s_{i,1}^{(2)}, ... ,s_{i,M-1}^{(2)}, s_i^{(3)} )$ and $\phi( \param ) \equiv ( \phi_1^{(1)}(\param), ..., \phi_{M-1}^{(1)}(\param), \phi_1^{(2)}(\param), ..., \phi_{M-1}^{(2)}(\param), \phi^{(3)}(\param) )$ such that
\beq \label{eq:gmm_exp0}
\begin{split}
& s_{i,m}^{(1)} = \indiacc{z_i = m}, \quad \phi_m^{(1)}(\param) = \{\log(\omega_m) -{\mu_m^2} / {2} \} - \{\log(1 - {\textstyle  \sum_{j=1}^{M-1}} \omega_j) -  {\mu_M^2} / {2} \},\\
& s_{i,m}^{(2)} =   \indiacc{z_i = m} y_i, \quad \phi^{(2)}_m(\param) =  {\mu_m}, \quad s_i^{(3)} = y_i, \quad \phi^{(3)}(\param) = \mu_M,
\end{split}
\eeq
and $\psi(\param) =   - \{\log(1 - \sum_{m=1}^{M-1} \omega_m) - {\mu_M^2} / {2 \sigma^2} \}$.
To evaluate the {\sf sE-step}, the conditional expectation required by \eqref{eq:estep_upd} can be computed in closed form, as they depend on $\EE_{\hat{\param}^{(k)}}[ \mathbbm{1}_{\{z_i=m\}} | y= y_{i} ]$ and $\EE_{\hat{\param}^{(k)}}[ y_i  \mathbbm{1}_{\{z_i=m\}} | y= y_{i} ]$.
Moreover, the {\sf M-step} \eqref{eq:mstep} solves a strongly convex problem and can be computed in closed form. Given a sufficient statistics ${\bm s} \equiv ( {\bm s}^{(1)}, {\bm s}^{(2)}, s^{(3)} )$, the solution to \eqref{eq:mstep} is:
\beq \label{eq:mstep_gmm0}
\overline{\param} ( {\bm s} )
= \left(
\begin{array}{c}
( 1+\epsilon M )^{-1} \big( {s}_1^{(1)} + \epsilon, \dots,  {s}_{M-1}^{(1)} + \epsilon \big)^\top \vspace{.2cm}\\
 \big( ({s}_1^{(1)} + \delta )^{-1} {s}_1^{(2)}  , \dots, ({s}_{M-1}^{(1)} + \delta )^{-1} {s}_{M-1}^{(2)}  \big)^\top \vspace{.2cm} \\
  \big(1 - \sum_{m=1}^{M-1}s_m^{(1)} +  \delta\big)^{-1} \big( s^{(3)} - \sum_{m=1}^{M-1} s_m^{(2)} \big)
\end{array}
\right).
\eeq
The next section presents the main results of this paper for the convergence of stochastic EM methods. We shall use the above example on GMM to illustrate the required assumptions.



\section{Global Convergence of Stochastic EM Methods} \label{sec:main}
We establish non-asymptotic rates for the \emph{global convergence} of the stochastic EM methods. We show that the \IEM\ method is an instance of the incremental MM method; while \SEMVR, \FIEM\ methods are instances of variance reduced \emph{stochastic} \emph{scaled gradient} methods. As we will see, the latter interpretation allows us to establish fast convergence rates of \SEMVR\ and \FIEM\ methods.
Detailed proofs for the theoretical results in this section are relegated to the appendix.

First, we list a few assumptions which will enable the convergence analysis performed later in this section. Define:
\beq \textstyle\label{eq:sset}
\Sset \eqdef  
\set{ \sum_{i=1}^n \alpha_i \bss_i }{ \bss_i \in {\rm conv}\set{S(z,y_i)}{z \in \Zset},~\alpha_i \in [-1,1],~i \in \inter },
\eeq
where ${\rm conv} \{ A \}$ denotes the closed convex hull of the set $A$. From \eqref{eq:sset}, we observe that the \IEM, \SEMVR, and \FIEM\ methods generate $\hat{\bm s}^{(k)} \in \Sset$ for any $k \geq 0$.  Consider:
\begin{assumption}\label{ass:compact}
The sets $\Zset, \Sset$ are compact. There exists constants $C_{\Sset}, C_{\Zset}$ such that:
\beq \textstyle \label{eq:compact}
C_{\Sset} \eqdef \max_{ \bss, \bss' \in \Sset } \| \bss - \bss' \| < \infty,~~~~C_{\Zset} \eqdef \max_{i \in \inter} \int_{\Zset} | S(z,y_i) | \mu( \rmd z ) < \infty.
\eeq
\end{assumption}
H\ref{ass:compact} depends on the latent data model used and can be satisfied by several practical models. For instance, the GMM in Section~\ref{sec:gmm_main} satisfies \eqref{eq:compact} as the sufficient statistics are composed of indicator functions and observations. Other examples can also be found in Section~\ref{sec:num}. 
Denote by $\jacob{\kappa}{\param}{\param'}$ the Jacobian of the function $\kappa: \param \mapsto \kappa(\param)$ at $\param' \in \Param$.
Consider:
\begin{assumption}\label{ass:regularity-phi-psi}
The function $\phi$ is smooth and bounded on ${\rm int}(\Param)$, \ie the interior of $\Param$. For all $\param,\param' \in {\rm int}(\Param)^2$,
$\|\jacob{\phi}{\param}{\param} - \jacob{\phi}{\param}{\param'} \| \leq \Lip{\phi} \| \param - \param' \|$ and $\| \jacob{\phi}{\param}{\param'} \| \leq C_{\phi}$.
\end{assumption}
\begin{assumption}\label{ass:expected}
The conditional distribution is smooth on ${\rm int}(\Param)$. For any $i \in \inter$, $z \in \Zset$, $\param, \param' \in {\rm int} (\Param)^2$, we have
$\big| p( z | y_i; \param ) - p( z | y_i; \param' ) \big| \leq  \Lip{p} \| \param - \param' \|$.
\end{assumption}
\begin{assumption} \label{ass:reg}
For any $\bm{s} \in \Sset$, the function $\param \mapsto L(s,\param) \eqdef \Pen( \param ) + \psi( \param) - \pscal{ \bss}{ \phi ( \param) }$ admits a unique global minimum $\mstep{\bss} \in {\rm int}(\Param)$.
In addition, $\jacob{\phi}{\param}{\overline{\param}(\bss )}$ is full rank and $\overline{\param}( \bss )$ is $\Lip{\theta}$-Lipschitz.
\end{assumption}
Under H\ref{ass:compact}, the assumptions
H\ref{ass:regularity-phi-psi} and H\ref{ass:expected} are standard for the curved exponential family distribution and the conditional probability distributions, respectively; H\ref{ass:reg} can be enforced by designing a strongly convex regularization function $\Pen( \param )$ tailor made for $\Param$. For instance, the penalization for GMM in Section~\ref{sec:gmm_main} ensures $\param^{(k)}$ is unique and lies in ${\rm int}( \Delta^M ) \times \rset^M$, which can further imply the second statement in H\ref{ass:reg}.
We remark that for H\ref{ass:expected}, it is possible to define the Lipschitz constant $\Lip{p}$ independently for each data $y_i$ to yield a refined characterization. We did not pursue such assumption to keep the notations simple.

Denote by $\hess{L}{\param}(\bss,\param)$ the Hessian w.r.t to $\param$ for a given value of $\bss$ of the function $\param \mapsto L(\bss,\param)= \Pen(\param) + \psi(\param) -\pscal{\bss}{\phi(\param)}$, and define
\beq\label{eq:Bss}
\operatorname{B}( \bss ) \eqdef\jacob{ \phi }{ \param }{ \mstep{\bss} } \Big( \hess{L}{\param}( {\bss},  \mstep{\bss} )  \Big)^{-1} \jacob{ \phi }{ \param }{ \mstep{\bss} }^\top.
\eeq
\begin{assumption}\label{ass:eigen}
It holds that $ \upsilon_{\max} \eqdef \sup_{\bss \in \Sset} \| \operatorname{B}( \bss ) \| < \infty$ and $0 < \upsilon_{\min}  \eqdef \inf_{\bss \in \Sset} \lambda_{\rm min} ( \operatorname{B}( \bss ) )$.
There exists a constant $\Lip{B}$ such that for all $\bss, \bss' \in \Sset^2$, we have $ \| \operatorname{B}( \bss ) - \operatorname{B}( \bss' )  \| \leq \Lip{B} \| {\bss} - {\bss}' \|$.
\end{assumption}
Again, H\ref{ass:eigen} is satisfied by practical models. For GMM in Section~\ref{sec:gmm_main}, it can be verified by deriving the closed form expression for $\operatorname{B}( \bss )$ and using H\ref{ass:compact}; also see the other example in Section~\ref{sec:num}.  The derivation is, however, technical and will be relegated to the supplementary material. 

Under H\ref{ass:compact}, we have $\| \hat{\bm s}^{(k)} \| < \infty$ since $\Sset$ is compact. On the other hand, under H\ref{ass:reg}, the EM methods generate $\hat{\param}^{(k)} \in {\rm int}( \Param )$ for any $k \geq 0$. Overall, these assumptions ensure that the EM methods operate in a `nice' set throughout the optimization process.


\subsection{Incremental EM method}
We show that the \IEM\ method is a special case of the MISO method \citep{mairal2015incremental} utilizing the majorization minimization (MM) technique. The latter is a common technique for handling non-convex optimization.
We begin by defining a surrogate function that majorizes ${\cal L}_i$:
\begin{equation}
\label{eq:definition-surrogate}
\surrogate_i (\param ; \param' ) \eqdef
-\int_{\Zset} \left\{ \log f(z_i,y_i;\param) - \log p(z_i|y_i;\param') \right\} p(z_i|y_i;\param') \mu(\rmd z_i) \eqsp.
\end{equation}
The second term inside the bracket is a constant that does not depend on the first argument $\param$. Since $f(z_i,y_i;\param)= p(z_i|y_i;\param) g(y_i;\param)$, for all $\param' \in \Param$, we get $\surrogate_i(\param'; \param')= - \log g(y_i;\param')= \calL_i(\param')$. For all $\param,\param' \in \Param$, applying the Jensen inequality shows
\beq
\surrogate_i(\param,\param') - \calL_i(\param)= \int \log \frac{p(z_i|y_i;\param')}{p(z_i|y_i;\param)} p(z_i|y_i;\param') \mu(\rmd z_i) \geq 0
\eeq
which is the Kullback-Leibler divergence between the conditional distribution of the latent data $p(\cdot|y_i;\param)$ and $p(\cdot|y_i;\param')$.
Hence, for all $i \in \inter$, $\surrogate_i (\param ; {\param}' )$ is a majorizing surrogate to $\calL_i( \param)$, \ie it satisfies for all $\param,\param' \in \Param$,
$\surrogate_i (\param ; \param' )  \geq {\cal L}_i( \param)$  with equality  when $\param = \param'$. For the special case of curved exponential family distribution, the {\sf M-step} of the \IEM\ method is expressed as
\beq \label{eq:mstep_iem}
\begin{split}
\textstyle
\hat{\param}^{(k+1)} & \textstyle \in
 \argmin_{ \param \in \Param } ~ \big\{ \Pen( \param )+ n^{-1} \sum_{i=1}^n \surrogate_i ( \param ; \hat{\param}^{(\tau_i^{(k+1)})} ) \big\}\\
 & \textstyle =  \argmin_{ \param \in \Param } ~ \Big\{ \Pen( \param )+ \psi(\param) - \pscal{n^{-1} \sum_{i=1}^n \overline{\bss}_i^{( \tau_i^{k+1} )}}{\phi(\param)} ) \Big\}.
 \end{split}
\eeq
The \IEM\ method can be interpreted through the MM technique --- in the {\sf M-step},
$\hat\param^{(k+1)}$ minimizes an upper bound of $\overline\calL( \param)$, while the {\sf sE-step} updates the surrogate function in \eqref{eq:mstep_iem} which tightens the upper bound.
Importantly, the error between the surrogate function and ${\cal L}_i$ is a smooth function:
\begin{Lemma}\label{lem:lips_theta}
Assume H\ref{ass:compact}, H\ref{ass:regularity-phi-psi}, H\ref{ass:expected}, H\ref{ass:reg}. Let $e_i ( \param; \param' ) \eqdef Q_i( \param; \param' ) - {\cal L}_i( \param )$. For any $\param, \bar{\param}, \param' \in \Param^3$, we have
$\| \grd e_i( \param; \param' ) - \grd e_i( \bar{\param} ; \param' ) \| \leq \Lip{e} \| \param - \bar\param \| $, where $\Lip{e} \eqdef C_{\phi} C_{\Zset} \Lip{p} +  C_{\Sset} \Lip{\phi}$.
\end{Lemma}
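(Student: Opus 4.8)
The plan is to first derive a clean closed-form expression for $\grd e_i(\param;\param')$, where the gradient is taken with respect to the first argument $\param$, and then to bound its variation in $\param$ by a product-rule argument that distributes the Lipschitz constants supplied by H\ref{ass:compact}--H\ref{ass:expected}.

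First I would exploit the curved exponential family structure \eqref{eq:exp}. Writing $\log f(z_i,y_i;\param) = \log h(z_i,y_i) + \pscal{S(z_i,y_i)}{\phi(\param)} - \psi(\param)$ and integrating against $p(z_i|y_i;\param')$, the $\log h$ term is constant in $\param$ while the integral of $S(z_i,y_i)$ produces $\overline{\bss}_i(\param')$ as in \eqref{eq:definition-overline-bss}, so that $Q_i(\param;\param') = c(\param') - \pscal{\overline{\bss}_i(\param')}{\phi(\param)} + \psi(\param)$ for some $\param$-independent $c(\param')$. Differentiating in $\param$ (interchanging derivative and integral, legitimate under the smoothness in H\ref{ass:regularity-phi-psi}--H\ref{ass:expected}) gives $\grd_\param Q_i(\param;\param') = -\jacob{\phi}{\param}{\param}^\top \overline{\bss}_i(\param') + \grd\psi(\param)$. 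For the incomplete-data term I would invoke Fisher's identity: with $\calL_i(\param) = Q_i(\param;\param') - H_i(\param;\param')$, where $H_i(\param;\param') \eqdef -\int_{\Zset} \log p(z_i|y_i;\param)\,p(z_i|y_i;\param')\,\mu(\rmd z_i)$, the score identity $\grd_\param H_i(\param;\param')\big|_{\param'=\param} = -\grd_\param \int_{\Zset} p(z_i|y_i;\param)\mu(\rmd z_i) = 0$ yields $\grd\calL_i(\param) = -\jacob{\phi}{\param}{\param}^\top \overline{\bss}_i(\param) + \grd\psi(\param)$. Subtracting, the $\grd\psi(\param)$ terms cancel and I obtain the key identity
\[
\grd e_i(\param;\param') = \jacob{\phi}{\param}{\param}^\top\big(\overline{\bss}_i(\param) - \overline{\bss}_i(\param')\big).
\]

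Next I would establish that $\overline{\bss}_i$ is Lipschitz: from $\overline{\bss}_i(\param) - \overline{\bss}_i(\bar{\param}) = \int_{\Zset} S(z_i,y_i)\big(p(z_i|y_i;\param) - p(z_i|y_i;\bar{\param})\big)\mu(\rmd z_i)$, bounding with H\ref{ass:expected} and the integrability constant $C_{\Zset}$ of H\ref{ass:compact} gives $\|\overline{\bss}_i(\param) - \overline{\bss}_i(\bar{\param})\| \leq C_{\Zset}\Lip{p}\|\param - \bar{\param}\|$. Writing $a(\param) \eqdef \overline{\bss}_i(\param) - \overline{\bss}_i(\param')$ and using the splitting
\[
\jacob{\phi}{\param}{\param}^\top a(\param) - \jacob{\phi}{\param}{\bar{\param}}^\top a(\bar{\param}) = \jacob{\phi}{\param}{\param}^\top\big(a(\param)-a(\bar{\param})\big) + \big(\jacob{\phi}{\param}{\param} - \jacob{\phi}{\param}{\bar{\param}}\big)^\top a(\bar{\param}),
\]
the first term is bounded by $C_{\phi}\cdot C_{\Zset}\Lip{p}\|\param-\bar{\param}\|$ via $\|\jacob{\phi}{\param}{\param}\| \leq C_{\phi}$ (H\ref{ass:regularity-phi-psi}) and the Lipschitz bound just derived, while the second is bounded by $\Lip{\phi}\|\param-\bar{\param}\|\cdot C_{\Sset}$ via the Lipschitz Jacobian of H\ref{ass:regularity-phi-psi} together with $\|a(\bar{\param})\| = \|\overline{\bss}_i(\bar{\param}) - \overline{\bss}_i(\param')\| \leq C_{\Sset}$, since $\overline{\bss}_i(\bar{\param}),\overline{\bss}_i(\param')\in\Sset$ and $C_{\Sset}$ is the diameter of $\Sset$ by H\ref{ass:compact}. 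Summing gives exactly $\Lip{e} = C_{\phi}C_{\Zset}\Lip{p} + C_{\Sset}\Lip{\phi}$.

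I expect the main obstacle to be the first step: obtaining the gradient identity $\grd e_i(\param;\param') = \jacob{\phi}{\param}{\param}^\top(\overline{\bss}_i(\param) - \overline{\bss}_i(\param'))$ rigorously, in particular justifying differentiation under the integral sign and the vanishing of the score term in Fisher's identity, both of which lean on H\ref{ass:regularity-phi-psi}--H\ref{ass:expected}. Once this identity is in place, the remaining estimates are a routine product-rule decomposition that simply collects the constants furnished by H\ref{ass:compact}--H\ref{ass:expected}.
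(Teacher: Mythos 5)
Your proposal is correct and follows essentially the same route as the paper's proof: derive the closed form $\grd e_i(\param;\param') = \jacob{\phi}{\param}{\param}^\top\big(\overline{\bss}_i(\param) - \overline{\bss}_i(\param')\big)$ via the exponential-family structure and Fisher's identity, establish the $C_{\Zset}\Lip{p}$-Lipschitz continuity of $\overline{\bss}_i$ from H3 and H1, and conclude with the same product-rule splitting that assigns $C_{\phi}C_{\Zset}\Lip{p}$ to the first term and $C_{\Sset}\Lip{\phi}$ to the second. The only discrepancy is a sign convention in the gradient identity (the paper writes $\overline{\bss}_i(\hat\param) - \overline{\bss}_i(\param)$, apparently a typo), which is immaterial since the final bound only involves norms of differences.
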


For \emph{non-convex} optimization such as \eqref{eq:em_motivate}, it has been shown \citep[Proposition~3.1]{mairal2015incremental} that the incremental MM method converges asymptotically to a stationary solution of a problem.
We strengthen their result by establishing a non-asymptotic rate, which is new to the literature.
\begin{Theorem} \label{thm:iem}
Consider the \IEM\ algorithm, \ie Algorithm~\ref{alg:sem} with \eqref{eq:iem}.  Assume H\ref{ass:compact}, H\ref{ass:regularity-phi-psi}, H\ref{ass:expected}, H\ref{ass:reg}.
For any $K_{\max} \geq 1$, it holds that
\beq \label{eq:iem_bdd}
\EE[ \| \grd \overline\calL( \hp{K} ) \|^2 ] \leq n \!~  \frac{2 \Lip{e}}{K_{\sf max}}  \!~ \EE \big[ \overline\calL ( \hp{0} ) - \overline\calL ( \hp{K_{\sf max}} ) \big],
\eeq
where $\Lip{e}$ is defined in Lemma~\ref{lem:lips_theta} and $K$ is a uniform random variable on  $\inter[{0,K_{\max}-1}]$ [cf.~\eqref{eq:random}] independent of the $\{i_k\}_{k=0}^{K_{\max}}$.
\end{Theorem}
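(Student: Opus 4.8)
The plan is to realize the \IEM\ recursion as a Majorization--Minimization scheme on an aggregate surrogate and to run the incremental-MM descent argument, adding a single extra ingredient: a smoothness-to-gradient inequality supplied by Lemma~\ref{lem:lips_theta}. Introduce the aggregate majorizing surrogate
\beq
\widehat{Q}^{(k)}(\param) \eqdef \Pen(\param) + \tfrac1n \sum_{i=1}^n \surrogate_i\big(\param; \hp{\tau_i^k}\big),
\eeq
so that, by \eqref{eq:mstep_iem}, $\hp{k} = \argmin_{\param\in\Param} \widehat{Q}^{(k)}(\param)$, and set the Lyapunov function $V^{(k)} \eqdef \widehat{Q}^{(k)}(\hp{k}) = \min_\param \widehat{Q}^{(k)}(\param)$. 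Because each $\surrogate_i(\cdot;\param')$ majorizes $\calL_i$ with equality on the diagonal, we have $\widehat{Q}^{(k)}(\param) \ge \overline\calL(\param)$ for all $\param$; since at initialization every $\tau_i^0$ points to $\hp{0}$, tangency gives $V^{(0)} \le \widehat{Q}^{(0)}(\hp{0}) = \overline\calL(\hp{0})$, while pointwise majorization gives $V^{(K_{\sf max})} \ge \overline\calL(\hp{K_{\sf max}})$. These two boundary facts will convert a telescoped Lyapunov gap into the right-hand side of \eqref{eq:iem_bdd}.

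First I would establish the descent step. Between iterations $k$ and $k+1$ only the $i_k$-th surrogate is refreshed, from $\hp{\tau_{i_k}^k}$ to $\hp{k}$, so evaluating $\widehat{Q}^{(k+1)}$ at $\hp{k}$, using the tangency $\surrogate_{i_k}(\hp{k};\hp{k}) = \calL_{i_k}(\hp{k})$, and then $V^{(k+1)} \le \widehat{Q}^{(k+1)}(\hp{k})$, yields
\beq
V^{(k)} - V^{(k+1)} \ge \tfrac1n\big( \surrogate_{i_k}(\hp{k};\hp{\tau_{i_k}^k}) - \calL_{i_k}(\hp{k}) \big) \ge 0.
\eeq
Taking the expectation over the uniform draw of $i_k$ (conditional on the past, with $\hp{k}$ and the $\tau_i^k$ held fixed) replaces $i_k$ by an average, giving $\EE_k[V^{(k)} - V^{(k+1)}] \ge \frac1{n^2}\sum_{i=1}^n ( \surrogate_i(\hp{k};\hp{\tau_i^k}) - \calL_i(\hp{k}) )$.

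Second I would turn the optimality of $\hp{k}$ into a gradient bound. Since $\hp{k} \in {\rm int}(\Param)$ by H\ref{ass:reg}, first-order optimality gives $\grd \widehat{Q}^{(k)}(\hp{k}) = 0$, and because $\widehat{Q}^{(k)} - \overline\calL = \frac1n\sum_i ( \surrogate_i(\cdot;\hp{\tau_i^k}) - \calL_i )$ this reads $\grd\overline\calL(\hp{k}) = -\frac1n\sum_{i=1}^n \grd(\surrogate_i(\cdot;\hp{\tau_i^k}) - \calL_i)(\hp{k})$. For each $i$ the gap $\param\mapsto \surrogate_i(\param;\param') - \calL_i(\param)$ is nonnegative, vanishes at its minimizer $\param'$, and has $\Lip{e}$-Lipschitz gradient (Lemma~\ref{lem:lips_theta}, noting $\surrogate_i$ and $Q_i$ differ by a constant in $\param$); the descent lemma for a nonnegative $\Lip{e}$-smooth function then gives $\| \grd(\surrogate_i(\cdot;\param') - \calL_i)(\param) \|^2 \le 2\Lip{e}\,( \surrogate_i(\param;\param') - \calL_i(\param) )$. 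Combining with Jensen's inequality $\| \frac1n\sum_i a_i \|^2 \le \frac1n\sum_i \| a_i \|^2$ produces
\beq
\| \grd\overline\calL(\hp{k}) \|^2 \le \frac{2\Lip{e}}{n}\sum_{i=1}^n \big( \surrogate_i(\hp{k};\hp{\tau_i^k}) - \calL_i(\hp{k}) \big).
\eeq

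Finally I would chain the two displays into the one-step bound $\| \grd\overline\calL(\hp{k}) \|^2 \le 2n\Lip{e}\,\EE_k[ V^{(k)} - V^{(k+1)} ]$, take total expectations, sum over $k = 0,\dots,K_{\sf max}-1$ so the Lyapunov terms telescope, and bound $\EE[V^{(0)} - V^{(K_{\sf max})}] \le \EE[\overline\calL(\hp 0) - \overline\calL(\hp{K_{\sf max}})]$ via the boundary facts above; dividing by $K_{\sf max}$ and using $\EE\|\grd\overline\calL(\hp K)\|^2 = \frac1{K_{\sf max}}\sum_{k} \EE\|\grd\overline\calL(\hp k)\|^2$ for uniform $K$ gives \eqref{eq:iem_bdd}. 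The main obstacle is the smoothness-to-gradient step: the inequality $\|\grd h\|^2 \le 2\Lip{e}(h - \inf h)$ requires the $\Lip{e}$-smoothness of the per-sample gap to hold along the auxiliary point $\param - \Lip{e}^{-1}\grd h(\param)$, so some care is needed either to extend Lemma~\ref{lem:lips_theta} beyond $\Param$ or to argue the relevant iterates remain where it applies; the telescoping and the identification of the boundary values $V^{(0)}$ and $V^{(K_{\sf max})}$ with $\overline\calL$ are the other points to verify but are routine.
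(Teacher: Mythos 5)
Your proposal is correct and takes essentially the same route as the paper's own proof: your $\widehat{Q}^{(k)}$ is exactly the paper's auxiliary function $\overline\calL^{(k)}$, and the descent inequality, conditional-expectation step, telescoping, and boundary identifications $V^{(0)} = \overline\calL(\hat\param^{(0)})$, $V^{(K_{\sf max})} \geq \overline\calL(\hat\param^{(K_{\sf max})})$ all coincide with the paper's argument. The only cosmetic difference is that the paper applies the smoothness-to-gradient inequality once to the aggregate gap $\overline\calL^{(k)} - \overline\calL$ (setting $\param = \hat\param^{(k)} + \Lip{e}^{-1}\grd\overline\calL(\hat\param^{(k)})$) instead of per-sample followed by Jensen, yielding the identical constant, and the interior/auxiliary-point caveat you flag is present but silently glossed over in the paper's proof as well.
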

We remark that under suitable assumptions, our analysis in Theorem~\ref{thm:iem}  also extends to several non-exponential family distribution models.

\subsection{Stochastic EM as Scaled Gradient Methods}
We interpret the \SEMVR\ and \FIEM methods as \emph{scaled gradient} methods on the sufficient statistics $\hat{\bss}$, tackling a \emph{non-convex} optimization problem.
The benefit of doing so is that we are able to demonstrate a faster convergence rate for these methods through motivating them as \emph{variance reduced} optimization methods. The latter is shown to be more effective when handling large datasets \citep{reddi2016fast,reddi2016stochastic,allen2016variance} than traditional stochastic/deterministic optimization methods. 
To set our stage, we consider the minimization problem:
\beq\label{eq:em_sspace}
\min_{ {\bss} \in \Sset }~  V ( {\bss} ) \eqdef \overline\calL( \op(\bss) ) = 
\Pen (  \op(\bss) ) + \frac{1}{n} \sum_{i=1}^n {\cal L}_i (  \op(\bss) )
,
\eeq
where $\overline{\param} ( {\bss})$ is the unique map defined in the {\sf M-step} \eqref{eq:mstep}. 
We first show that the stationary points of \eqref{eq:em_sspace} has a one-to-one correspondence with the stationary points of \eqref{eq:em_motivate}:
\begin{Lemma} \label{lem:global}
For any $\bss \in \Sset$, it holds that 
\beq \label{eq:grd_equiv}
\grd_{ \bss} V( {\bss} )  = \jacob{\overline{\param}}{\bss}{\bss}^\top \grd_\param \overline\calL( \overline\param( {\bss} ) ).
\eeq
Assume H\ref{ass:reg}. If $\bss^\star \in \set{ \bss \in \Sset }{\grd_{\bss} V(\bss )=0}$, then $\overline{\param}(\bss^\star) \in \set{\param \in \Param }{\grd_\param \overline{\calL}(\param)=0}$. Conversely, if $\param^* \in \set{\param \in \Param}{\grd_\param \overline{\calL}(\param)=0}$, then $\bss^*= \overline{\bss}(\param^*) \in \set{ \bss \in \Sset }{\grd_{\bss} V(\bss)=0}$.
\end{Lemma}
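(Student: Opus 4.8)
The plan is to establish the gradient identity \eqref{eq:grd_equiv} first, and then read off the correspondence of stationary points from it using the rank assumption in H\ref{ass:reg}. The identity is essentially the chain rule: since $V(\bss) = \overline{\calL}(\overline{\param}(\bss))$ is a composition of the map $\bss \mapsto \overline{\param}(\bss)$ (the {\sf M-step}) with $\overline{\calL}$, differentiating gives $\grd_\bss V(\bss) = \jacob{\overline{\param}}{\bss}{\bss}^\top \grd_\param \overline{\calL}(\overline{\param}(\bss))$, which is exactly \eqref{eq:grd_equiv}. So first I would verify that $\overline{\param}(\cdot)$ is differentiable on the relevant set and apply the chain rule to obtain \eqref{eq:grd_equiv}.

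The nontrivial ingredient is computing the Jacobian $\jacob{\overline{\param}}{\bss}{\bss}$ itself, since $\overline{\param}(\bss)$ is defined only implicitly as the minimizer in \eqref{eq:mstep}. Here I would apply the \emph{implicit function theorem} to the first-order optimality condition of the {\sf M-step}. By H\ref{ass:reg}, $\overline{\param}(\bss)$ is the unique global minimizer of $L(\bss,\param) = \Pen(\param) + \psi(\param) - \pscal{\bss}{\phi(\param)}$ lying in ${\rm int}(\Param)$, so it satisfies $\grd_\param L(\bss, \overline{\param}(\bss)) = 0$, i.e.
\beq
\grd_\param \Pen(\overline{\param}(\bss)) + \grd_\param \psi(\overline{\param}(\bss)) - \jacob{\phi}{\param}{\overline{\param}(\bss)}^\top \bss = 0.
\eeq
Differentiating this identity in $\bss$ and solving for $\jacob{\overline{\param}}{\bss}{\bss}$ yields
\beq
\jacob{\overline{\param}}{\bss}{\bss} = \Big( \hess{L}{\param}(\bss, \overline{\param}(\bss)) \Big)^{-1} \jacob{\phi}{\param}{\overline{\param}(\bss)}^\top,
\eeq
where the inverse exists because the Hessian $\hess{L}{\param}$ is nonsingular at the minimizer (guaranteed by H\ref{ass:reg}, which also underlies the well-definedness of $\operatorname{B}(\bss)$ in \eqref{eq:Bss}). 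Substituting this expression into the chain rule gives \eqref{eq:grd_equiv} in fully explicit form, and I expect this implicit-differentiation step to be the main technical obstacle, as it requires justifying invertibility of the Hessian and the regularity of $\overline{\param}$.

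Finally, for the two-way correspondence of stationary points I would exploit the structure of the Jacobian. For the forward direction, suppose $\grd_\bss V(\bss^\star) = 0$. Since $\jacob{\overline{\param}}{\bss}{\bss^\star} = (\hess{L}{\param})^{-1} \jacob{\phi}{\param}{\overline{\param}(\bss^\star)}^\top$ and $\jacob{\phi}{\param}{\overline{\param}(\bss^\star)}$ is full rank by H\ref{ass:reg}, the matrix $\jacob{\overline{\param}}{\bss}{\bss^\star}^\top$ has trivial kernel; hence \eqref{eq:grd_equiv} forces $\grd_\param \overline{\calL}(\overline{\param}(\bss^\star)) = 0$, so $\overline{\param}(\bss^\star)$ is stationary for \eqref{eq:em_motivate}. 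For the converse, given a stationary point $\param^\star$ of $\overline{\calL}$, I would set $\bss^\star = \overline{\bss}(\param^\star)$ and verify that the {\sf M-step} map recovers $\param^\star$, i.e. $\overline{\param}(\overline{\bss}(\param^\star)) = \param^\star$; this is the familiar fixed-point characterization of EM stationary points, following from the fact that at a stationary point of $\overline{\calL}$ the batch EM update leaves $\param^\star$ invariant. Plugging back into \eqref{eq:grd_equiv} then gives $\grd_\bss V(\bss^\star) = 0$, completing the correspondence.
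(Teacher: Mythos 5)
Your overall route coincides with the paper's: the identity \eqref{eq:grd_equiv} is obtained by the chain rule, and the forward implication follows because $\jacob{\overline{\param}}{\bss}{\bss}^\top$ has trivial kernel. Your extra step of computing $\jacob{\overline{\param}}{\bss}{\bss} = \big(\hess{L}{\param}(\bss,\overline{\param}(\bss))\big)^{-1}\jacob{\phi}{\param}{\overline{\param}(\bss)}^\top$ by implicit differentiation of the {\sf M-step} optimality condition is sound, and in fact more careful than the paper's one-line claim that the Jacobian is ``invertible''; the paper performs exactly this implicit-function computation, but only later, in the proof of Lemma~\ref{lem:semigrad}.

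The one genuine gap is in the converse direction. You reduce it to the claim $\overline{\param}(\overline{\bss}(\param^*)) = \param^*$ and justify it as the ``familiar fixed-point characterization of EM stationary points, following from the fact that at a stationary point of $\overline{\calL}$ the batch EM update leaves $\param^*$ invariant'' --- but that fact \emph{is} the claim; as stated, the justification is circular, and with the regularizer $\Pen$ present it is not purely classical folklore anyway. The missing ingredient is Fisher's identity: $\grd_\param \calL_i(\param) = \grd_\param \psi(\param) - \jacob{\phi}{\param}{\param}^\top \overline{\bss}_i(\param)$, hence $\grd_\param \overline{\calL}(\param) = \grd_\param \Pen(\param) + \grd_\param \psi(\param) - \jacob{\phi}{\param}{\param}^\top \overline{\bss}(\param)$. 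With $\bss^* = \overline{\bss}(\param^*)$, the hypothesis $\grd_\param \overline{\calL}(\param^*) = 0$ then reads $\grd_\param \Pen(\param^*) + \grd_\param \psi(\param^*) - \jacob{\phi}{\param}{\param^*}^\top \bss^* = 0$, i.e.\ $\param^*$ satisfies the first-order condition of $\param \mapsto L(\bss^*,\param)$, and H\ref{ass:reg} then identifies $\param^* = \overline{\param}(\bss^*)$; plugging into \eqref{eq:grd_equiv} gives $\grd_\bss V(\bss^*) = 0$. This is precisely the paper's argument, and supplying it is what your converse needs to be complete.
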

The next lemmas show that the update direction, $\hs{k} - \StocEstep^{(k+1)}$, in the {\sf sE-step} \eqref{eq:sestep} of \SEMVR\ and \FIEM\ methods is a \emph{scaled gradient} of $V(\bss)$. 
We first observe the following conditional expectation:
\beq \label{eq:expected_grad}
\EE \big[ \hs{k} - \StocEstep^{(k+1)}  | {\cal F}_k \big] = \hs{k} - \os^{(k)} 
= \hat{\bss}^{(k)} - \os( \op (\hs{k} ) ),
\eeq
where ${\cal F}_k$ is the $\sigma$-algebra generated by $\{ i_0, i_1,\dots, i_k \}$ (or $\{ i_0,j_0,\dots, i_k,j_k\}$ for \FIEM).

The difference vector ${\bss} - \overline{\bss}( \overline{\param} ({\bss}))$ and the gradient vector $\grd_{\bss} V ( {\bss} )$ are correlated, as we observe:
\begin{Lemma} \label{lem:semigrad}
Assume H\ref{ass:reg},H\ref{ass:eigen}. For all $\bss \in \Sset$,
\beq \label{eq:semigrad}
\upsilon_{\min}^{-1} \pscal{\grd V ( {\bss} ) }{ {\bss} - \os( \op ({\bss})) }
\geq \big\| {\bss} - \os( \op ({\bss})) \big\|^2 \geq \upsilon_{\max}^{-2} \| \grd V ( {\bss} ) \|^2,
\eeq
\end{Lemma}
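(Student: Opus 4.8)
The plan is to reduce the whole lemma to a single exact identity,
\beq
\grd V(\bss) = \operatorname{B}(\bss)\,\big( \bss - \os(\op(\bss)) \big),\qquad \bss\in\Sset,
\eeq
with $\operatorname{B}(\bss)$ as in \eqref{eq:Bss}. Granting this, \eqref{eq:semigrad} is immediate: by its definition \eqref{eq:Bss} the matrix $\operatorname{B}(\bss)$ is symmetric, and under H\ref{ass:eigen} it is positive definite with spectrum contained in $[\upsilon_{\min},\upsilon_{\max}]$. Writing $d\eqdef\bss-\os(\op(\bss))$ we then get $\pscal{\grd V(\bss)}{d}=d^\top\operatorname{B}(\bss)d\geq\upsilon_{\min}\|d\|^2$ for the left inequality and $\|\grd V(\bss)\|^2=\|\operatorname{B}(\bss)d\|^2\leq\upsilon_{\max}^2\|d\|^2$ for the right one. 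The entire work therefore lies in proving the displayed identity.

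To establish it I would start from Lemma~\ref{lem:global}, which gives $\grd V(\bss)=\jacob{\op}{\bss}{\bss}^\top\,\grd_\param\overline{\calL}(\op(\bss))$, and compute the two factors in turn. For the Jacobian of the {\sf M-step} map, I differentiate implicitly in $\bss$ the first-order stationarity condition
\beq
\grd_\param\Pen(\op(\bss))+\grd_\param\psi(\op(\bss))-\jacob{\phi}{\param}{\op(\bss)}^\top\bss=0,
\eeq
which holds because $\op(\bss)\in{\rm int}(\Param)$ is the unique minimizer by H\ref{ass:reg}. Since only the last term carries an explicit $\bss$, the implicit function theorem (applicable because the Hessian $\hess{L}{\param}(\bss,\op(\bss))$ is invertible, as guaranteed by H\ref{ass:eigen} together with the full-rank condition in H\ref{ass:reg}) yields both differentiability of $\op$ and the formula $\jacob{\op}{\bss}{\bss}=\big(\hess{L}{\param}(\bss,\op(\bss))\big)^{-1}\jacob{\phi}{\param}{\op(\bss)}^\top$.

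For the second factor I would invoke Fisher's identity. The error $e_i(\param;\param')=Q_i(\param;\param')-\calL_i(\param)$ is nonnegative and vanishes at $\param=\param'$, hence is minimized there in its first argument, so $\grd_\param e_i(\param';\param')=0$; combined with the exponential-family form $\grd_\param Q_i(\param;\param')=-\jacob{\phi}{\param}{\param}^\top\os_i(\param')+\grd_\param\psi(\param)$, this gives $\grd_\param\calL_i(\param')=-\jacob{\phi}{\param}{\param'}^\top\os_i(\param')+\grd_\param\psi(\param')$. Averaging over $i$ and adding $\grd_\param\Pen$ produces $\grd_\param\overline{\calL}(\param')=\grd_\param\Pen(\param')+\grd_\param\psi(\param')-\jacob{\phi}{\param}{\param'}^\top\os(\param')$; evaluating at $\param'=\op(\bss)$ and using the stationarity condition above to cancel $\grd_\param\Pen+\grd_\param\psi$ against $\jacob{\phi}{\param}{\op(\bss)}^\top\bss$ leaves $\grd_\param\overline{\calL}(\op(\bss))=\jacob{\phi}{\param}{\op(\bss)}^\top(\bss-\os(\op(\bss)))$. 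Plugging the two factors into Lemma~\ref{lem:global} and using the symmetry of the Hessian to write $\jacob{\op}{\bss}{\bss}^\top=\jacob{\phi}{\param}{\op(\bss)}\big(\hess{L}{\param}(\bss,\op(\bss))\big)^{-1}$ collapses the product to $\operatorname{B}(\bss)(\bss-\os(\op(\bss)))$, exactly \eqref{eq:Bss}.

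I expect the main obstacle to be purely the linear-algebra bookkeeping: keeping the transposes, the ordering of the $\jacob{\phi}{\param}{\op(\bss)}$ and inverse-Hessian factors, and the explicit-versus-implicit $\bss$-dependence in the implicit differentiation all consistent, so that the three copies of $\jacob{\phi}{\param}{\op(\bss)}$ assemble into the symmetric form $\operatorname{B}(\bss)$ rather than a transpose or reordering of it. Once that alignment is verified, the spectral bounds from H\ref{ass:eigen} finish the proof in one line.
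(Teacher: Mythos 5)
Your proposal is correct and follows essentially the same route as the paper's proof: both establish the key identity $\grd V(\bss) = \operatorname{B}(\bss)\big(\bss - \os(\op(\bss))\big)$ by combining the chain rule (Lemma~\ref{lem:global}), Fisher's identity for $\grd_\param \overline\calL$, and implicit differentiation of the {\sf M-step} stationarity condition to express $\jacob{\op}{\bss}{\bss}$ via the inverse Hessian, then conclude with the spectral bounds of H\ref{ass:eigen}. The only cosmetic differences are that you spell out the final spectral step and re-derive Fisher's identity from the surrogate minimization, both of which the paper leaves implicit.
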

Combined with \eqref{eq:expected_grad}, the above lemma shows that the update direction in the {\sf sE-step} \eqref{eq:sestep} of \SEMVR\ and \FIEM\ methods is a \emph{stochastic scaled gradient} where $\hat{\bss}^{(k)}$ is updated with a stochastic direction whose mean is correlated with $\grd V( \bss )$.

Furthermore, the expectation step's operator and the objective function in \eqref{eq:em_sspace} are smooth functions:
\begin{Lemma} \label{lem:smooth}
Assume H\ref{ass:compact}, H\ref{ass:expected}, H\ref{ass:reg}, H\ref{ass:eigen}.  
For all $\bss,\bss' \in \Sset$ and $i \in \inter$, we have
\beq \label{eq:smooth}
\| \overline{\bss}_i ( \overline{\param} ({\bss})) - \overline{\bss}_i ( \overline{\param} ({\bss}' )) \| \leq \Lip{{\bss}} \| {\bss} - {\bss}' \|,~~\| \grd  V ( {\bss} ) - \grd  V ( {\bss}' ) \| \leq \Lip{V} \| {\bss} - {\bss}' \|,
\eeq
where $\Lip{\bss} \eqdef C_{\Zset} \Lip{p} \Lip{\theta}$ and $\Lip{V}  \eqdef \upsilon_{\max} \big( 1 + \Lip{{\bss}} \big) + \Lip{B} C_{\Sset}$.
\end{Lemma}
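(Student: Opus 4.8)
The plan is to prove the two estimates in sequence, since the second relies on the first. Neither is a single-shot computation: the first is a routine pass through the assumptions, while the second hinges on an exact representation of $\grd V$ that must be established first.

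\textbf{First bound.} This is direct. Starting from the definition in \eqref{eq:estep_upd},
\[
\overline{\bss}_i(\overline{\param}(\bss)) - \overline{\bss}_i(\overline{\param}(\bss')) = \int_{\Zset} S(z, y_i)\, \big( p(z \mid y_i; \overline{\param}(\bss)) - p(z \mid y_i; \overline{\param}(\bss')) \big)\, \mu(\rmd z),
\]
I would move the norm inside the integral and bound the difference of conditional densities pointwise by $\Lip{p}\, \| \overline{\param}(\bss) - \overline{\param}(\bss') \|$ using H\ref{ass:expected}. The factor $\| \overline{\param}(\bss) - \overline{\param}(\bss') \|$ is controlled by $\Lip{\theta}\, \| \bss - \bss' \|$ through the Lipschitz continuity of the M-step map in H\ref{ass:reg}, and the remaining integral $\int_{\Zset} | S(z, y_i) | \mu(\rmd z) \leq C_{\Zset}$ by H\ref{ass:compact}. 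Multiplying the three constants yields exactly $\Lip{\bss} = C_{\Zset} \Lip{p} \Lip{\theta}$.

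\textbf{Key identity (the main obstacle).} The second bound rests on the exact representation
\[
\grd V(\bss) = \operatorname{B}(\bss)\, \big( \bss - \overline{\bss}(\overline{\param}(\bss)) \big),
\]
with $\operatorname{B}$ as in \eqref{eq:Bss}; this is the identity that underlies the correlation inequalities of Lemma~\ref{lem:semigrad}. I expect establishing it cleanly to be the delicate part, and I would assemble it from three ingredients. First, Lemma~\ref{lem:global} gives $\grd V(\bss) = \jacob{\overline{\param}}{\bss}{\bss}^\top \grd \overline{\calL}(\overline{\param}(\bss))$. Second, differentiating the first-order stationarity condition of the M-step \eqref{eq:mstep}, namely $\grd \Pen(\overline{\param}(\bss)) + \grd \psi(\overline{\param}(\bss)) = \jacob{\phi}{\param}{\overline{\param}(\bss)}^\top \bss$, via the implicit function theorem (legitimate since $\hess{L}{\param}$ is invertible and $\jacob{\phi}{\param}{\cdot}$ is full rank by H\ref{ass:reg}) yields $\jacob{\overline{\param}}{\bss}{\bss} = (\hess{L}{\param}(\bss, \overline{\param}(\bss)))^{-1} \jacob{\phi}{\param}{\overline{\param}(\bss)}^\top$. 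Third, Fisher's identity for the curved exponential family \eqref{eq:exp}, combined with the same stationarity condition, gives $\grd \overline{\calL}(\overline{\param}(\bss)) = \jacob{\phi}{\param}{\overline{\param}(\bss)}^\top (\bss - \overline{\bss}(\overline{\param}(\bss)))$. Substituting the second and third facts into the first and recognizing the definition \eqref{eq:Bss} produces the claimed identity.

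\textbf{Second bound.} With the identity in hand, I would write, abbreviating $d(\bss) \eqdef \bss - \overline{\bss}(\overline{\param}(\bss))$,
\[
\grd V(\bss) - \grd V(\bss') = \operatorname{B}(\bss)\Big[ \big(\bss - \overline{\bss}(\overline{\param}(\bss))\big) - \big(\bss' - \overline{\bss}(\overline{\param}(\bss'))\big) \Big] + \big[\operatorname{B}(\bss) - \operatorname{B}(\bss')\big]\big(\bss' - \overline{\bss}(\overline{\param}(\bss'))\big).
\]
For the first term, H\ref{ass:eigen} gives $\| \operatorname{B}(\bss) \| \leq \upsilon_{\max}$, and the triangle inequality together with the already-proved first bound (averaged over $i$ via $\overline{\bss} = n^{-1}\sum_i \overline{\bss}_i$) bounds the bracket by $(1 + \Lip{\bss})\, \| \bss - \bss' \|$, contributing $\upsilon_{\max}(1 + \Lip{\bss})$. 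For the second term, H\ref{ass:eigen} supplies $\| \operatorname{B}(\bss) - \operatorname{B}(\bss') \| \leq \Lip{B}\, \| \bss - \bss' \|$, while $\| \bss' - \overline{\bss}(\overline{\param}(\bss')) \| \leq C_{\Sset}$ follows because both $\bss'$ and $\overline{\bss}(\overline{\param}(\bss'))$ lie in $\Sset$ — the latter since each $\overline{\bss}_i$ is an average of points in ${\rm conv}\{S(z, y_i)\}$ — so their gap is at most the diameter $C_{\Sset}$ of H\ref{ass:compact}. This contributes $\Lip{B} C_{\Sset}$, and summing the two terms gives $\Lip{V} = \upsilon_{\max}(1 + \Lip{\bss}) + \Lip{B} C_{\Sset}$, as claimed.
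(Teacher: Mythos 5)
Your proposal is correct and follows essentially the same route as the paper: the first bound via H\ref{ass:expected}, H\ref{ass:reg}, H\ref{ass:compact}, and the second via the representation $\grd V(\bss) = \operatorname{B}(\bss)\big(\bss - \overline{\bss}(\overline{\param}(\bss))\big)$ together with the same two-term decomposition and the bounds $\|\operatorname{B}(\bss)\| \leq \upsilon_{\max}$, $\|\operatorname{B}(\bss)-\operatorname{B}(\bss')\| \leq \Lip{B}\|\bss-\bss'\|$, $\|\bss' - \overline{\bss}(\overline{\param}(\bss'))\| \leq C_{\Sset}$. The only cosmetic difference is that you re-derive the key identity inline (via Lemma~\ref{lem:global}, implicit differentiation of the M-step stationarity condition, and Fisher's identity), whereas the paper establishes it once in the proof of Lemma~\ref{lem:semigrad} and simply reuses it here.
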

The following theorem establishes the (fast) non-asymptotic convergence rates of \SEMVR\ and \FIEM\ methods, which are similar to \citep{reddi2016fast,reddi2016stochastic,allen2016variance}:
\begin{Theorem} \label{thm:svrg}
Assume H\ref{ass:compact}, H\ref{ass:expected}, H\ref{ass:reg}, H\ref{ass:eigen}. Denote $\overline{L}_{\sf v} = \max\{ \Lip{V}, \Lip{\bss} \}$ with the constants in Lemma~\ref{lem:smooth}.
\begin{itemize}[leftmargin=5.5mm]
\item Consider the \SEMVR\ method, \ie Algorithm~\ref{alg:sem} with  \eqref{eq:svrgem}. There exists a universal constant $\mu \in (0,1)$ (independent of $n$) such that if we set the step size as $\gamma = \frac{\mu \upsilon_{\min} }{ \overline{L}_{\sf v} n^{2/3}}$ and the epoch length as $m = \frac{n }{2 \mu^2 \upsilon_{\min}^2 +\mu }$, then for any $K_{\sf max} \geq 1$ that is a multiple of $m$, it holds that
\beq \label{eq:svrgem_bdd}
\EE[ \| \grd V( \hs{K} ) \|^2 ] \leq  n^{\frac{2}{3}} \!~ \frac{2 \overline{L}_{\sf v} }{\mu K_{\sf max}} \frac{ \upsilon_{\max}^2 }{ \upsilon_{\min}^2 } \!~ \EE[ V( \hs{0} ) - V( \hs{K_{\sf max}}) ] .
\eeq
\item Consider the \FIEM\ method, \ie Algorithm~\ref{alg:sem} with  \eqref{eq:sagaem}. Set $\gamma = \frac{\upsilon_{\min}}{\alpha \overline{L}_{\sf v} n^{2/3}}$ such that $\alpha = \max\{ 6, 1 + 4 \upsilon_{\min} \}$. For any $K_{\sf max} \geq 1$, it holds that
\beq \label{eq:sagaem_bdd}
\EE[ \| \grd V( \hs{K} ) \|^2 ] \leq n^{\frac{2}{3}} \!~ \frac{ \alpha^2 \overline{L}_{\sf v} }{K_{\sf max}}  \frac{ \upsilon_{\max}^2 }{ \upsilon_{\min}^2 }\EE \big[ V( \hs{0} ) - V( \hs{K_{\sf max}} ) \big] .
\eeq
\end{itemize}
We recall that $K$ in the above is a uniform and independent r.v.~chosen from $\inter[K_{\sf max}-1]$ [cf.~\eqref{eq:random}].
\end{Theorem}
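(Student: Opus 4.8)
The plan is to treat both methods uniformly as stochastic scaled-gradient descent on $V$ and run a Lyapunov-function descent argument in the spirit of the non-convex SVRG/SAGA analyses of \citep{reddi2016fast}. Write $\mathbf{g}^{(k)} \eqdef \hs{k} - \StocEstep^{(k+1)}$ for the stochastic update direction, so that the {\sf sE-step} reads $\hs{k+1} = \hs{k} - \gamma \mathbf{g}^{(k)}$. First I would invoke the $\Lip{V}$-smoothness of $V$ from Lemma~\ref{lem:smooth} to get the one-step inequality
\[
V(\hs{k+1}) \leq V(\hs{k}) - \gamma \pscal{\grd V(\hs{k})}{\mathbf{g}^{(k)}} + \tfrac{\Lip{V}\gamma^2}{2}\|\mathbf{g}^{(k)}\|^2 .
\]
Taking the conditional expectation given ${\cal F}_k$ and using \eqref{eq:expected_grad}, the cross term collapses to $\pscal{\grd V(\hs{k})}{\hs{k} - \os(\op(\hs{k}))}$, which by Lemma~\ref{lem:semigrad} is at least $\upsilon_{\min}\|\hs{k} - \os(\op(\hs{k}))\|^2 \geq \upsilon_{\min}\upsilon_{\max}^{-2}\|\grd V(\hs{k})\|^2$. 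This already yields a descent in expectation, up to the quadratic term that must be tamed by variance reduction.

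The core of the argument is bounding $\EE[\|\mathbf{g}^{(k)}\|^2 \mid {\cal F}_k]$. I would decompose it into the squared signal $\|\hs{k} - \os(\op(\hs{k}))\|^2$ plus the variance $\EE[\|\StocEstep^{(k+1)} - \os(\op(\hs{k}))\|^2 \mid {\cal F}_k]$ of the proxy. For \SEMVR\ the variance is that of $\overline{\bss}_{i_k}^{(k)} - \overline{\bss}_{i_k}^{(\ell(k))}$, which I bound by its second moment and then by $\Lip{\bss}^2\|\hs{k} - \hs{\ell(k)}\|^2$ using the operator-smoothness $\|\overline{\bss}_i(\op(\bss)) - \overline{\bss}_i(\op(\bss'))\| \leq \Lip{\bss}\|\bss - \bss'\|$ of Lemma~\ref{lem:smooth}. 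For \FIEM\ the two-index construction keeps $\StocEstep^{(k+1)}$ unbiased for $\os(\op(\hs{k}))$ while decoupling the memory table $\overline{\StocEstep}^{(k)}$, and the variance is controlled by $\Lip{\bss}^2$ times an average memory distance $\tfrac1n\sum_{i}\|\hs{k} - \hs{t_i^k}\|^2$.

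Next I would close the recursion with a Lyapunov function. For \SEMVR\ I take $R_k = \EE[V(\hs{k}) + c_k\|\hs{k} - \hs{\ell(k)}\|^2]$ with epoch-indexed coefficients $c_k$ obeying a backward recursion within each epoch and resetting to zero at epoch boundaries; for \FIEM\ I take $R_k = \EE[V(\hs{k}) + c\,\tfrac1n\sum_{i}\|\hs{k} - \hs{t_i^k}\|^2]$ and use that, in expectation, each memory distance contracts at rate $1/n$ per step while growing only by the iterate movement $\gamma^2\|\mathbf{g}^{(k)}\|^2$. In both cases the aim is a one-step bound $R_{k+1} \leq R_k - \Gamma_k\|\grd V(\hs{k})\|^2$ with $\Gamma_k > 0$ of order $\gamma\upsilon_{\min}\upsilon_{\max}^{-2}$ after the variance penalty is absorbed. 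Telescoping over $k = 0,\dots,K_{\sf max}-1$, using $\hs{\ell(0)} = \hs{0}$ (resp.\ $\hs{t_i^0} = \hs{0}$) so the penalty vanishes at the endpoints, and averaging with the uniform stopping rule for $K$ then yields the stated bounds; the choices $\gamma \sim \upsilon_{\min}/(\overline{L}_{\sf v} n^{2/3})$ and $m \sim n$ (resp.\ the constant $\alpha$) are exactly what keep $\Gamma_k$ positive and of the right order.

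The main obstacle I anticipate is the Lyapunov bookkeeping that controls the variance proxy. For \SEMVR\ this means solving the coefficient recursion for $c_k$ and choosing $(\gamma,m)$ so that the accumulated $\Lip{\bss}^2\gamma^2$ variance contributions over an epoch of length $m \sim n$ do not overwhelm the $\Theta(\gamma)$ descent --- this is precisely where the $n^{2/3}$ step size emerges. For \FIEM\ the delicate point is that the random ages $t_i^k$ couple successive iterates, so I must argue that the expected memory distances obey a stable linear recursion despite the lag; the independence of $j_k$ from $i_k$ is what lets the unbiasedness of $\StocEstep^{(k+1)}$ survive this coupling and keeps the variance term linear in the memory distances. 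The scaled-gradient structure (via $\operatorname{B}(\bss)$) is handled throughout by the sandwiching inequalities of Lemma~\ref{lem:semigrad}, which replace the exact gradient identity of ordinary SVRG/SAGA and introduce the $\upsilon_{\max}^2/\upsilon_{\min}^2$ factor in the final rates.
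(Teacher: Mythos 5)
Your proposal is correct and follows essentially the same route as the paper's proof: the $\Lip{V}$-smoothness descent step combined with Lemma~\ref{lem:semigrad}, the variance bounds $\Lip{\bss}^2\|\hs{k}-\hs{\ell(k)}\|^2$ (resp.\ $\Lip{\bss}^2\,\tfrac1n\sum_i\|\hs{k}-\hs{t_i^k}\|^2$) for the proxies, the epoch-indexed Lyapunov coefficients with backward recursion for \SEMVR, and the $1/n$-contraction of the memory distances (enabled by the independence of $j_k$) for \FIEM\ are exactly the ingredients of the paper's argument. Your two minor deviations --- using the exact bias--variance decomposition rather than the paper's factor-2 Young bound, and closing the \FIEM\ recursion with a constant-coefficient Lyapunov function instead of the paper's explicit unrolling and summation of the $\Delta^{(k)}$ recursion --- are equivalent reformulations that change nothing essential.
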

In the supplementary materials, we also provide a local convergence analysis for \FIEM\ method which shows that the latter can achieve linear rate of convergence \emph{locally} under a similar set of assumptions used in \citep{chen2018stochastic} for \SEMVR\ method.

\paragraph{Comparing \IEM, \SEMVR, and \FIEM}
Note that by \eqref{eq:grd_equiv} in Lemma~\ref{lem:global}, if $\| \grd_{\bss} V( \hat{\bss} ) \|^2 \leq \epsilon$, then $\| \grd_{\param} \overline{\cal L}( \op( \hat\bss ) ) \|^2 = {\cal O}( \epsilon )$, and vice versa, where the hidden constant is independent of $n$. In other words, the rates for \IEM, \SEMVR, \FIEM\ methods in Theorem~\ref{thm:iem} and \ref{thm:svrg} are comparable. 

Importantly, the theorems show an intriguing comparison -- to attain an $\epsilon$-stationary point with $\| \grd_{\param} \overline{\cal L}( \op( \hat\bss ) ) \|^2 \leq \epsilon $ or $\| \grd_{\bss} V( \hat{\bss} ) \|^2 \leq \epsilon$, the \IEM\ method requires ${\cal O}( n / \epsilon )$ iterations (in expectation) while the \SEMVR, \FIEM\ methods require only ${\cal O}( n^{\frac{2}{3}} / \epsilon )$ iterations (in expectation). This comparison can be surprising since the \IEM\ method is a monotone method as it guarantees decrease in the objective value; while the \SEMVR, \FIEM\ methods are non-monotone. Nevertheless, it aligns with the recent analysis on stochastic variance reduction methods on non-convex problems. In the next section, we confirm the theory by observing a similar behavior numerically.

%


\section{Numerical Examples}
\label{sec:num}
\subsection{Gaussian Mixture Models}
As described in Section~\ref{sec:gmm_main}, our goal is to fit a GMM model to a set of $n$ observations $\{y_i\}_{i=1}^n$ whose distribution is modeled as a Gaussian mixture of $M$ components, each with a unit variance. Let $z_i \in \inter[M]$ be the latent labels, the complete log-likelihood is given in \eqref{eq:comp_like},
where $\param \eqdef (\bomega, \bmu)$ with $\bomega= \{\omega_{m}\}_{m=1}^{M-1}$ are the mixing weights with the convention $\omega_M= 1 - \sum_{m=1}^{M-1} \omega_m$  and $\bmu= \{\mu_m \}_{m =1}^M$ are the means. 
The constraint set on $\param$ is given by
\beq \textstyle \label{eq:const0_main}
\Param = \{ \omega_m,~m=1,...,M-1 : \omega_m \geq 0,~\sum_{m=1}^{M-1} \omega_m \leq 1\} \times \{ \mu_m \in \rset ,~m=1,...,M \}.
\eeq
In the following experiments of synthetic data, we generate samples from a GMM model with $M=2$ components with two mixtures with means $\mu_1 = - \mu_2 = 0.5$, see Appendix~\ref{app:gmm} for details of the implementation and satisfaction of model assumptions for GMM inference.
We aim at verifying the theoretical results in Theorem~\ref{thm:iem} and \ref{thm:svrg} of the dependence on sample size $n$.

\paragraph{Fixed sample size} We use $n = 10^4$ synthetic samples and run the bEM method until convergence (to double precision) to obtain the ML estimate $\mu^\star$. We compare the bEM, \SEM, \IEM, \SEMVR\ and \FIEM\ methods in terms of their precision measured by $| \mu - \mu^\star |^2$. We set the stepsize of the \SEM\ as $\gamma_k = 3/(k+10)$, and the stepsizes of the \SEMVR\ and the \FIEM\ to a constant stepsize proportional to $1/n^{2/3}$ and equal to $\gamma = 0.003$. The left plot of Figure \ref{fig:gmmplots} shows the convergence of the precision $|\mu - \mu^*|^2$ for the different methods against the epoch(s) elapsed (one epoch equals $n$ iterations). We observe that the \SEMVR\ and \FIEM\ methods outperform the other methods, supporting our analytical results.

\paragraph{Varying sample size} We compare the number of \emph{iterations} required to reach a precision of $10^{-3}$ as a function of the sample size from $n = 10^3$ to $n=10^5$. We average over 5 independent runs for each method using the same stepsizes as in the finite sample size case above.
The right plot of Figure \ref{fig:gmmplots} confirms that our findings in Theorem~\ref{thm:iem} and \ref{thm:svrg} are sharp. It requires ${\cal O}( n/\epsilon )$ (\resp ${\cal O}(n^{\frac{2}{3}}/\epsilon)$) iterations to find a $\epsilon$-stationary point for the \IEM\ (\resp \SEMVR\ and \FIEM) method.

\begin{figure}[H]
\centering
\includegraphics[width=.995\textwidth]{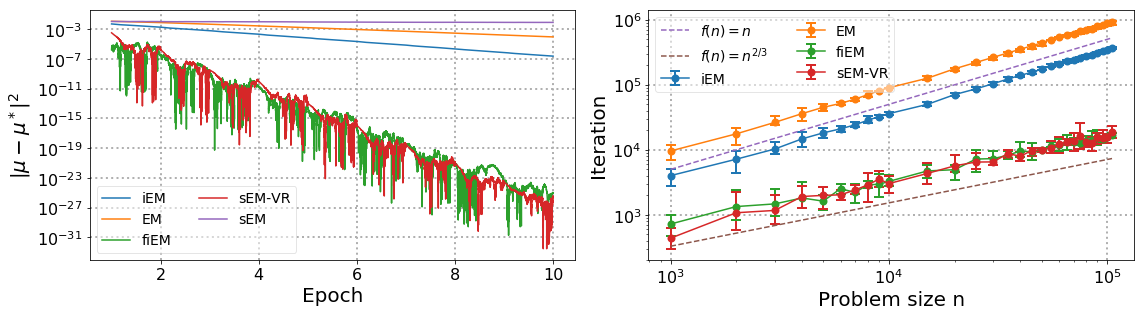}\vspace{-.3cm}
\caption{Performance of stochastic EM methods for fitting a GMM. (Left) Precision ($| \mu^{(k)} - \mu^\star |^2$) as a function of the epoch elapsed. (Right) Number of iterations to reach a precision of $10^{-3}$.}
\label{fig:gmmplots}
\end{figure}

\subsection{Probabilistic Latent Semantic Analysis}
The second example considers probabilistic Latent Semantic Analysis (pLSA) whose aim is to classify documents into a number of topics.
We are given a collection of documents $\inter[D]$ with terms from a vocabulary
$\inter[V]$. The data is summarized by a list of tokens $\{y_i \}_{i=1}^n$ where each token is a pair of document and word $y_i= (\obsdoc_i,\obsword_i)$ which indicates that $\obsword_i$ appears in document $\obsdoc_i$. The goal of pLSA is to classify the documents into $K$ topics, which is modeled as a latent variable $z_i \in \inter[K]$  associated with each token \citep{hofmann2017probabilistic}. 

To apply stochastic EM methods for pLSA,
we define $\param \eqdef (\pardoc,\partop)$ as the parameter variable, where $\pardoc =\{\pardoc_{d,k} \}_{\inter[K-1] \times \inter[D]}$ and $\partop= \{\partop_{k,v} \}_{\inter[K]\times \inter[V-1]}$. 
The constraint set $\Param$ is given as ---
for each $d \in \inter[D]$, $\pardoc_{d,\cdot} \in \Delta^{K}$ and for each $k \in \inter[K]$, we have $\partop_{\cdot,k} \in \Delta^{V}$, where $\Delta^{K}$, $\Delta^{V}$ are the (reduced dimension) $K,V$-dimensional probability simplex; see \eqref{eq:param_app} in the supplementary material for the precise definition.
Furthermore, denote $\pardoc_{d,K}= 1- \sum_{k=1}^{K-1} \pardoc_{d,k}$ for each $d \in \inter[D]$, and $\partop_{k,V}= 1 - \sum_{\ell=1}^{V-1} \partop_{k,\ell}$ for each $k \in \inter[K]$,
the complete log likelihood for $(y_i, z_i)$ is (up to an additive constant term):
\beq \label{eq:comp_like_plsa}
\log f(z_i,y_i;\param) = 
\sum_{k=1}^K\sum_{d=1}^D \log(\pardoc_{d,k}) \indiacc{k,d}(z_i,\obsdoc_i) +  \sum_{k=1}^K \sum_{v=1}^V \log (\partop_{k,v}) \indiacc{k,v}(z_i,\obsword_i).
\eeq
The penalization function is designed as
\beq
\Pen(\pardoc,\partop)= - \log \Dir(\pardoc; K, \alpha') - \log \Dir(\partop; V, \beta'),
\eeq
such that we ensure $\op( {\bm s}) \in {\rm int}( \Param)$. 
We can apply the stochastic EM methods described in Section~\ref{sec:sEM} on the pLSA problem. 
The implementation details are provided in Appendix~\ref{app:plsa}, therein we also verify the model assumptions required by our convergence analysis for pLSA.

\paragraph{Experiment} We compare the stochastic EM methods on two FAO (UN Food and Agriculture Organization) datasets \citep{medelyan2009human}. The first (\resp second) dataset consists of $10^3$ (\resp $10.5 \times 10^3$) documents and a vocabulary of size $300$. The number of topics is set to $K=10$ and the stepsizes for the \FIEM\ and \SEMVR\ are set to $\gamma =  1/n^{2/3}$ while the stepsize for the \SEM\ is set to $\gamma_k = 1/(k+10)$. 
Figure \ref{fig:gmmplots} shows the evidence lower bound (ELBO) as a function of the number of epochs for the datasets.
Again, the result shows that \FIEM\ and \SEMVR\ methods achieve faster convergence than the competing EM methods, affirming our theoretical findings.

\begin{figure}[H]
\includegraphics[width=\textwidth]{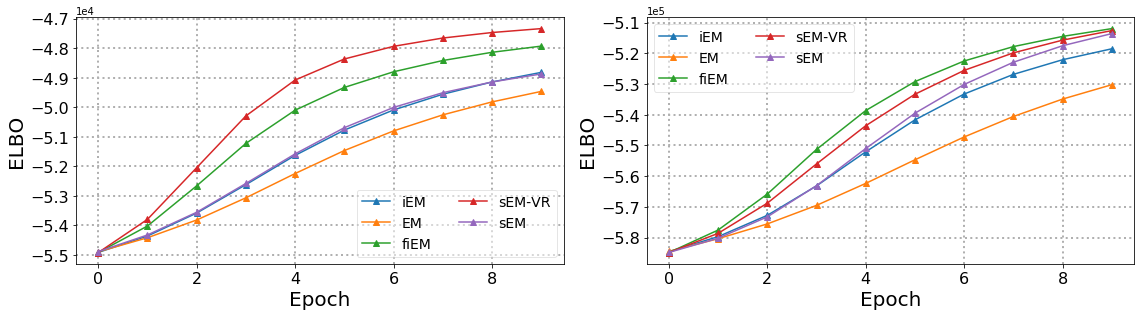}\vspace{-.2cm}
\caption{ELBO of the stochastic EM methods on FAO datasets as a function of number of epochs elapsed. (Left) small dataset with $10^3$ documents. (Right) large dataset with $10.5\times10^3$ documents.}\vspace{-.2cm}
\label{fig:plsaplots}
\end{figure}

\section{Conclusion}
This paper studies the global convergence for stochastic EM methods. Particularly, we focus on the inference of latent variable model with exponential family distribution and analyze the convergence of several stochastic EM methods. Our convergence results are \emph{global} and \emph{non-asymptotic}, and we offer two complimentary views on the existing stochastic EM methods --- one interprets \IEM\ method as an incremental MM method, and one interprets \SEMVR\ and \FIEM\ methods as scaled gradient methods. The analysis shows that the \SEMVR\ and \FIEM\ methods converge faster than the  \IEM\ method, and the result is confirmed via numerical experiments. 

\section*{Acknowledgement} 
BK and HTW contributed equally to this work. HTW's work is supported by the CUHK Direct Grant \#4055113.

\newpage
\linespread{1.1}
\normalsize

\bibliographystyle{abbrvnat}
\bibliography{references}

\newpage
\appendix
\section{Proof of Lemma~\ref{lem:lips_theta}}
\begin{Lemma*}
Assume H\ref{ass:compact}, H\ref{ass:regularity-phi-psi}, H\ref{ass:expected}, H\ref{ass:reg}. Let $e_i ( \param; \param' ) \eqdef Q_i( \param; \param' ) - {\cal L}_i( \param )$. For any $\param, \bar{\param}, \param' \in \Param^3$, we have
$\| \grd e_i( \param; \param' ) - \grd e_i( \bar{\param} ; \param' ) \| \leq \Lip{e} \| \param - \bar\param \| $, where $\Lip{e} \eqdef C_{\phi} C_{\Zset} \Lip{p} +  C_{\Sset} \Lip{\phi}$.
\end{Lemma*}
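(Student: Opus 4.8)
The plan is to obtain an explicit, simple expression for $\grd_\param e_i(\param;\param')$ and then bound its variation in $\param$ directly. The key structural observation is that, although both $Q_i$ and $\calL_i$ carry the log-partition contribution $\psi(\param)$, these terms cancel in the difference $e_i = Q_i - \calL_i$, leaving a clean formula that depends only on $\phi$ and the conditional expectations of the sufficient statistics.

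First I would differentiate $Q_i$ using the exponential-family form \eqref{eq:exp}. Writing $\log f(z_i,y_i;\param) = \log h(z_i,y_i) + \pscal{S(z_i,y_i)}{\phi(\param)} - \psi(\param)$ and noting the $\log h$ term is $\param$-independent, differentiating under the integral sign (legitimate on ${\rm int}(\Param)$ by the smoothness/boundedness in H\ref{ass:regularity-phi-psi}--H\ref{ass:expected} and compactness of $\Zset$ in H\ref{ass:compact}) gives $\grd_\param Q_i(\param;\param') = -\jacob{\phi}{\param}{\param}^\top \overline{\bss}_i(\param') + \grd \psi(\param)$. Next I would differentiate $\calL_i(\param) = -\log g(y_i;\param)$ via Fisher's identity $\grd_\param \log g(y_i;\param) = \int_\Zset \grd_\param \log f(z_i,y_i;\param)\, p(z_i|y_i;\param)\mu(\rmd z_i)$, which yields $\grd_\param \calL_i(\param) = -\jacob{\phi}{\param}{\param}^\top \overline{\bss}_i(\param) + \grd \psi(\param)$. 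Subtracting, the $\grd \psi(\param)$ terms cancel and I obtain
\[
\grd_\param e_i(\param;\param') = \jacob{\phi}{\param}{\param}^\top\big(\overline{\bss}_i(\param) - \overline{\bss}_i(\param')\big).
\]

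With this formula the Lipschitz estimate follows from the standard add-and-subtract decomposition
\[
\grd e_i(\param;\param') - \grd e_i(\bar\param;\param') = \jacob{\phi}{\param}{\param}^\top\big(\overline{\bss}_i(\param) - \overline{\bss}_i(\bar\param)\big) + \big(\jacob{\phi}{\param}{\param} - \jacob{\phi}{\param}{\bar\param}\big)^\top\big(\overline{\bss}_i(\bar\param) - \overline{\bss}_i(\param')\big).
\]
For the first term I would use $\|\jacob{\phi}{\param}{\param}\| \leq C_\phi$ (H\ref{ass:regularity-phi-psi}) together with $\|\overline{\bss}_i(\param) - \overline{\bss}_i(\bar\param)\| \leq \int_\Zset |S(z_i,y_i)|\,|p(z_i|y_i;\param) - p(z_i|y_i;\bar\param)|\,\mu(\rmd z_i) \leq C_\Zset \Lip{p}\|\param - \bar\param\|$, which combines the Lipschitz continuity of the conditional law (H\ref{ass:expected}) with the integrability constant $C_\Zset$ from \eqref{eq:compact}. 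For the second term I would use $\|\jacob{\phi}{\param}{\param} - \jacob{\phi}{\param}{\bar\param}\| \leq \Lip{\phi}\|\param-\bar\param\|$ (H\ref{ass:regularity-phi-psi}) and the fact that each $\overline{\bss}_i(\param)$, being a conditional expectation of $S(\cdot,y_i)$, lies in ${\rm conv}\{S(z,y_i):z\in\Zset\}\subseteq\Sset$, so that $\|\overline{\bss}_i(\bar\param) - \overline{\bss}_i(\param')\| \leq C_\Sset$ by \eqref{eq:compact}. Adding the two bounds produces $\Lip{e} = C_\phi C_\Zset \Lip{p} + C_\Sset \Lip{\phi}$.

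The routine parts are the two differentiations and the triangle-inequality bookkeeping. The points requiring care are justifying differentiation under the integral and Fisher's identity, and verifying the membership $\overline{\bss}_i(\param)\in\Sset$ so that the diameter bound $C_\Sset$ applies; the latter follows from reading off a single summand in the definition \eqref{eq:sset} of $\Sset$. I expect the cancellation of the $\grd \psi(\param)$ terms --- which reduces $\grd e_i$ to a product of the Jacobian of $\phi$ with a difference of sufficient-statistic averages, with no dependence on $\psi$ --- to be the main structural insight that makes the estimate clean; everything after that is mechanical.
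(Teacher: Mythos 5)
Your proof is correct and takes essentially the same route as the paper's: Fisher's identity plus differentiation of the exponential-family form to obtain $\grd e_i(\param;\param') = \jacob{\phi}{\param}{\param}^\top\big(\overline{\bss}_i(\param) - \overline{\bss}_i(\param')\big)$ (the paper writes this with the opposite, immaterial sign), followed by the identical add-and-subtract decomposition bounded via $C_\phi$, $C_\Zset \Lip{p}$, $\Lip{\phi}$ and $C_\Sset$. Your explicit verification that $\overline{\bss}_i(\param) \in \Sset$, so the diameter bound $C_\Sset$ applies, is simply a more careful account of the step the paper dispatches as ``due to the compactness of $\Sset$.''
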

\begin{proof}
Observe the following identity
\beq \label{eq:fisher}
\begin{split}
\grd_{\param} \calL_i( \param) \big|_{\param = \hat{\param}}  & = \grd_{\param} \Big\{ - \log \int_{\sf Z} f( z_i, y_i ; \param ) \mu( \rmd z_i ) \Big\} \Big|_{\param = \hat{\param}} \\
& \overset{(a)}{=} - \int_{\sf Z} \{ \grd_{\param} \log f( z_i, y_i ; \param ) \} \big|_{\param = \hat{\param}} f( z_i | y_i; \hat\param ) \mu( \rmd z_i ) \\
& = \grd_{\param} \surrogate_i (\param ; \hat{\param} ) \big|_{\param = \hat{\param}} \vspace{.1cm}
\end{split}
\eeq
where (a) is due to the Fisher's identity and (b) is due to the definition of $\surrogate_i$ in \eqref{eq:definition-surrogate}.
It follows that
\beq
\nabla e_i( \param ; \hat{\param} ) = \nabla \{ \surrogate_i(\param,\hat{\param}) - \calL_i(\param) \}= \jacob{\phi}{\param}{\param}^\top \big( \overline{\bss}_i(\hat{\param}) - \overline{\bss}_i(\param) \big).
\eeq
We observe that
\beq
\begin{split}
\| \overline{\bss}_i ( \param ) - \overline{\bss}_i ( \param' ) \| & = \left\| \int_{\Zset} S(z_{i},y_i) \big\{ p(z_i|y_i; \param ) - p(z_i|y_i; \param' ) \big\} \mu(\rmd z_i) \right\| \\
& \leq \Lip{p} \| \param - \param' \| \int_{\Zset} | S(z_i,y_i) | \mu(\rmd z_i) \leq C_{\Zset} \Lip{p} \| \param - \param' \|
\end{split}
\eeq
where the last inequality is due to the compactness of $\Zset$.
Finally, we have
\beq
\begin{split}
\| \nabla e_i( \param ; \hat{\param} ) - \nabla e_i( \bar{\param} ; \hat{\param} ) \|
& \leq \| \jacob{\phi}{\param}{\param} \| \| \overline{\bss}_i ( \param ) - \overline{\bss}_i ( \bar{\param} ) \| + \| \jacob{\phi}{\param}{\param} - \jacob{\phi}{\param}{\bar\param} \|
\| \overline{\bss}_i ( \hat\param ) - \overline{\bss}_i ( \bar{\param} ) \| \\
& \leq \big( C_{\phi} C_{\Zset} \Lip{p} + C_{\Sset} \Lip{\phi} \big) \| \param - \bar{\param} \|
\end{split}
\eeq
where the last inequality is due to the compactness of $\Sset$.
\end{proof}

\section{Proof of Theorem~\ref{thm:iem}}
\begin{Theorem*}
Consider the \IEM\ algorithm, \ie Algorithm~\ref{alg:sem} with \eqref{eq:iem}.  Assume H\ref{ass:compact}, H\ref{ass:regularity-phi-psi}, H\ref{ass:expected}, H\ref{ass:reg}.
For any $K_{\max} \geq 1$, it holds that
\beq \notag
\EE[ \| \grd \overline\calL( \hp{K} ) \|^2 ] \leq n \!~  \frac{2 \Lip{e}}{K_{\sf max}}  \!~ \EE \big[ \overline\calL ( \hp{0} ) - \overline\calL ( \hp{K_{\sf max}} ) \big],
\eeq
where $\Lip{e}$ is defined in Lemma~\ref{lem:lips_theta} and $K$ is a uniform random variable on  $\inter[{0,K_{\max}-1}]$ [cf.~\eqref{eq:random}] independent of the $\{i_k\}_{k=0}^{K_{\max}}$.
\end{Theorem*}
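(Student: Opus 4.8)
The plan is to treat the \IEM\ method as an incremental majorization--minimization scheme and to combine a telescoping identity for the surrogate value with a gradient-domination inequality extracted from the smoothness in Lemma~\ref{lem:lips_theta}. First I would introduce the aggregate surrogate $\widehat{\cal L}^{(k)}(\param) \eqdef \Pen(\param) + n^{-1}\sum_{i=1}^n \surrogate_i(\param;\hp{\tau_i^k})$, so that by \eqref{eq:mstep_iem} the iterate $\hp{k}$ is its minimizer; since H\ref{ass:reg} places this minimizer in ${\rm int}(\Param)$, the first-order condition $\grd \widehat{\cal L}^{(k)}(\hp{k}) = 0$ holds. I write $g_k \eqdef \widehat{\cal L}^{(k)}(\hp{k}) = \min_\param \widehat{\cal L}^{(k)}(\param)$ and let $e_i(\param;\param') \eqdef \surrogate_i(\param;\param') - \calL_i(\param) \ge 0$ be the majorization gap; because $\surrogate_i$ and $Q_i$ differ by a $\param$-independent constant, $e_i$ carries the same gradient as in Lemma~\ref{lem:lips_theta} and is thus $\Lip{e}$-smooth with global minimum value $e_i(\param';\param')=0$.

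The argument then rests on two ingredients. The first bounds the true gradient at $\hp{k}$: subtracting the optimality condition gives $\grd \overline\calL(\hp{k}) = \grd\overline\calL(\hp{k}) - \grd\widehat{\cal L}^{(k)}(\hp{k}) = -n^{-1}\sum_{i=1}^n \grd e_i(\hp{k};\hp{\tau_i^k})$. For each nonnegative, $\Lip{e}$-smooth $e_i(\cdot;\param')$ with minimum value zero, the descent lemma applied at a single gradient step yields the gradient-domination bound $\|\grd e_i(\param;\param')\|^2 \le 2\Lip{e}\, e_i(\param;\param')$; combining this with Jensen's inequality over $i$ and the identity $n^{-1}\sum_i e_i(\hp{k};\hp{\tau_i^k}) = g_k - \overline\calL(\hp{k})$ gives $\|\grd\overline\calL(\hp{k})\|^2 \le 2\Lip{e}\,\bigl(g_k - \overline\calL(\hp{k})\bigr)$. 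The second ingredient quantifies the per-step surrogate decrease. Only the $i_k$-th component surrogate is refreshed, so using $\surrogate_{i_k}(\hp{k};\hp{k})=\calL_{i_k}(\hp{k})$ we get $\widehat{\cal L}^{(k+1)}(\hp{k}) = g_k - n^{-1} e_{i_k}(\hp{k};\hp{\tau_{i_k}^k})$; since $g_{k+1} = \min_\param \widehat{\cal L}^{(k+1)}(\param) \le \widehat{\cal L}^{(k+1)}(\hp{k})$, taking the conditional expectation over the uniform draw $i_k$ yields $\EE[g_k - g_{k+1}\mid{\cal F}_k] \ge n^{-1}\bigl(g_k - \overline\calL(\hp{k})\bigr)$.

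Chaining the two ingredients produces the per-iteration inequality $\|\grd\overline\calL(\hp{k})\|^2 \le 2\Lip{e}\,n\,\EE[g_k-g_{k+1}\mid{\cal F}_k]$, the left-hand side being ${\cal F}_k$-measurable. I would then take total expectations, sum over $k=0,\dots,K_{\sf max}-1$, and telescope the right-hand side to $2\Lip{e}\,n\,\EE[g_0 - g_{K_{\sf max}}]$. The initialization $\hat{\bss}^{(0)}=\overline{\bss}^{(0)}$ anchors every component surrogate at $\hp{0}$, so $g_0 = \overline\calL(\hp{0})$, while $g_{K_{\sf max}} \ge \overline\calL(\hp{K_{\sf max}})$ because $\widehat{\cal L}^{(K_{\sf max})}$ majorizes $\overline\calL$; hence $\EE[g_0 - g_{K_{\sf max}}] \le \EE[\overline\calL(\hp{0}) - \overline\calL(\hp{K_{\sf max}})]$. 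Dividing by $K_{\sf max}$ and using that $K$ is uniform so $\EE[\|\grd\overline\calL(\hp{K})\|^2] = K_{\sf max}^{-1}\sum_k \EE[\|\grd\overline\calL(\hp{k})\|^2]$ delivers \eqref{eq:iem_bdd}.

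The hard part is the gradient-domination step $\|\grd e_i\|^2 \le 2\Lip{e}\, e_i$: it needs $e_i(\cdot;\param')$ to be nonnegative with a known global minimizer (supplied by the majorization/Kullback--Leibler argument preceding Lemma~\ref{lem:lips_theta}) and the descent-lemma gradient step to stay where H\ref{ass:regularity-phi-psi}--H\ref{ass:expected} guarantee smoothness. Since $\param'\in{\rm int}(\Param)$ and $\Sset,\Zset$ are compact (H\ref{ass:compact}), this can be justified, but it is the one place where the \emph{nonconvexity} of $e_i$ rules out a purely convex-analytic shortcut. A secondary, routine point is the bookkeeping of the indices $\tau_i^k$ and the boundary term at $k=0$, so that the telescoping and the identity $g_0=\overline\calL(\hp{0})$ are exact.
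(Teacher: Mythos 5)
Your proposal is correct and follows essentially the same route as the paper's proof: the same aggregate surrogate minimized by $\hp{k}$, the same per-step decrease obtained by refreshing only the $i_k$-th component together with $\surrogate_{i_k}(\hp{k};\hp{k})=\calL_{i_k}(\hp{k})$, the same conditional-expectation identity over the uniform draw of $i_k$, and the same telescoping with the boundary facts $g_0 = \overline\calL(\hp{0})$ and $g_{K_{\sf max}} \geq \overline\calL(\hp{K_{\sf max}})$. The only cosmetic difference is that you apply the gradient-domination bound $\|\grd e_i\|^2 \leq 2\Lip{e}\, e_i$ to each component and average via Jensen, whereas the paper applies the identical descent-lemma argument once to the aggregate error $\overline\calL^{(k)} - \overline\calL$ (whose gradient at $\hp{k}$ is $-\grd\overline\calL(\hp{k})$ by optimality), and both yield the same constant $2\Lip{e}$.
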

\begin{proof}
We derive a \emph{non-asymptotic} convergence rate for the \IEM\ method. To begin our analysis, define
\beq
\overline\calL^{(k+1)} ( \param ) \eqdef \Pen( \param ) +  \frac{1}{n} \sum_{i=1}^n \surrogate_i ( \param ; \hat{\param}^{(\tau_i^{k+1})} )
\eeq
One has
\beq
\overline\calL^{(k+1)} ( \param ) = \overline\calL^{(k)} ( \param ) + { \frac{1}{n}} \big(  \surrogate_{i_k} ( \param ; \hat{\param}^{(k)} ) - \surrogate_{i_k} ( \param ; \hat{\param}^{(\tau_{i_k}^{k})} ) \big)
\eeq
Observe that $\hat{\param}^{(k+1)} \in \argmin_{\param \in \Param} \overline\calL^{(k+1)} ( \param )$. We have
\beq
\begin{split}
\overline\calL^{(k+1)} ( \hat\param^{(k+1)} ) \leq \overline\calL^{(k+1)} ( \hat\param^{(k)} ) & = \overline\calL^{(k)} ( \hat\param^{(k)} ) + { \frac{1}{n}} \big( \surrogate_{i_k} ( \hat\param^{(k)} ; \hat{\param}^{(k)} ) - \surrogate_{i_k} ( \hat\param^{(k)} ; \hat{\param}^{(\tau_{i_k}^{k})} ) \big) \\
& = \overline\calL^{(k)} ( \hat\param^{(k)} ) + { \frac{1}{n}} \big( \calL_{i_k} ( \hat\param^{(k)} ) - \surrogate_{i_k} ( \hat\param^{(k)} ; \hat{\param}^{(\tau_{i_k}^{k})} ) \big)
\end{split}
\eeq
where we have used the identity $\calL_{i_k} ( \hat\param^{(k)} ) = \surrogate_{i_k} ( \hat\param^{(k)} ; \hat{\param}^{(k)} )$.
Arranging terms imply
\beq \label{eq:ineq1}
e_{i_k} ( \hat\param^{(k)} ; \hat{\param}^{(\tau_{i_k}^{k})} ) =
\surrogate_{i_k} ( \hat\param^{(k)} ; \hat{\param}^{(\tau_{i_k}^{k})} ) -  \calL_{i_k} ( \hat\param^{(k)} ) \leq n \big( \overline\calL^{(k)} ( \hat\param^{(k)} ) - \overline\calL^{(k+1)} ( \hat\param^{(k+1)} ) \big)
\eeq
For $k \in \nset^*$, denote by $\mcf_{k}$ the $\sigma$-algebra generated by the random variables $i_0,\dots,i_{k-1}$. Note that $\hat{\param}^{(k)}$ is $\mcf_k$-measurable.
Because the random variable $i_k$ is independent of $\mcf_{k-1}$ and is uniformly distributed over $\{1,\dots,n\}$, the conditional expectation evaluates to
\beq
\CPE{e_{i_k} ( \hat\param^{(k)} ; \hat{\param}^{(\tau_{i_k}^{k})})}{\mcf_{k}}
= \overline\calL^{(k)} (\hat{\param}^{(k)}) - \overline\calL( \hat{\param}^{(k)} )
\eeq
where $\overline\calL$ is the global objective function defined in \eqref{eq:em_motivate}.
Note that the function $\overline\calL^{(k)} ( \param ) - \overline\calL( \param )$ is non-negative and $\Lip{e}$-smooth. It follows that for any $\param$, the inequality holds
\beq
\begin{split}
0 & \leq  \overline\calL^{(k)} ( \param ) - \overline\calL( \param ) \leq \overline\calL^{(k)} (\hat{\param}^{(k)}) - \overline\calL( \hat{\param}^{(k)} ) - \pscal{\grd \overline\calL ( \hat\param^{(k)} )}{ \param - \hat{\param}^{(k)} }+ \frac{\Lip{e}}{2} \| \param - \hat{\param}^{(k)} \|^2
\end{split},
\eeq
where we have used the fact $\grd \overline\calL^{(k)} ( \hat\param^{(k)} ) = {\bm 0}$. Setting $\param = \hat{\param}^{(k)} + (\Lip{e})^{-1} \grd \overline\calL ( \hat\param^{(k)} )$ in the above yields
\beq
\frac{1}{2 \Lip{e}} \| \grd \overline\calL ( \hat\param^{(k)} ) \|^2 \leq \overline\calL^{(k)} (\hat{\param}^{(k)}) - \overline\calL( \hat{\param}^{(k)} )
\eeq
Therefore, taking the conditional expectation on both sides of \eqref{eq:ineq1} leads to
\beq
\frac{1}{2n \Lip{e}} \| \grd \overline\calL ( \hat\param^{(k)} ) \|^2 \leq
\overline\calL^{(k)} ( \hat\param^{(k)} ) - \EE \big[  \overline\calL^{(k+1)} ( \hat\param^{(k+1)} ) | {\cal F}_k \big]
\eeq
Note that as we have set $\gamma_{k+1} = 1$ in the \IEM\ method, the terminating iteration number $K$ is chosen uniformly over $\{1,\dots,K_{\sf max}\}$, therefore taking the total expectations gives
\beq
\begin{split}
 \EE \big[ \| \grd \overline\calL( \hat{\param}^{(K)} ) \|^2 \big] & = \frac{1}{K_{\sf max}} \sum_{k=0}^{K_{\sf max}-1} \EE \big[\| \grd \overline\calL ( \hat\param^{(k)} ) \|^2 \big] \\
& \leq \frac{2n \Lip{e}}{K_{\sf max}} \EE \Big[
\overline\calL^{(0)} ( \hat\param^{(0)} ) -  \overline\calL^{(K_{\sf max})} ( \hat\param^{(K_{\sf max}+1)} ) \Big] \\
& \leq \frac{2n \Lip{e} }{K_{\sf max}} \EE \Big[
\overline\calL^{(0)} ( \hat\param^{(0)} ) -  \overline\calL ( \hat\param^{(K_{\sf max})} ) \Big]
\end{split}
\eeq
Lastly, we note that
$\overline\calL ( \hat\param^{(0)} ) = \overline\calL^{(0)} ( \hat\param^{(0)} )$.
This leads to \eqref{eq:iem_bdd} and concludes our proof.
\end{proof}
\section{Proof of Lemma~\ref{lem:global}}
\begin{Lemma*}
For any $\bss \in \Sset$, it holds that 
\beq \notag
\grd_{ \bss} V( {\bss} )  = \jacob{\overline{\param}}{\bss}{\bss}^\top \grd_\param \overline\calL( \overline\param( {\bss} ) ).
\eeq
Assume H\ref{ass:reg}. If $\bss^\star \in \set{ \bss \in \Sset }{\grd_{\bss} V(\bss )=0}$, then $\overline{\param}(\bss^\star) \in \set{\param \in \Param }{\grd_\param \overline{\calL}(\param)=0}$. Conversely, if $\param^* \in \set{\param \in \Param}{\grd_\param \overline{\calL}(\param)=0}$, then $\bss^*= \overline{\bss}(\param^*) \in \set{ \bss \in \Sset }{\grd_{\bss} V(\bss)=0}$.
\end{Lemma*}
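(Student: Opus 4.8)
The plan is to exploit the fact that the {\sf M-step} map $\overline{\param}$ is defined as an unconstrained minimizer (lying in ${\rm int}(\Param)$ under H\ref{ass:reg}) of the smooth function $\param \mapsto L(\bss,\param) = \Pen(\param) + \psi(\param) - \pscal{\bss}{\phi(\param)}$. First I would establish the gradient identity \eqref{eq:grd_equiv} by a direct chain rule computation. Since $V(\bss) = \overline{\calL}(\overline{\param}(\bss))$, differentiating with respect to $\bss$ gives immediately $\grd_{\bss} V(\bss) = \jacob{\overline{\param}}{\bss}{\bss}^\top \grd_\param \overline{\calL}(\overline{\param}(\bss))$, which is exactly the claimed identity. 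This step requires knowing that $\overline{\param}$ is differentiable; I would invoke the implicit function theorem applied to the first-order stationarity condition of the {\sf M-step}, namely $\grd_\param L(\bss, \overline{\param}(\bss)) = {\bm 0}$, using that $\jacob{\phi}{\param}{\overline{\param}(\bss)}$ is full rank and $\hess{L}{\param}$ is invertible (both guaranteed by H\ref{ass:reg} and implicitly by H\ref{ass:eigen}).

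Next, for the forward direction, I would use the key structural fact that the Jacobian $\jacob{\overline{\param}}{\bss}{\bss}$ has \emph{full column rank}. Differentiating the stationarity condition $\grd_\param L(\bss, \overline{\param}(\bss)) = {\bm 0}$ in $\bss$ via the implicit function theorem yields an explicit expression for $\jacob{\overline{\param}}{\bss}{\bss}$ in terms of $(\hess{L}{\param})^{-1}$ and $\jacob{\phi}{\param}{\overline{\param}(\bss)}^\top$; since both factors are nonsingular (resp.\ full rank), the map $\jacob{\overline{\param}}{\bss}{\bss}^\top$ is injective on its range. Hence if $\grd_{\bss} V(\bss^\star) = 0$, the identity \eqref{eq:grd_equiv} forces $\grd_\param \overline{\calL}(\overline{\param}(\bss^\star)) = {\bm 0}$, establishing that $\overline{\param}(\bss^\star)$ is a stationary point of $\overline{\calL}$.

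For the converse, suppose $\param^\star$ is stationary for $\overline{\calL}$, and set $\bss^\star = \overline{\bss}(\param^\star)$. The crucial observation is that the {\sf E-step} and {\sf M-step} are inverse to each other at a stationary point: I would verify that $\overline{\param}(\overline{\bss}(\param^\star)) = \param^\star$, which follows because the stationarity condition $\grd_\param \overline{\calL}(\param^\star) = {\bm 0}$ coincides with the first-order condition defining $\overline{\param}(\bss^\star)$ once one substitutes $\bss^\star = \overline{\bss}(\param^\star)$; this uses the Fisher-type identity relating $\grd_\param \overline{\calL}$ to $\overline{\bss}(\param) - \bss$ (analogous to \eqref{eq:fisher}). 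Then plugging into \eqref{eq:grd_equiv} gives $\grd_{\bss} V(\bss^\star) = \jacob{\overline{\param}}{\bss}{\bss^\star}^\top \grd_\param \overline{\calL}(\param^\star) = 0$. The main obstacle I anticipate is the careful bookkeeping for the differentiability and the explicit form of $\jacob{\overline{\param}}{\bss}{\bss}$ via the implicit function theorem --- in particular confirming the full-rank/invertibility conditions needed so that injectivity of $\jacob{\overline{\param}}{\bss}{\bss}^\top$ (for the forward direction) and the fixed-point relation $\overline{\param}\circ\overline{\bss} = {\rm id}$ at stationary points (for the converse) both hold; once these algebraic facts are in place, both implications follow cleanly from the chain-rule identity.
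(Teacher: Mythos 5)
Your proposal is correct and follows essentially the same route as the paper's proof: the chain-rule identity, injectivity of $\jacob{\overline{\param}}{\bss}{\bss}^\top$ (which the paper asserts tersely as "invertibility") for the forward implication, and the Fisher identity combined with uniqueness of the {\sf M-step} minimizer under H\ref{ass:reg} to get the fixed-point relation $\param^* = \overline{\param}(\overline{\bss}(\param^*))$ for the converse. Your added implicit-function-theorem justification of differentiability and of the explicit form of the Jacobian is a more careful rendering of what the paper leaves implicit (and reappears in its proof of Lemma~\ref{lem:semigrad}), not a different argument.
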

\begin{proof}
Using chain rule, we obtain  $\grd_{ \bss} V( {\bss} )  = \jacob{\overline{\param}}{\bss}{\bss}^\top \grd_\param \overline\calL( \overline\param( {\bss} ) )$
Obviously if $\grd_{ \bss} V( {\bss^\star} ) = 0$, then $\grd_\param \overline\calL( \overline\param( {\bss^\star} ) ) = 0$ because $\jacob{\overline{\param}}{\bss}{\bss}$ is invertible.
Consider now the converse. By the Fisher identity, we get $\grd_\param \calL_i(\param)= \grd_\param \psi(\param) - \jacob{\phi}{\param}{\param}^\top \overline{\bss}_i(\param)$ which implies that $\grd_\param \overline{\calL}(\param)= \grd_\param \Pen(\param) + \grd_\param \psi(\param) - \jacob{\phi}{\param}{\param}^{\top} \overline{\bss}(\param)$.
Hence, if $\grd_\param \overline{\calL} (\param^*)=0$, then $\grd_\param \Pen(\param^*) + \grd_\param \psi(\param^*) - \jacob{\phi}{\param}{\param^*}^{\top} \overline{\bss}^* = 0$ where we have set $\overline{\bss}^*= \overline{\bss}(\param^*)$. Under H\ref{ass:reg}, the latter relation implies that $\param^*= \overline{\param}(\bss^*)$. The proof follows.
\end{proof}

\section{Proof of Lemma~\ref{lem:semigrad}}
\begin{Lemma*} 
Assume H\ref{ass:reg},H\ref{ass:eigen}. For all $\bss \in \Sset$,
\beq \notag
\upsilon_{\min}^{-1} \pscal{\grd V ( {\bss} ) }{ {\bss} - \os( \op ({\bss})) }
\geq \big\| {\bss} - \os( \op ({\bss})) \big\|^2 \geq \upsilon_{\max}^{-2} \| \grd V ( {\bss} ) \|^2,
\eeq
\end{Lemma*}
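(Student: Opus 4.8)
The plan is to establish the key relationship between the gradient $\grd V(\bss)$ and the difference vector $\bss - \os(\op(\bss))$ through the scaling matrix $\operatorname{B}(\bss)$ defined in \eqref{eq:Bss}. First I would start from Lemma~\ref{lem:global}, which gives $\grd_\bss V(\bss) = \jacob{\overline{\param}}{\bss}{\bss}^\top \grd_\param \overline\calL(\op(\bss))$. The goal is to rewrite this in a form that reveals the correlation with the difference vector, so I expect the central identity to be something like
\beq \notag
\grd V(\bss) = \operatorname{B}(\bss)^{-1} \big( \bss - \os(\op(\bss)) \big),
\eeq
or equivalently $\bss - \os(\op(\bss)) = \operatorname{B}(\bss) \grd V(\bss)$, up to possibly a multiplicative constant. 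This identity is what ties together the two views (parameter space and sufficient-statistics space) and is the linchpin of the whole argument.

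\textbf{Deriving the central identity.} To obtain this, I would use the Fisher identity as in the proof of Lemma~\ref{lem:global}, namely $\grd_\param \overline\calL(\param) = \grd_\param \Pen(\param) + \grd_\param \psi(\param) - \jacob{\phi}{\param}{\param}^\top \overline{\bss}(\param)$. Evaluating at $\param = \op(\bss)$ and using the first-order optimality condition of the {\sf M-step} \eqref{eq:mstep} — which states $\grd_\param \big\{ \Pen(\param) + \psi(\param) - \pscal{\bss}{\phi(\param)} \big\} = 0$ at $\param = \op(\bss)$, i.e.\ $\grd_\param \Pen(\op(\bss)) + \grd_\param \psi(\op(\bss)) = \jacob{\phi}{\param}{\op(\bss)}^\top \bss$ — I can subtract to get $\grd_\param \overline\calL(\op(\bss)) = \jacob{\phi}{\param}{\op(\bss)}^\top \big( \bss - \os(\op(\bss)) \big)$. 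I also need to express $\jacob{\overline{\param}}{\bss}{\bss}$ via the implicit function theorem applied to the {\sf M-step} optimality condition: differentiating $\jacob{\phi}{\param}{\op(\bss)}^\top \bss = \grd_\param(\Pen + \psi)(\op(\bss))$ in $\bss$ yields $\jacob{\overline{\param}}{\bss}{\bss} = \big( \hess{L}{\param}(\bss, \op(\bss)) \big)^{-1} \jacob{\phi}{\param}{\op(\bss)}^\top$. Combining these pieces with the definition \eqref{eq:Bss} of $\operatorname{B}(\bss)$ should collapse everything into the target identity.

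\textbf{From the identity to the inequalities.} Once I have $\bss - \os(\op(\bss)) = \operatorname{B}(\bss) \grd V(\bss)$, both inequalities follow from the spectral bounds in H\ref{ass:eigen}. For the left inequality, I write $\pscal{\grd V(\bss)}{\bss - \os(\op(\bss))} = \pscal{\grd V(\bss)}{\operatorname{B}(\bss) \grd V(\bss)} \geq \upsilon_{\min} \| \grd V(\bss) \|^2$ by the lower eigenvalue bound, and separately $\| \bss - \os(\op(\bss)) \|^2 = \| \operatorname{B}(\bss) \grd V(\bss) \|^2 \leq \upsilon_{\max}^2 \| \grd V(\bss) \|^2$, so that $\pscal{\grd V(\bss)}{\bss - \os(\op(\bss))} \geq \upsilon_{\min} \upsilon_{\min}^{-2} \cdot (\text{rearrangement})$; the cleanest route is to bound the inner product below by $\upsilon_{\min}\|\grd V\|^2$ and $\|\bss - \os(\op(\bss))\|^2$ above by $\upsilon_{\max}^2\|\grd V\|^2$, then chain them. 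For the right inequality, $\| \bss - \os(\op(\bss)) \|^2 = \| \operatorname{B}(\bss) \grd V(\bss) \|^2 \geq \upsilon_{\max}^{-2} \| \grd V(\bss) \|^2$ — here I use that $\operatorname{B}(\bss)$ is symmetric positive definite with $\| \operatorname{B}(\bss) \| \leq \upsilon_{\max}$, so $\| \operatorname{B}(\bss)^{-1} \| \geq \upsilon_{\max}^{-1}$ and hence $\|\grd V\| = \|\operatorname{B}(\bss)^{-1}(\bss-\os(\op(\bss)))\| \leq \upsilon_{\max} \|\bss - \os(\op(\bss))\|$, giving the claim after squaring and rearranging.

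\textbf{The main obstacle} I anticipate is the careful derivation of the Jacobian $\jacob{\overline{\param}}{\bss}{\bss}$ via the implicit function theorem and verifying that $\operatorname{B}(\bss)$ is indeed symmetric (so that its eigenvalue bounds from H\ref{ass:eigen} apply cleanly to quadratic forms). The symmetry is essential because H\ref{ass:eigen} bounds $\lambda_{\min}$ and the operator norm, which control quadratic forms only for symmetric matrices; I would need to confirm that $\operatorname{B}(\bss)$ as defined in \eqref{eq:Bss} is symmetric, which follows from $\hess{L}{\param}$ being symmetric (it is a Hessian) and the sandwiched structure $\jacob{\phi}{\param}{\cdot} (\cdot)^{-1} \jacob{\phi}{\param}{\cdot}^\top$. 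Getting the matrix algebra and the full-rank condition from H\ref{ass:reg} to line up correctly — ensuring all inverses exist and the scaling identity holds exactly rather than up to an unaccounted constant — is where the bookkeeping is most delicate.
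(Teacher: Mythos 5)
Your overall route is the same as the paper's: Fisher's identity together with the {\sf M-step} first-order optimality condition to express $\grd_\param \overline\calL(\op(\bss))$ in terms of $\bss - \os(\op(\bss))$, the implicit function theorem to compute $\jacob{\overline{\param}}{\bss}{\bss}$, and then the spectral bounds of H\ref{ass:eigen}. Your two intermediate formulas are both correct. However, your stated central identity is \emph{inverted}, and this is a genuine error that propagates into both inequalities. Write $\Delta \eqdef \bss - \os(\op(\bss))$. Combining your own two formulas (and using symmetry of the Hessian) gives
\beq \notag
\grd V(\bss) = \jacob{\overline{\param}}{\bss}{\bss}^\top \jacob{\phi}{\param}{\op(\bss)}^\top \Delta
= \jacob{\phi}{\param}{\op(\bss)}\big(\hess{L}{\param}(\bss,\op(\bss))\big)^{-1}\jacob{\phi}{\param}{\op(\bss)}^\top \Delta
= \operatorname{B}(\bss)\,\Delta ,
\eeq
that is, $\grd V = \operatorname{B}(\bss)\Delta$ --- not your target $\grd V = \operatorname{B}(\bss)^{-1}\Delta$ (equivalently $\Delta = \operatorname{B}(\bss)\grd V$). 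The lemma's constants certify which version is right: from $\grd V = \operatorname{B}(\bss)\Delta$ one gets in one line each $\pscal{\grd V(\bss)}{\Delta} = \pscal{\operatorname{B}(\bss)\Delta}{\Delta} \geq \upsilon_{\min}\|\Delta\|^2$ (the left inequality with exactly the factor $\upsilon_{\min}^{-1}$) and $\|\grd V(\bss)\| = \|\operatorname{B}(\bss)\Delta\| \leq \upsilon_{\max}\|\Delta\|$ (the right inequality with exactly $\upsilon_{\max}^{-2}$).

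With your inverted identity the constants do not come out as claimed. Your left-hand chain yields $\pscal{\grd V}{\Delta} \geq \upsilon_{\min}\|\grd V\|^2 \geq \upsilon_{\min}\upsilon_{\max}^{-2}\|\Delta\|^2$, i.e.\ $\upsilon_{\min}^{-1}\upsilon_{\max}^{2}\,\pscal{\grd V}{\Delta} \geq \|\Delta\|^2$, which differs from the lemma by the factor $\upsilon_{\max}^{2}$. Your right-hand argument moreover contains an invalid step: from $\|\operatorname{B}(\bss)\| \leq \upsilon_{\max}$ you deduce $\|\operatorname{B}(\bss)^{-1}\| \geq \upsilon_{\max}^{-1}$ and then use this to conclude $\|\operatorname{B}(\bss)^{-1}\Delta\| \leq \upsilon_{\max}\|\Delta\|$; a \emph{lower} bound on an operator norm can never upper-bound anything. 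What your identity would actually give is $\|\Delta\| = \|\operatorname{B}(\bss)\grd V\| \geq \upsilon_{\min}\|\grd V\|$, i.e.\ the constant $\upsilon_{\min}^{2}$ in place of the lemma's $\upsilon_{\max}^{-2}$. The fix is small: carry out the combination of your two correct intermediate formulas to obtain $\grd V = \operatorname{B}(\bss)\Delta$ (this is precisely the paper's proof), after which both inequalities follow immediately from H\ref{ass:eigen}, using the symmetry of $\operatorname{B}(\bss)$ --- a point you correctly flagged, and which holds since $\hess{L}{\param}$ is a symmetric matrix.
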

\begin{proof}
Using H\ref{ass:reg} and the fact that we can exchange integration with differentiation and the Fisher's identity,   we obtain
\beq \label{eq:grd_v}
\begin{split}
\grd_{ \bss} V( {\bss} ) & = \jacob{ \overline{\param} }{ \bss }{\bss}^\top
\Big( \grd_\param \Pen( \mstep{\bss} )  + \grd_\param \calL( \overline\param( {\bss} ) )  \Big) \\
& =  \jacob{ \overline{\param} }{ \bss }{\bss}^\top \Big( \grd_\param \psi( \mstep{\bss}) + \grd_\param \Pen( \mstep{\bss} ) - \jacob{\phi}{\param}{\mstep{\bss} }^\top  \os( \op ({\bss})) \Big)\\
& =   \jacob{ \overline{\param} }{ \bss }{\bss}^\top \jacob{\phi}{\param}{ \mstep{\bss} }^\top \!~ ({\bss} - \os( \op ({\bss})) ) \eqsp,
\end{split}
\eeq
Consider the following vector map:
\beq
{\bss} \to \grd_{\param} L(\bss, \param) \vert_{\param= \mstep{\bss}}= \grd_\param \psi ( \mstep{\bss} ) + \grd_{ \param} \Pen(\mstep{\bss}  ) - \jacob{ \phi }{ \param }{\mstep{\bss}  }^\top \!~{\bss} \eqsp.
\eeq
Taking the gradient of the above map \wrt ${\bss}$ and using assumption H\ref{ass:reg}, we show that:
\beq
{\bm 0} = - \jacob{\phi}{\param}{\mstep{\bss} } + \Big( \underbrace{ \grd_{\param}^2 \big( \psi( \param ) + \Pen( \param ) - \pscal{ \phi( \param ) }{ {\bss} } \big)}_{= \hess{{L}}{\param} ( {\bss}; \param )} \big|_{\param = \mstep{\bss}  } \Big) \jacob{ \overline{\param} }{\bss}{\bss} \eqsp.
\eeq
The above yields
\beq
\grd_{ \bss} V( {\bss} )  = \operatorname{B}(\bss) ({\bss} - \os( \op ({\bss})) )
\eeq
where we recall $\operatorname{B}(\bss) = \jacob{ \phi }{ \param }{ \mstep{\bss} } \Big( \hess{{L}}{\param}( {\bss}; \mstep{\bss} )  \Big)^{-1} \jacob{ \phi }{ \param }{\mstep{\bss} }^\top$. The proof of \eqref{eq:semigrad} follows directly from the assumption~H\ref{ass:eigen}.
\end{proof}

\section{Proof of Lemma~\ref{lem:smooth}}
\begin{Lemma*} 
Assume H\ref{ass:compact}, H\ref{ass:expected}, H\ref{ass:reg}, H\ref{ass:eigen}.  
For all $\bss,\bss' \in \Sset$ and $i \in \inter$, we have
\beq \notag
\| \overline{\bss}_i ( \overline{\param} ({\bss})) - \overline{\bss}_i ( \overline{\param} ({\bss}' )) \| \leq \Lip{{\bss}} \| {\bss} - {\bss}' \|,~~\| \grd  V ( {\bss} ) - \grd  V ( {\bss}' ) \| \leq \Lip{V} \| {\bss} - {\bss}' \|,
\eeq
where $\Lip{\bss} \eqdef C_{\Zset} \Lip{p} \Lip{\theta}$ and $\Lip{V}  \eqdef \upsilon_{\max} \big( 1 + \Lip{{\bss}} \big) + \Lip{B} C_{\Sset}$.
\end{Lemma*}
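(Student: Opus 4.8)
The plan is to prove the two Lipschitz bounds in Lemma~\ref{lem:smooth} separately, reusing the machinery already established. For the \emph{first} bound, the key observation is that the map $\bss \mapsto \overline{\bss}_i(\overline{\param}(\bss))$ is a composition of two functions whose regularity we already control. First I would write, for any $\bss,\bss' \in \Sset$,
\beq \notag
\| \overline{\bss}_i ( \overline{\param} ({\bss})) - \overline{\bss}_i ( \overline{\param} ({\bss}' )) \|
= \left\| \int_{\Zset} S(z_i,y_i) \big\{ p(z_i | y_i; \overline{\param}(\bss)) - p(z_i | y_i; \overline{\param}(\bss')) \big\} \mu(\rmd z_i) \right\|.
\eeq
Then I would apply H\ref{ass:expected} to bound the bracket pointwise by $\Lip{p} \| \overline{\param}(\bss) - \overline{\param}(\bss') \|$, pull the norm inside the integral, use $\int_{\Zset} | S(z_i,y_i) | \mu(\rmd z_i) \leq C_{\Zset}$ from H\ref{ass:compact}, and finally invoke the $\Lip{\theta}$-Lipschitz property of $\overline{\param}$ from H\ref{ass:reg}. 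This chains to $C_{\Zset} \Lip{p} \Lip{\theta} \| \bss - \bss' \| = \Lip{\bss} \| \bss - \bss' \|$, exactly as claimed. This step is essentially identical to the bound on $\| \overline{\bss}_i(\param) - \overline{\bss}_i(\param') \|$ derived inside the proof of Lemma~\ref{lem:lips_theta}, now precomposed with $\overline{\param}$.

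For the \emph{second} bound, I would start from the closed form of the gradient established in the proof of Lemma~\ref{lem:semigrad}, namely $\grd_{\bss} V(\bss) = \operatorname{B}(\bss)\big( \bss - \overline{\bss}(\overline{\param}(\bss)) \big)$, where $\operatorname{B}(\bss)$ is defined in \eqref{eq:Bss}. The plan is then a standard add-and-subtract / triangle inequality split:
\beq \notag
\| \grd V(\bss) - \grd V(\bss') \|
\leq \| \operatorname{B}(\bss) \| \, \big\| \big( \bss - \overline{\bss}(\overline{\param}(\bss)) \big) - \big( \bss' - \overline{\bss}(\overline{\param}(\bss')) \big) \big\|
+ \| \operatorname{B}(\bss) - \operatorname{B}(\bss') \| \, \big\| \bss' - \overline{\bss}(\overline{\param}(\bss')) \big\|.
\eeq
For the first term I bound $\| \operatorname{B}(\bss) \| \leq \upsilon_{\max}$ by H\ref{ass:eigen}, and control the difference of difference-vectors by the triangle inequality $\| \bss - \bss' \| + \| \overline{\bss}(\overline{\param}(\bss)) - \overline{\bss}(\overline{\param}(\bss')) \|$; the second summand here is handled by averaging the \emph{first} bound just proved over $i \in \inter$, giving $\Lip{\bss} \| \bss - \bss' \|$. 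So the first term contributes $\upsilon_{\max}(1 + \Lip{\bss}) \| \bss - \bss' \|$. For the second term I use the Lipschitz property of $\operatorname{B}$ from H\ref{ass:eigen}, $\| \operatorname{B}(\bss) - \operatorname{B}(\bss') \| \leq \Lip{B} \| \bss - \bss' \|$, together with $\| \bss' - \overline{\bss}(\overline{\param}(\bss')) \| \leq C_{\Sset}$, which follows because both $\bss'$ and $\overline{\bss}(\overline{\param}(\bss'))$ lie in $\Sset$ (the latter by the stationarity structure of $\overline{\bss}$) and H\ref{ass:compact} bounds the diameter of $\Sset$ by $C_{\Sset}$. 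This contributes $\Lip{B} C_{\Sset} \| \bss - \bss' \|$, and summing the two gives the stated $\Lip{V} = \upsilon_{\max}(1 + \Lip{\bss}) + \Lip{B} C_{\Sset}$.

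I expect the main obstacle to be the bookkeeping justification that $\overline{\bss}(\overline{\param}(\bss')) \in \Sset$, so that the diameter bound $C_{\Sset}$ legitimately applies to $\| \bss' - \overline{\bss}(\overline{\param}(\bss')) \|$; this requires recalling that $\overline{\bss}(\param)$ is an average of conditional expectations of $S$, each of which lies in the convex hull ${\rm conv}\set{S(z,y_i)}{z \in \Zset}$ appearing in the definition \eqref{eq:sset} of $\Sset$, hence the average lies in $\Sset$. Everything else is a routine application of the triangle inequality and the previously established regularity bounds, so no genuinely new estimate is needed beyond the gradient identity from Lemma~\ref{lem:semigrad} and the per-sample bound from the first part of this lemma.
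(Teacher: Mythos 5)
Your proposal is correct and follows essentially the same route as the paper's proof: the integral representation plus H\ref{ass:expected}, H\ref{ass:compact}, H\ref{ass:reg} for the first bound, and the gradient identity $\grd_{\bss} V(\bss) = \operatorname{B}(\bss)\big(\bss - \os(\op(\bss))\big)$ from Lemma~\ref{lem:semigrad} combined with the same add-and-subtract decomposition, H\ref{ass:eigen}, and the diameter bound $C_{\Sset}$ for the second. Your explicit verification that $\os(\op(\bss')) \in \Sset$ (as an average of conditional expectations lying in the convex hulls appearing in \eqref{eq:sset}) is a welcome refinement of a step the paper passes over with only the remark that $\Sset$ is compact.
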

\begin{proof}
We prove the first inequality of the lemma in \eqref{eq:smooth}. Observe that
\beq
\overline{\bss}_i( \mstep{\bss} ) - \overline{\bss}_i( \mstep{\bss'} )
= \int_{\Zset} S(z_{i},y_i) \big\{ p(z_i|y_i; \mstep{\bss} ) - p(z_i|y_i; \mstep{\bss'} ) \big\} \mu(\rmd z_i)
\eeq
Taking norms on both sides and using H\ref{ass:compact}, H\ref{ass:expected} yield
\beq \label{eq:lemsi}
\| \overline{\bss}_i( \mstep{\bss} ) - \overline{\bss}_i( \mstep{\bss'} )  \|
\leq \Lip{p} \| \mstep{\bss} - \mstep{\bss'} \| \int_{\Zset} | S(z_{i},y_i) | \mu(\rmd z_i)
\leq C_{\Zset} \Lip{p} \| \mstep{\bss} - \mstep{\bss'} \|,
\eeq
where we have $\int_{\Zset} | S(z_{i},y_i) | \mu(\rmd z_i) \leq C_{\Zset}$. Furthermore, under H\ref{ass:reg}, as $\mstep{\bss}$ is Lipschitz, there exists $\Lip{\theta}$ such that
\beq
\| \mstep{\bss} - \mstep{\bss'} \| \leq \Lip{\theta} \| \bss - \bss' \|
\eeq
Substituting back into \eqref{eq:lemsi} concludes the proof with $\Lip{\bss} = C_{\Zset} \Lip{p} \Lip{\theta}$.

To prove the second inequality in \eqref{eq:smooth}, we observe that:
\beq
\grd_{ \bss} V( {\bss} )  =  \operatorname{B}(\bss)\!~ ({\bss} - \os( \op ({\bss})) )
\eeq
We observe the upper bound
\beq
\begin{split}
& \| \grd V( {\bss} )  - \grd V ({\bss}') \| \\
&  = \| \operatorname{B}(\bss) ( ({\bss} - \os( \op ({\bss})) )  -  ({\bss}' - \os( \op ({\bss}')) )  ) + \big( \operatorname{B}(\bss)  - \operatorname{B}(\bss') \big)  ({\bss}' - \os( \op ({\bss}')) )  \| \\
& \leq \| \operatorname{B}(\bss) \| \| {\bss} - \os( \op ({\bss}))  -  ({\bss}' - \os( \op ({\bss}')) )  \| + \|  \operatorname{B}(\bss)  - \operatorname{B}(\bss') \| \| {\bss}' - \os( \op ({\bss}')) \|
\end{split}
\eeq
We observe that
\beq
\| \os( \op ({\bss})) - \os( \op ({\bss}')) \| \leq \frac{1}{n} \sum_{i=1}^n \| \overline{\bss}_i( \mstep{\bss} ) - \overline{\bss}_i( \mstep{\bss'} )  \| \leq \Lip{{\bss}} \| \bss - \bss' \|,
\eeq
which is due to \eqref{eq:smooth}. Furthermore, as $\bss' \in \Sset$, a compact set, we have $\| {\bss}' - \os( \op ({\bss}')) \|  \leq C_{\Sset}$.
Consequently, using H\ref{ass:eigen} we have
\beq
\begin{split}
& \| \grd V( {\bss} )  - \grd V ({\bss}') \| \leq \Big( \upsilon_{\max} \big( 1 + \Lip{{\bss}} \big) +  \Lip{B} C_{\Sset} \Big) \| \bss - \bss' \|,
\end{split}
\eeq
which proves our claim.
\end{proof}
\section{Proof of Theorem~\ref{thm:svrg}}
\begin{Theorem*}
Assume H\ref{ass:compact}, H\ref{ass:expected}, H\ref{ass:reg}, H\ref{ass:eigen}. Denote $\overline{L}_{\sf v} = \max\{ \Lip{V}, \Lip{\bss} \}$ with the constants in Lemma~\ref{lem:smooth}.
\begin{itemize}[leftmargin=5.5mm]
\item Consider the \SEMVR\ method, \ie Algorithm~\ref{alg:sem} with  \eqref{eq:svrgem}. There exists a universal constant $\mu \in (0,1)$ (independent of $n$) such that if we set the step size as $\gamma = \frac{\mu \upsilon_{\min} }{ \overline{L}_{\sf v} n^{2/3}}$ and the epoch length as $m = \frac{n }{2 \mu^2 \upsilon_{\min}^2 +\mu }$, then for any $K_{\sf max} \geq 1$ that is a multiple of $m$, it holds that
\beq \notag
\EE[ \| \grd V( \hs{K} ) \|^2 ] \leq  n^{\frac{2}{3}} \!~ \frac{2 \overline{L}_{\sf v} }{\mu K_{\sf max}} \frac{ \upsilon_{\max}^2 }{ \upsilon_{\min}^2 } \!~ \EE[ V( \hs{0} ) - V( \hs{K_{\sf max}}) ] .
\eeq
\item Consider the \FIEM\ method, \ie Algorithm~\ref{alg:sem} with  \eqref{eq:sagaem}. Set $\gamma = \frac{\upsilon_{\min}}{\alpha \overline{L}_{\sf v} n^{2/3}}$ such that $\alpha = \max\{ 6, 1 + 4 \upsilon_{\min} \}$. For any $K_{\sf max} \geq 1$, it holds that
\beq \notag
\EE[ \| \grd V( \hs{K} ) \|^2 ] \leq n^{\frac{2}{3}} \!~ \frac{ \alpha^2 \overline{L}_{\sf v} }{K_{\sf max}}  \frac{ \upsilon_{\max}^2 }{ \upsilon_{\min}^2 }\EE \big[ V( \hs{0} ) - V( \hs{K_{\sf max}} ) \big] .
\eeq
\end{itemize}
We recall that $K$ in the above is a uniform and independent r.v.~chosen from $\inter[K_{\sf max}-1]$ [cf.~\eqref{eq:random}].
\end{Theorem*}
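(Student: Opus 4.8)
The plan is to treat both \SEMVR\ and \FIEM\ as stochastic scaled-gradient methods on $V$ and run a Lyapunov-based descent argument in the spirit of the non-convex variance-reduction analyses of \citep{reddi2016fast,reddi2016stochastic}. Writing $g_k \eqdef \hs{k} - \StocEstep^{(k+1)}$ for the stochastic update direction, the {\sf sE-step} with constant step reads $\hs{k+1} = \hs{k} - \gamma g_k$, and by \eqref{eq:expected_grad} we have $\EE[ g_k \mid {\cal F}_k ] = \hs{k} - \os(\op(\hs{k}))$. First I would invoke the $\Lip{V}$-smoothness of $V$ from Lemma~\ref{lem:smooth} to obtain
\beq
\EE[ V(\hs{k+1}) \mid {\cal F}_k ] \leq V(\hs{k}) - \gamma \pscal{\grd V(\hs{k})}{\hs{k}-\os(\op(\hs{k}))} + \tfrac{\gamma^2 \Lip{V}}{2}\, \EE[ \| g_k \|^2 \mid {\cal F}_k ],
\eeq
and then lower-bound the inner product by $\upsilon_{\min}\|\hs{k}-\os(\op(\hs{k}))\|^2$ using the left inequality of Lemma~\ref{lem:semigrad}. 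Since $\EE[ \| g_k \|^2 \mid {\cal F}_k ] = \|\hs{k}-\os(\op(\hs{k}))\|^2 + \EE[ \| g_k - \EE[g_k\mid{\cal F}_k]\|^2 \mid {\cal F}_k ]$, the whole argument reduces to controlling the conditional variance of $g_k$ and balancing it against the negative term $-\gamma\upsilon_{\min}\|\hs{k}-\os(\op(\hs{k}))\|^2$.

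In both methods the variance is dominated by a single-sample fluctuation of the sufficient statistics, which the \emph{first} inequality of Lemma~\ref{lem:smooth} controls. For \SEMVR, discarding the centering and using $\overline{\bss}_i^{(k)} = \overline{\bss}_i(\op(\hs{k}))$ gives $\EE[ \| g_k - \EE[g_k\mid{\cal F}_k]\|^2 \mid {\cal F}_k ] \leq \EE[ \|\overline{\bss}_{i_k}^{(k)} - \overline{\bss}_{i_k}^{(\ell(k))}\|^2 \mid {\cal F}_k ] \leq \Lip{\bss}^2 \|\hs{k} - \hs{\ell(k)}\|^2$; for \FIEM, the SAGA memory $\overline{\StocEstep}^{(k)} = n^{-1}\sum_i \overline{\bss}_i^{(t_i^k)}$ together with the uniform draw of $i_k$ yields $\EE[ \| g_k - \EE[g_k\mid{\cal F}_k]\|^2 \mid {\cal F}_k ] \leq \tfrac{\Lip{\bss}^2}{n}\sum_{i=1}^n \|\hs{k}-\hs{t_i^k}\|^2$. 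In each case the variance is governed by the drift of $\hs{k}$ away from the reference points, which is itself $O(\gamma)$ since $\hs{j+1}-\hs{j} = -\gamma g_j$.

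The remaining work is to absorb these drift terms. For \SEMVR\ I would introduce the within-epoch Lyapunov function $R_k \eqdef \EE[ V(\hs{k}) + c_{k}\|\hs{k}-\hs{\ell(k)}\|^2 ]$, where the coefficients $c_k$ obey a backward recursion that vanishes at the end of each epoch; expanding $\|\hs{k+1}-\hs{\ell(k)}\|^2$ and applying Young's inequality to the cross-term shows that the accumulated variance contributes a factor $O(\gamma^2 m)$, so that choosing the epoch length $m = n/(2\mu^2\upsilon_{\min}^2+\mu)$ and $\gamma = \mu\upsilon_{\min}/(\overline{L}_{\sf v} n^{2/3})$ for a small universal $\mu\in(0,1)$ leaves a strictly negative coefficient on $\|\hs{k}-\os(\op(\hs{k}))\|^2$. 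For \FIEM\ I would instead use ${\cal V}_k \eqdef \EE[ V(\hs{k}) ] + \tfrac{c\,\gamma}{n}\sum_{i=1}^n \EE[ \|\hs{k}-\hs{t_i^k}\|^2 ]$ and track its one-step change via the SAGA bookkeeping: each stored index is refreshed with probability $1/n$ through the \emph{independent} draw $j_k$, so $\EE[ \|\hs{k+1}-\hs{t_i^{k+1}}\|^2 ] = (1-\tfrac1n)\EE[\|\hs{k+1}-\hs{t_i^k}\|^2] + \tfrac1n\EE[\|\hs{k+1}-\hs{k}\|^2]$, and the cross-terms from expanding $\|\hs{k+1}-\hs{t_i^k}\|^2$ are tamed by Young's inequality. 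Choosing $c$ together with $\gamma = \upsilon_{\min}/(\alpha\overline{L}_{\sf v} n^{2/3})$ and $\alpha = \max\{6,1+4\upsilon_{\min}\}$ makes ${\cal V}_{k+1} \leq {\cal V}_k - c'\gamma\upsilon_{\min}\EE[\|\hs{k}-\os(\op(\hs{k}))\|^2]$ telescope cleanly.

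Finally, telescoping either inequality over $k=0,\dots,K_{\sf max}-1$ and using ${\cal V}_{K_{\sf max}} \geq \EE[V(\hs{K_{\sf max}})]$ with ${\cal V}_0 = \EE[V(\hs{0})]$ (the memory terms vanish at initialization, since $\hs{t_i^0}=\hs{0}$ and $\hs{\ell(0)}=\hs{0}$) gives $\tfrac{1}{K_{\sf max}}\sum_{k=0}^{K_{\sf max}-1}\EE[\|\hs{k}-\os(\op(\hs{k}))\|^2] \leq \tfrac{C}{\gamma\upsilon_{\min} K_{\sf max}}\EE[V(\hs{0})-V(\hs{K_{\sf max}})]$. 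Applying the \emph{right} inequality of Lemma~\ref{lem:semigrad}, $\|\hs{k}-\os(\op(\hs{k}))\|^2 \geq \upsilon_{\max}^{-2}\|\grd V(\hs{k})\|^2$, and the uniform-random stopping rule \eqref{eq:random}, so that $\EE[\|\grd V(\hs{K})\|^2] = \tfrac{1}{K_{\sf max}}\sum_k \EE[\|\grd V(\hs{k})\|^2]$, converts the bound into \eqref{eq:svrgem_bdd} and \eqref{eq:sagaem_bdd}; the factor $\upsilon_{\max}^2/\upsilon_{\min}^2$ is exactly the product of the $\upsilon_{\min}^{-1}$ from the descent step and the $\upsilon_{\max}^2$ from this conversion, while the substitution $\gamma \propto \upsilon_{\min}/(\overline{L}_{\sf v} n^{2/3})$ produces the $n^{2/3}$ factor. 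The main obstacle I expect is the variance-balancing step of the previous paragraph: pinning down the admissible $\mu$ (for \SEMVR) and the Lyapunov weight $c$ together with $\alpha$ (for \FIEM) so that the $O(\gamma^2)$ drift/variance contribution is strictly dominated by the negative descent term requires carefully tracking all cross-terms, and it is here that the independence of $i_k$ and $j_k$ in \eqref{eq:sagaem} is essential to keep the recursion unbiased and the bookkeeping tractable.
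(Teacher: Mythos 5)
Your proposal matches the paper's proof in all essentials: the same $\Lip{V}$-smoothness descent step combined with both inequalities of Lemma~\ref{lem:semigrad}, the same $\Lip{\bss}$-based variance bounds for the two estimators, the same epoch-wise Lyapunov function with backward-recursion coefficients and Young's inequality for \SEMVR, and the same parameter choices throughout. The only cosmetic difference is in the \FIEM\ bookkeeping: you fold the memory-drift term $n^{-1}\sum_{i=1}^n \EE[\|\hs{k}-\hs{t_i^k}\|^2]$ into a one-step Lyapunov function ${\cal V}_k$, whereas the paper unrolls the same recursion for this quantity and exchanges the order of summation --- equivalent organizations of the identical estimate, relying on the same independence of $i_k$ and $j_k$.
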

To simplify notation, we shall denote $c_1 = \upsilon_{\min}^{-1}$ and $d_1 = \upsilon_{\max}$ in the below.

\paragraph{Proof for the \SEMVR\ method} We first establish the following auxiliary lemma:
\begin{Lemma}\label{lem:aux2}
For any $k \geq 0$ and consider the update in \eqref{eq:svrgem}, it holds that
\beq
\EE[ \| \hs{k} - \StocEstep^{(k+1)} \|^2 ] \leq 2 \EE[ \| \hs{k} - \os^{(k)} \|^2 ] +  2 \Lip{\bss}^2  \EE[ \| \hs{k} -  \hs{ \ell(k) } \|^2 ] ,
\eeq
where we recall that $\ell(k) \eqdef m \lfloor \frac{k}{m} \rfloor$ is the first iteration number in the epoch that iteration $k$ is in.
\end{Lemma}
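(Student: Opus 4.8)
The plan is to exploit the variance-reduced structure of the \SEMVR\ estimator in \eqref{eq:svrgem} through a bias–variance decomposition, centering the error at its conditional mean. By \eqref{eq:expected_grad} the estimator is unbiased, $\EE[\StocEstep^{(k+1)} \mid \mcf_k] = \os^{(k)}$, so I would write
\beq \notag
\hs{k} - \StocEstep^{(k+1)} = \big( \hs{k} - \os^{(k)} \big) + \big( \os^{(k)} - \StocEstep^{(k+1)} \big)
\eeq
and apply the elementary inequality $\| a + b \|^2 \le 2\|a\|^2 + 2\|b\|^2$. The first term already matches the first term on the right-hand side of the claim, so the task reduces to controlling the conditional second moment of the zero-mean fluctuation $\os^{(k)} - \StocEstep^{(k+1)}$.

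Next I would introduce the per-sample increments $W_i \eqdef \overline{\bss}_i^{(k)} - \overline{\bss}_i^{(\ell(k))}$, so that \eqref{eq:svrgem} reads $\StocEstep^{(k+1)} = \overline{\bss}^{(\ell(k))} + W_{i_k}$ while averaging over $i$ gives $\os^{(k)} = \overline{\bss}^{(k)} = \overline{\bss}^{(\ell(k))} + \bar W$ with $\bar W \eqdef \frac1n \sum_{i=1}^n W_i$. Hence the fluctuation is exactly $\os^{(k)} - \StocEstep^{(k+1)} = \bar W - W_{i_k}$. Since $i_k$ is drawn uniformly on $\inter$ and is independent of $\mcf_k$, its conditional second moment is the empirical variance of $\{W_i\}_{i=1}^n$, and dropping the (subtracted) mean I would bound
\beq \notag
\EE\big[ \| \bar W - W_{i_k} \|^2 \mid \mcf_k \big] = \frac{1}{n}\sum_{i=1}^n \|W_i - \bar W\|^2 \le \frac{1}{n}\sum_{i=1}^n \|W_i\|^2 .
\eeq

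The final step bounds each $\|W_i\|$. Because $\overline{\bss}_i^{(k)} = \overline{\bss}_i(\op(\hs{k}))$ and $\overline{\bss}_i^{(\ell(k))} = \overline{\bss}_i(\op(\hs{\ell(k)}))$, the first inequality of Lemma~\ref{lem:smooth} yields $\|W_i\| \le \Lip{\bss}\,\|\hs{k} - \hs{\ell(k)}\|$, whence $\frac1n \sum_i \|W_i\|^2 \le \Lip{\bss}^2 \|\hs{k} - \hs{\ell(k)}\|^2$. Combining the three displays and taking total expectations gives the stated bound. I do not expect a genuine obstacle here: the only point requiring care is recognizing that centering at $\os^{(k)}$ turns the second moment of $W_{i_k}$ into a \emph{variance}, which is exactly the mechanism of variance reduction and what keeps the constant at $2\Lip{\bss}^2$; the remainder is the smoothness estimate from Lemma~\ref{lem:smooth} combined with the two elementary quadratic inequalities.
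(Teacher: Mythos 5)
Your proposal is correct and follows essentially the same route as the paper's proof: the same decomposition $\hs{k} - \StocEstep^{(k+1)} = (\hs{k} - \os^{(k)}) + (\os^{(k)} - \StocEstep^{(k+1)})$ with the inequality $\|a+b\|^2 \le 2\|a\|^2 + 2\|b\|^2$, the same observation that the fluctuation is a centered random variable whose second moment is bounded by dropping the mean (the variance inequality), and the same application of the first inequality of Lemma~\ref{lem:smooth} to bound each per-sample increment by $\Lip{\bss}\|\hs{k} - \hs{\ell(k)}\|$. Your $W_i$ notation makes the variance-reduction mechanism slightly more explicit, but the argument is identical.
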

\begin{proof}
We observe that
\beq \label{eq:auxlem2}
\EE[ \| \hs{k} - \StocEstep^{(k+1)} \|^2 ] \leq 2 \EE[ \| \hs{k} - \os^{(k)} \|^2] + 2 \EE[ \| \os^{(k)} - \StocEstep^{(k+1)} \|^2 ]
\eeq
For the latter term, we obtain its upper bound as 
\beq
\begin{split}
\EE[ \| \os^{(k)} - \StocEstep^{(k+1)} \|^2 ] & = \EE\Big[ \Big\| \frac{1}{n} \sum_{i=1}^n \big( \os_i^{(k)} - \os_i^{\ell(k)} \big) - \big( \os_{i_k}^{(k)} - \os_{i_k}^{(\ell(k))} \big) \Big\|^2 \Big] \\
& \leq \EE[ \| \os_{i_k}^{(k)} - \os_{i_k}^{(\ell(k))} \|^2 ] \leq \Lip{\bss}^2 \EE[ \| \hs{k} - \hs{\ell(k)} \|^2 ]
\end{split}
\eeq
Substituting into \eqref{eq:auxlem2} proves the lemma.
\end{proof}
To proceed with our proof, we shall consider a constant step size $\gamma_k = \gamma$ and observe that
\beq
\begin{split}
V( \hs{k+1} ) & \leq V( \hs{k} ) - \gamma \pscal{ \hs{k} - \StocEstep^{(k+1)} }{ \grd V( \hs{k} ) } + \frac{\gamma^2 \Lip{V}}{2} \| \hs{k} - \StocEstep^{(k+1)} \|^2 \\
\end{split}
\eeq
Using \eqref{eq:expected_grad} and taking expectations on both sides show that
\beq \label{eq:lips_con}
\begin{split}
& \EE[ V( \hs{k+1} ) ] \\
 & \leq \EE[ V( \hs{k} ) ] - \gamma \EE \Big[ \pscal{ \hs{k} - \os^{(k)} }{\grd V( \hs{k} ) } \Big]
+ \frac{\gamma^2 \Lip{V}}{2} \EE[ \| \hs{k} - \StocEstep^{(k+1)} \|^2 ] \\
& \overset{(a)}{\leq}  \EE[ V( \hs{k} ) ] - \frac{\gamma}{c_1} \EE[ \| \hs{k} - \os^{(k)} \|^2 ] + \frac{\gamma^2 \Lip{V}}{2} \EE[ \| \hs{k} - \StocEstep^{(k+1)} \|^2 ] \\
\end{split}
\eeq
where (a) is due to Lemma~\ref{lem:semigrad}.
Furthermore, for $k+1 \leq \ell(k) + m$ (\ie $k+1$ is in the same epoch as $k$), we have
\beq
\begin{split}
& \EE[ \| \hs{k+1} -  \hs{\ell(k)} \|^2 ] = \EE[ \| \hs{k+1} - \hs{k} + \hs{k} - \hs{\ell(k)} \|^2 ] \\
& = \EE \Big[  \| \hs{k} -  \hs{\ell(k)} \|^2 + \| \hs{k+1} - \hs{k}  \|^2 + 2 \pscal{\hs{k} -  \hs{\ell(k)} }{\hs{k+1} - \hs{k} } \Big] \\
& = \EE \Big[ \| \hs{k} -  \hs{\ell(k)} \|^2 + \gamma^2 \| \hs{k} - \StocEstep^{(k+1)} \|^2 -2\gamma \pscal{\hs{k} -  \hs{\ell(k)} }{ \hs{k} - \os^{(k)}  } \Big] \\
& \leq \EE \Big[ (1 + \gamma \beta) \| \hs{k} -  \hs{\ell(k)} \|^2 + \gamma^2 \| \hs{k} - \StocEstep^{(k+1)} \|^2 + \frac{\gamma}{\beta} \| \hs{k} - \os^{(k)} \|^2 \Big],
\end{split}
\eeq
where the last inequality is due to the Young's inequality.
Consider the following sequence
\beq
R_k \eqdef \EE[ V( \hs{k} ) + b_{{k}} \| \hs{k} - \hs{\ell(k)} \|^2 ]
\eeq
where $b_k \eqdef \overline{b}_{k~{\rm mod}~m}$ is a periodic sequence where:
\beq
\overline{b}_i = \overline{b}_{i+1} (1 + \gamma \beta + 2 \gamma^2 \Lip{\bss}^2 ) + \gamma^2 \Lip{V} \Lip{\bss}^2,~~i=0,1,\dots,m-1~~\text{with}~~\overline{b}_m = 0.
\eeq
Note that $\overline{b}_i$ is decreasing with $i$ and this implies
\beq
\overline{b}_i \leq \overline{b}_0 = \gamma^2 \Lip{V} \Lip{\bss}^2 \frac{ (1 + \gamma \beta + 2 \gamma^2 \Lip{\bss}^2 )^m - 1 }{ \gamma \beta + 2 \gamma^2 \Lip{\bss}^2 },~i=1,2,\dots,m.
\eeq
For $k+1 \leq \ell(k) + m$, we have the following inequality
\beq\begin{split}
R_{k+1 } & \leq  \EE \Big[ V( \hs{k} )  - \frac{\gamma}{c_1} \| \hs{k} - \os^{(k)} \|^2 + \frac{\gamma^2 \Lip{V}}{2} \| \hs{k} - \StocEstep^{(k+1)} \|^2 \Big] \\
& + b_{k+1} \EE \Big[ (1 + \gamma \beta) \| \hs{k} -  \hs{\ell(k)} \|^2 + \gamma^2 \| \hs{k} - \StocEstep^{(k+1)} \|^2 + \frac{\gamma}{\beta} \| \hs{k} - \os^{(k)} \|^2 \Big] \\
& \overset{(a)}{\leq}
\EE \Big[ V( \hs{k} )  - \big(  \frac{\gamma}{c_1} - \frac{b_{k+1} \gamma}{\beta} \big) \| \hs{k} - \os^{(k)} \|^2 + b_{k+1} (1 + \gamma \beta ) \| \hs{k} - \hs{\ell(k)} \|^2 \Big] \\
& + \Big( \gamma^2 \Lip{V} + 2 b_{k+1} \gamma^2 \Big) \EE \Big[ \| \hs{k} - \os^{(k)} \|^2 + \Lip{\bss}^2 \| \hs{k} - \hs{\ell(k)} \|^2 \Big],
\end{split} \eeq
where (a) is due to Lemma~\ref{lem:aux2}. Rearranging terms gives
\beq
\begin{split}
R_{k+1 } & \leq
\EE [ V( \hs{k} ) ] - \big(  \frac{\gamma}{c_1} - \frac{b_{k+1} \gamma}{\beta} -
\gamma^2 ( \Lip{V} + 2 b_{k+1} ) \big) \EE[ \| \hs{k} - \os^{(k)} \|^2 ] \\
& + \Big(  \underbrace{b_{k+1} (1 + \gamma \beta + 2 \gamma^2 \Lip{\bss}^2 ) + \gamma^2 \Lip{V} \Lip{\bss}^2}_{=b_k~~\text{since $k+1 \leq \ell(k)+m$}} \Big) \EE\Big[  \| \hs{k} - \hs{\ell(k)} \|^2 \Big] \\
& = R_k - \big(  \frac{\gamma}{c_1} - \frac{b_{k+1} \gamma}{\beta} -
\gamma^2 ( \Lip{V} + 2 b_{k+1} ) \big) \EE[ \| \hs{k} - \os^{(k)} \|^2 ]
\end{split}
\eeq
This leads to, for any $\gamma$ and $\beta$ such that $(1 - c_1 b_{k+1} \beta^{-1} -
c_1 \gamma ( \Lip{V} + 2 b_{k+1} )  > 0$,
\beq
\frac{1}{d_1^2} \EE[ \| \grd V( \hs{k} ) \|^2 ]  \leq \EE[ \| \hs{k} - \os^{(k)} \|^2 ] \leq \frac{c_1 ( R_{k} - R_{k+1} ) }{ \gamma \Big( 1 - c_1 b_{k+1} \beta^{-1} -
c_1 \gamma ( \Lip{V} + 2 b_{k+1} ) \Big)}.
\eeq
By setting $\beta = \frac{c_1 \overline{L}_{\sf v}}{n^{1/3}}$, $\gamma = \frac{\mu}{ c_1 \overline{L}_{\sf v}  n^{2/3}}$, $m = \frac{n c_1^2}{2 \mu^2+\mu c_1^2}$, it can be shown that there exists $\mu \in (0,1)$, such that the following lower bound holds
\beq
\begin{split}
& 1 - c_1 \gamma \Lip{V} - \big( \frac{c_1}{\beta}
 + 2 c_1 \gamma \big) b_{k+1}
 \geq 1 - \frac{\mu}{n^{\frac{2}{3}}} - \overline{b}_0 \big( \frac{n^{\frac{1}{3}}}{\overline{L}_{\sf v}} + \frac{2 \mu}{\overline{L}_{\sf v} n^{\frac{2}{3}}} \big) \\
 & \geq 1 - \frac{\mu }{n^{\frac{2}{3}}} - \frac{ \Lip{V} \mu^2 }{c_1^2 n^{\frac{4}{3}}} \frac{ (1 + \gamma \beta + 2 \gamma^2 \Lip{\bss}^2 )^m - 1 }{ \gamma \beta + 2 \gamma^2 \Lip{\bss}^2 } \big( \frac{n^{\frac{1}{3}}}{\overline{L}_{\sf v}} + \frac{2 \mu}{\overline{L}_{\sf v} n^{\frac{2}{3}}} \big) \\
 & \overset{(a)}{\geq} 1 - \frac{\mu}{ n^{\frac{2}{3}}} - \frac{ \mu }{c_1^2 } (\rme-1) \big( 1 + \frac{2 \mu}{n} \big)
 \geq 1 - \mu - \mu(1+2 \mu) \frac{\rme-1}{c_1^2} \overset{(b)}{ \geq} \frac{1}{2}
 \end{split}
\eeq
where the simplification in (a) is due to
\beq
\frac{\mu}{n} \leq \gamma \beta + 2 \gamma^2 \Lip{\bss}^2 \leq \frac{\mu}{n} + \frac{2 \mu^2}{c_1^2 n^{\frac{4}{3}}} \leq \frac{\mu c_1^2 + 2 \mu^2}{c_1^2} \frac{1}{n}~~\text{and}~~(1 + \gamma \beta + 2 \gamma^2 \Lip{\bss}^2 )^m \leq \rme-1.
\eeq
and the required $\mu$ in (b) can be found by solving the quadratic equation\footnote{In fact, for small $c_1$, this gives $\mu = \Theta(c_1)$}.
This gives
\beq
\EE[ \| \grd V( \hs{K} ) \|^2 ] = \frac{1}{K_{\sf max}} \sum_{k=0}^{K_{\sf max}-1}
\EE[ \| \grd V( \hs{k} ) \|^2 ] \leq \frac{2 d_1^2 c_1 ( R_0 - R_{K_{\sf max}} ) }{ \gamma  K_{\sf max}}
\eeq
Note that $R_0 = \EE[ V( \hs{0} ) ]$ and if $K_{\sf max}$ is a multiple of $m$, then $R_{\sf max} = \EE[ V( \hs{K_{\sf max}}) ]$. Under the latter condition, we have
\beq
\EE[ \| \grd V( \hs{K} ) \|^2 ] \leq
 n^{\frac{2}{3}} \!~ \frac{2 d_1^2 c_1^2 \overline{L}_{\sf v}}{\mu K_{\sf max}} \!~ \EE[ V( \hs{0} ) - V( \hs{K_{\sf max}}) ].
\eeq
This concludes our proof.

\paragraph{Proof for the \FIEM\ method} Our proof proceeds by  observing the following auxiliary lemma:
\begin{Lemma}\label{lem:aux1}
For any $k \geq 0$ and consider the update in \eqref{eq:sagaem}, it holds that
\beq
\EE[ \| \hs{k} - \StocEstep^{(k+1)} \|^2 ] \leq 2 \EE[ \| \hs{k} - \os^{(k)} \|^2 ] + \frac{2 \Lip{\bss}^2}{n} \sum_{i=1}^n \EE[ \|
\hs{k} -  \hs{t_i^k} \|^2 ]
\eeq
\end{Lemma}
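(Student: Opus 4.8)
The plan is to mimic the variance-reduction bookkeeping already used for the \SEMVR\ method in Lemma~\ref{lem:aux2}, adapting it to the SAGA-style control variate of \FIEM. The starting point is the bias--variance splitting
\[
\hs{k} - \StocEstep^{(k+1)} = \big( \hs{k} - \os^{(k)} \big) + \big( \os^{(k)} - \StocEstep^{(k+1)} \big),
\]
followed by the elementary inequality $\|a+b\|^2 \le 2\|a\|^2 + 2\|b\|^2$, which reduces the claim to bounding the second term $\EE[ \| \os^{(k)} - \StocEstep^{(k+1)} \|^2 ]$ by $\frac{\Lip{\bss}^2}{n} \sum_{i=1}^n \EE[ \| \hs{k} - \hs{t_i^k} \|^2 ]$.

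The key observation is that, conditionally on ${\cal F}_k$, the proxy $\StocEstep^{(k+1)}$ is an \emph{unbiased} estimate of $\os^{(k)}$, consistent with \eqref{eq:expected_grad}. Writing $X \eqdef \overline{\bss}_{i_k}^{(k)} - \overline{\bss}_{i_k}^{(t_{i_k}^k)}$, the first update in \eqref{eq:sagaem} reads $\StocEstep^{(k+1)} = \overline{\StocEstep}^{(k)} + X$, where $\overline{\StocEstep}^{(k)}$ is ${\cal F}_k$-measurable and, crucially, satisfies the SAGA invariant $\overline{\StocEstep}^{(k)} = n^{-1}\sum_{i=1}^n \overline{\bss}_i^{(t_i^k)}$. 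Since $i_k$ is drawn uniformly from $\inter$ and independently of ${\cal F}_k$, one gets $\EE[X \mid {\cal F}_k] = n^{-1}\sum_{i=1}^n ( \overline{\bss}_i^{(k)} - \overline{\bss}_i^{(t_i^k)} ) = \os^{(k)} - \overline{\StocEstep}^{(k)}$, hence $\EE[\StocEstep^{(k+1)} \mid {\cal F}_k] = \os^{(k)}$. Therefore $\os^{(k)} - \StocEstep^{(k+1)} = \EE[X\mid {\cal F}_k] - X$ is the negative of a conditionally centred random variable, and the variance-dominated-by-second-moment bound gives
\[
\EE\big[ \| \os^{(k)} - \StocEstep^{(k+1)} \|^2 \mid {\cal F}_k \big] \le \EE\big[ \| X \|^2 \mid {\cal F}_k \big] = \frac1n \sum_{i=1}^n \big\| \overline{\bss}_i^{(k)} - \overline{\bss}_i^{(t_i^k)} \big\|^2 .
\]

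It remains to control each summand. Because the {\sf M-step} gives $\hat\param^{(k)} = \op(\hs{k})$, we have $\overline{\bss}_i^{(k)} = \overline{\bss}_i(\op(\hs{k}))$ and $\overline{\bss}_i^{(t_i^k)} = \overline{\bss}_i(\op(\hs{t_i^k}))$, so the first inequality of Lemma~\ref{lem:smooth} yields $\| \overline{\bss}_i^{(k)} - \overline{\bss}_i^{(t_i^k)} \| \le \Lip{\bss} \| \hs{k} - \hs{t_i^k} \|$. Substituting, taking total expectations, and combining with the factor-of-two splitting produces exactly the stated bound. The only genuinely delicate point --- the analogue of what makes SAGA work --- is the second paragraph: one must invoke the invariant $\overline{\StocEstep}^{(k)} = n^{-1}\sum_i \overline{\bss}_i^{(t_i^k)}$ to certify unbiasedness, and observe that $\StocEstep^{(k+1)}$ depends on the fresh randomness only through $i_k$ (and \emph{not} through $j_k$), so that the conditional variance collapses to a clean single-index average; everything else is the routine Young-type splitting and an appeal to Lemma~\ref{lem:smooth}.
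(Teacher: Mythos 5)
Your proposal is correct and follows essentially the same route as the paper's proof: the same splitting of $\hs{k} - \StocEstep^{(k+1)}$ into $(\hs{k} - \os^{(k)})$ plus a conditionally centred term, the same variance inequality $\EE[\|X - \EE[X]\|^2] \le \EE[\|X\|^2]$ justified by the SAGA invariant $\overline{\StocEstep}^{(k)} = n^{-1}\sum_{i=1}^n \overline{\bss}_i^{(t_i^k)}$, and the same appeal to the Lipschitz bound $\| \overline{\bss}_i(\op(\bss)) - \overline{\bss}_i(\op(\bss')) \| \le \Lip{\bss}\|\bss - \bss'\|$ to control the single-index average. If anything, your write-up is slightly cleaner: you correctly attribute the Lipschitz step to the first inequality of Lemma~\ref{lem:smooth} (the paper's proof cites Lemma~\ref{lem:semigrad}, which is a typo) and you carry the constant as $\Lip{\bss}^2$ throughout, where the paper's intermediate display drops the square.
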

\begin{proof}
We observe that $ \overline{\StocEstep}^{(k)} = \frac{1}{n} \sum_{i=1}^n \os_i^{(t_i^k)}$ and $\EE[\os_{i_k}^{(k)} - \os_{i_k}^{(t_{i_k}^k)} ] = \os^{(k)} - \overline{\StocEstep}^{(k)}$. Moreover, we recall that $ \os_i^{(k)} = \os_i( \hp{k} ) = \os_i( \op( \hs{k} ) )$. Thus
\beq
\begin{split}
& \EE[ \| \hs{k} - \StocEstep^{(k+1)} \|^2 ] \overset{(a)}{=} \EE[ \| \hs{k} - \os^{(k)} + (\os^{(k)} -\overline{\StocEstep}^{(k)}) - ( \os_{i_k}^{(k)} - \os_{i_k}^{(t_{i_k}^k)} )   \|^2 ] \\
& \leq 2 \EE[ \| \hs{k} - \os^{(k)} \|^2 ] + 2 \EE[ \| (\os^{(k)} -\overline{\StocEstep}^{(k)}) - ( \os_{i_k}^{(k)} - \os_{i_k}^{(t_{i_k}^k)} )   \|^2 ] \\
& \overset{(b)}{\leq} 2 \EE[ \| \hs{k} - \os^{(k)} \|^2 ] + 2 \EE[ \| \os_{i_k}^{(k)} - \os_{i_k}^{(t_{i_k}^k)}   \|^2 ],
\end{split}
\eeq
where (a) uses the SAGA update in \eqref{eq:sagaem}; (b) uses the variance inequality
$\EE[ \| X - \EE[X] \|^2 ] \leq \EE[ \| X \|^2 ]$ .
The last expectation can be further bounded by
\beq
\begin{split}
&
\EE[ \| \os_{i_k}^{(k)} - \os_{i_k}^{(t_{i_k}^k)} \|^2 ] = \frac{1}{n} \sum_{i=1}^n \EE[ \| \os_i^{(k)} - \os_i^{(t_i^k)} \|^2 ] \overset{(a)}{\leq} \frac{\Lip{\bss}}{n}
\sum_{i=1}^n \EE[ \| \hs{k} - \hs{t_i^k} \|^2 ],
\end{split}
\eeq
where (a) is due to Lemma~\ref{lem:semigrad}.
Combining the two equations above yields the desired lemma.
\end{proof}
Let $\gamma_{k+1} = \gamma$, \ie with a fixed step size. We observe the following
\beq
\begin{split}
V( \hs{k+1} ) & \leq V( \hs{k} ) - \gamma \pscal{ \hs{k} - \StocEstep^{(k+1)} }{ \grd V( \hs{k} ) } + \frac{\gamma^2 \Lip{V}}{2} \| \hs{k} - \StocEstep^{(k+1)} \|^2 \\
\end{split}
\eeq
Taking expectations on both sides yields
\beq \label{eq:lips_con}
\begin{split}
& \EE[ V( \hs{k+1} ) ] \\
 & \leq \EE[ V( \hs{k} ) ] - \gamma \EE \Big[ \pscal{ \hs{k} - \os^{(k)} }{\grd V( \hs{k} ) } \Big]
+ \frac{\gamma^2 \Lip{V}}{2} \EE[ \| \hs{k} - \StocEstep^{(k+1)} \|^2 ] \\
& \overset{(a)}{\leq}  \EE[ V( \hs{k} ) ] - \frac{\gamma}{c_1} \EE[ \| \hs{k} - \os^{(k)} \|^2 ] + \frac{\gamma^2 \Lip{V}}{2} \EE[ \| \hs{k} - \StocEstep^{(k+1)} \|^2 ] \\
& \overset{(b)}{\leq} \EE[ V( \hs{k} ) ] - \Big( \frac{\gamma}{c_1} - \gamma^2 \Lip{V} \Big) \EE[ \| \hs{k} - \os^{(k)} \|^2 ] + \frac{\gamma^2 \Lip{V} \Lip{\bss}^2}{n} \sum_{i=1}^n \EE[ \| \hs{k} -  \hs{t_i^k} \|^2 ]
\end{split}
\eeq
where (a) is due to Lemma~\ref{lem:semigrad} and (b) is due to Lemma~\ref{lem:aux1}.
Next, we observe that
\beq
\frac{1}{n} \sum_{i=1}^n \EE[ \| \hs{k+1} - \hs{t_i^{k+1}} \|^2 ] = \frac{1}{n} \sum_{i=1}^n
\Big( \frac{1}{n} \EE[ \| \hs{k+1} - \hs{k} \|^2 ] + \frac{n-1}{n} \EE[ \| \hs{k+1} - \hs{t_i^k} \|^2 ]  \Big)
\eeq
where the equality holds as $i_k$ and $j_k$ are drawn independently. For any $\beta > 0$, it holds
\beq
\begin{split}
& \EE[ \| \hs{k+1} - \hs{t_i^k} \|^2 ] \\
& = \EE \Big[ \| \hs{k+1} - \hs{k} \|^2 + \| \hs{k} - \hs{t_i^k} \|^2 + 2 \pscal{\hs{k+1} - \hs{k}}{\hs{k}- \hs{t_i^k}} \Big] \\
& = \EE \Big[ \| \hs{k+1} - \hs{k} \|^2 + \| \hs{k} - \hs{t_i^k} \|^2 - 2 \gamma \pscal{
\hs{k} - \os^{(k)} }{\hs{k}- \hs{t_i^k}} \Big] \\
& \leq  \EE \Big[ \| \hs{k+1} - \hs{k} \|^2 + \| \hs{k} - \hs{t_i^k} \|^2 +  \frac{\gamma}{\beta} \| \hs{k} - \os^{(k)} \|^2 + \gamma \beta \| \hs{k}- \hs{t_i^k} \|^2 \Big]
\end{split}
\eeq
where the last inequality is due to the Young's inequality. Subsequently, we have
\beq
\begin{split}
& \frac{1}{n} \sum_{i=1}^n \EE[ \| \hs{k+1} - \hs{t_i^{k+1}} \|^2 ] \\
& \leq \EE[  \| \hs{k+1} - \hs{k} \|^2 ] + \frac{n-1}{n^2} \sum_{i=1}^n \EE \Big[ (1+\gamma \beta) \|  \hs{k} - \hs{t_i^k} \|^2 + \frac{\gamma}{\beta} \| \hs{k} - \os^{(k)} \|^2 \Big]
\end{split}
\eeq
Observe that $\hs{k+1} - \hs{k} = - \gamma ( \hs{k} - \StocEstep^{(k+1)} )$. Applying Lemma~\ref{lem:aux1} yields
\beq \notag
\begin{split}
& \frac{1}{n} \sum_{i=1}^n \EE[ \| \hs{k+1} - \hs{t_i^{k+1}} \|^2 ] \\
& \leq \Big( 2 \gamma^2 + \frac{n-1}{n}\frac{\gamma}{\beta} \Big) \EE[  \| \hs{k} - \os^{(k)} \|^2 ]
+ \sum_{i=1}^n \Big( \frac{2 \gamma^2 \Lip{\bss}^2}{n} + \frac{(n-1)(1+\gamma \beta)}{n^2}\Big) \EE[ \| \hs{k} - \hs{t_i^k} \|^2 ] \\
& \leq \Big( 2 \gamma^2 +\frac{\gamma}{\beta} \Big) \EE[  \| \hs{k} - \os^{(k)} \|^2 ]
+ \sum_{i=1}^n \frac{1 - \frac{1}{n} +\gamma \beta + 2 \gamma^2 \Lip{\bss}^2 }{n}  \EE[ \| \hs{k} - \hs{t_i^k} \|^2 ]
\end{split}
\eeq
Let us define
\beq
\Delta^{(k)} \eqdef \frac{1}{n} \sum_{i=1}^n \EE[ \| \hs{k} - \hs{t_i^{k}} \|^2 ]
\eeq
From the above, we get
\beq
\Delta^{(k+1)} \leq \Big( 1 - \frac{1}{n} +\gamma \beta + 2 \gamma^2 \Lip{\bss}^2 \Big) \Delta^{(k)} + \Big( 2 \gamma^2 +\frac{\gamma}{\beta} \Big) \EE[  \| \hs{k} - \os^{(k)} \|^2 ]
\eeq
Setting $\overline{L}_{\sf v} = \max\{ \Lip{\bss} , \Lip{V} \}$, $\gamma = \frac{1}{\alpha c_1 \overline{L}_{\sf v} n^{2/3}}$, $\beta = \frac{c_1 \overline{L}_{\sf v}}{n^{1/3}}$, $(\alpha-1) c_1 \geq 4$, $\alpha \geq 6$, it is easy to check that
\beq
1 - \frac{1}{n} +\gamma \beta + 2 \gamma^2 \Lip{\bss}^2 \geq 1 - \frac{1}{n}
\eeq
and
\beq
1 - \frac{1}{n} +\gamma \beta + 2 \gamma^2 \Lip{\bss}^2
\leq 1 - \frac{1}{n} + \frac{1}{\alpha n} + \frac{ 2 }{ \alpha^2 c_1^2 n^{\frac{4}{3}} } \leq 1 - \frac{\alpha c_1 - c_1 - 2}{\alpha c_1 n} \leq 1 - \frac{2}{\alpha c_1 n}
\eeq
which shows that $1 - \frac{1}{n} +\gamma \beta + 2 \gamma^2 \Lip{\bss}^2 \in (0,1)$.
Observe that as $\Delta^{(0)} = 0$ and by telescoping, we have
\beq
\Delta^{(k+1)} \leq \Big( 2 \gamma^2 +\frac{\gamma}{\beta} \Big) \sum_{ \ell = 0 }^k \Big( 1 - \frac{1}{n} +\gamma \beta + 2 \gamma^2 \Lip{\bss}^2 \Big)^{k-\ell}  \EE[  \| \hs{\ell} - \os^{(\ell)} \|^2 ]
\eeq
Let $K_{\sf max} \in \NN$. Summing $k=0$ to $k=K_{\sf max}-1$ gives
\beq
\begin{split}
\sum_{k=0}^{K_{\sf max}-1} \Delta^{(k+1)} & \leq \Big( 2 \gamma^2 +\frac{\gamma}{\beta} \Big) \sum_{k=0}^{K_{\sf max}-1} \sum_{ \ell = 0 }^k \Big( 1 - \frac{1}{n} +\gamma \beta + 2 \gamma^2 \Lip{\bss}^2 \Big)^{k-\ell}  \EE[  \| \hs{\ell} - \os^{(\ell)} \|^2 ] \\
& = \Big( 2 \gamma^2 +\frac{\gamma}{\beta} \Big) \sum_{k=0}^{K_{\sf max}-1} \sum_{ \ell = 0 }^k \Big( 1 - \frac{1}{n} +\gamma \beta + 2 \gamma^2 \Lip{\bss}^2 \Big)^{\ell}  \EE[  \| \hs{k} - \os^{(k)} \|^2 ] \\
& \leq \frac{ 2 \gamma^2 +\frac{\gamma}{\beta} }{\frac{1}{n} - \gamma \beta - 2 \gamma^2 L_{\bss}^2} \sum_{k=0}^{K_{\sf max}-1}  \EE[  \| \hs{k} - \os^{(k)} \|^2 ].
\end{split}
\eeq
Summing up the both sides of \eqref{eq:lips_con} from $k=0$ to $k=K_{\sf max}-1$ yields
\beq
\begin{split}
& \EE \big[ V(\hat{\bss}^{(K_{\sf max})}) - V(\hat{\bss}^{(0)} ) \big] \\
& \leq  \sum_{k=0}^{K_{\sf max}-1} \Big\{ \Big( - \frac{\gamma}{c_1} + \gamma^2 \Lip{V} \Big) \EE[ \| \hs{k} - \os^{(k)} \|^2 ] + \gamma^2 \Lip{V} \Lip{\bss}^2 \Delta^{(k)} \Big\} \\
& \leq \sum_{k=0}^{K_{\sf max}-1} \Big\{ \Big( - \frac{\gamma}{c_1} + \gamma^2 \Lip{V} + \frac{ \big( \gamma^2 \Lip{V} \Lip{\bss}^2 \big) \big( 2 \gamma^2 +\frac{\gamma}{\beta} \big) }{\frac{1}{n} - \gamma \beta - 2 \gamma^2 L_{\bss}^2} \Big) \EE[ \| \hs{k} - \os^{(k)} \|^2 ]  \Big\}
\end{split}
\eeq
Furthermore,
\beq
\begin{split}
& \gamma^2 \Lip{V} + \frac{ \big( \gamma^2 \Lip{V} \Lip{\bss}^2 \big) \big( 2 \gamma^2 +\frac{\gamma}{\beta} \big) }{\frac{1}{n} - \gamma \beta - 2 \gamma^2 L_{\bss}^2} \\
& \overset{(a)}{\leq} \frac{1}{\alpha^2 c_1^2 \overline{L}_{\sf v} n^{4/3}} + \frac{\overline{L}_{\sf v} (\alpha^2 c_1^2  n^{4/3})^{-1} \big( \frac{2}{\alpha^2 c_1^2 \overline{L}_{\sf v}^2 n^{4/3}} + \frac{1}{\alpha c_1^2 \overline{L}_{\sf v}^2 n^{1/3}} \big) }{\frac{1}{n} - \frac{1}{\alpha n} - \frac{2}{\alpha^2 c_1^2 n^{4/3}} }\\
& = \frac{1}{\alpha^2 c_1^2 \overline{L}_{\sf v} n^{4/3}} + \frac{  \overline{L}_{\sf v} \big(\frac{2}{\alpha^2 c_1^2 \overline{L}_{\sf v}^2 n^{4/3}} + \frac{1}{\alpha c_1^2 \overline{L}_{\sf v}^2 n^{1/3}} \big) }{ (\alpha c_1  n^{1/3}) (\alpha-1) c_1 - 2 } \\
& \overset{(b)}{\leq} \frac{1}{\alpha^2 c_1^2 \overline{L}_{\sf v} n^{4/3}} + \frac{ \frac{1}{\alpha c_1^2 \overline{L}_{\sf v} n^{1/3}} \big(\frac{2}{\alpha n}  +1 \big) }{ 4 (\alpha c_1  n^{1/3}) - 2 }
\overset{(c)}{\leq} \frac{1}{\alpha^2 c_1^2 \overline{L}_{\sf v} n^{4/3}} + \frac{ 2 }{ 3 \alpha^2 c_1^3 \overline{L}_{\sf v} n^{2/3} } \\
& \leq  \frac{ 5/6 }{\alpha c_1^2 \overline{L}_{\sf v} n^{2/3}}
\end{split}
\eeq
where (a) uses $\overline{L}_{\sf v} \geq \max\{ \Lip{\bss}, \Lip{V} \}$, (b) is due to $(\alpha-1)c_1 \geq 4$ and (c) uses $\alpha c_1 n^{1/3} \geq 1$. Now, using the fact that $\frac{\gamma}{c_1} = \frac{1}{\alpha c_1^2 \overline{L}_{\sf v} n^{\frac{2}{3}}}$ and the lower bound $ \| \hs{k} - \os^{(k)} \|^2 \geq d_2^{-1} \| \grd V(\hs{k}) \|^2$, we have
\beq
\begin{split}
\EE \big[ V(\hat{\bss}^{(K_{\sf max})}) - V(\hat{\bss}^{(0)} ) \big]
& \leq - \frac{1}{6 \alpha c_1^2 \overline{L}_{\sf v} n^{\frac{2}{3}} } \sum_{k=0}^{K_{\sf max}-1} \EE[ \| \hs{k} - \os^{(k)} \|^2 ] \\
& \leq - \frac{1}{6 \alpha d_1^2 c_1^2 \overline{L}_{\sf v} n^{\frac{2}{3}} } \sum_{k=0}^{K_{\sf max}-1} \EE[ \| \grd V(\hs{k}) \|^2 ]
\end{split}
\eeq
Recalling that $K$ is an independent discrete r.v.~drawn uniformly from $\{1,\dots,K_{\sf max}\}$ and noting that $\alpha \geq 6$, we have
\beq
\begin{split}
\EE[ \| \grd V( \hs{K} ) \|^2 ] & = \frac{1}{K_{\sf max}} \sum_{k=0}^{K_{\sf max}-1}  \EE[ \| \grd V( \hs{k} ) \|^2 ] \leq n^{\frac{2}{3}} \!~ \frac{d_1^2 \overline{L}_{\sf v} (\alpha c_1)^2 (\EE \big[  V(\hat{\bss}^{(0)} ) -V(\hat{\bss}^{(K_{\sf max})})  \big]}{K_{\sf max}}
 \end{split}
\eeq

\section{Practical Applications of Stochastic EM methods}
This section provides implementation details and verify the model assumptions for the application examples provided.
Only in this section, for any $M \geq 2$, we denote
\beq \textstyle \label{eq:prob_simplex}
\Delta^M \eqdef \{ \omega_m \in \rset,~m=1,...,M-1 : \omega_m \geq 0,~\sum_{m=1}^{M-1} \omega_m \leq 1\} \subseteq \rset^{M-1}
\eeq
as the shorthand notation of the dimension reduced $M$-D probability simplex.

\subsection{Gaussian mixture models}\label{app:gmm}
\subsubsection{Model assumptions}
We first recognize that the constraint set for $\param$ is given by
\beq \textstyle
\Param = \Delta^M \times \rset^M.
\eeq
Using the partition of the sufficient statistics as
$S( y_i,z_i ) = ( S^{(1)}( y_i,z_i)^\top , S^{(2)}( y_i,z_i )^\top, S^{(3)}(y_i,z_i) )^\top  \in \rset^{M-1} \times \rset^{M-1} \times \rset$, the partition $\phi( \param ) = ( \phi^{(1)}( \param )^\top ,\phi^{(2)}( \param )^\top,\phi^{(3)}( \param ) )^\top \in \rset^{M-1} \times \rset^{M-1} \times \rset$ and the fact that $\indiacc{M}(z_i)= 1 - \sum_{m=1}^{M-1} \indiacc{m}(z_i)$, the complete data log-likelihood can be expressed as in \eqref{eq:exp} with
\beq \label{eq:gmm_exp}
\begin{split}
& s_{i,m}^{(1)} = \indiacc{m}(z_i), \quad \phi_m^{(1)}(\param) =   \left\{\log(\omega_m) -\frac{\mu_m^2}{2}\right\} - \left\{\log(1 - {\textstyle  \sum_{j=1}^{M-1}} \omega_j) - \frac{\mu_M^2}{2}\right\} \eqsp,\\
& s_{i,m}^{(2)} =   \indiacc{m}(z_i) y_i, \quad \phi^{(2)}_m(\param) =  {\mu_m} \eqsp, \quad s_i^{(3)} = y_i, \quad \phi^{(3)}(\param) = \mu_M \eqsp,
\end{split}
\eeq
and $\psi(\param) =   - \left\{\log(1 - \sum_{m=1}^{M-1} \omega_m) - \frac{\mu_M^2}{2 \sigma^2}\right\}$.
We also define for each $m \in \llbracket 1, M\rrbracket$,  $j \in \llbracket 1, 3 \rrbracket$, $s_{m}^{(j)} = n^{-1}\sum_{i=1}^n s_{i,m}^{(j)}$. Consider the following conditional expected value:
\beq \label{eq:cexp}
\widetilde{\omega}_m ( y_{i} ; \param ) \eqdef \EE_{\param}[ \mathbbm{1}_{\{z_i=m\}} | y= y_{i} ]
= \frac{ {\omega}_{m} \!~ {\rm exp}(-\frac{1}{2}( y_{i} - {\mu}_{i} )^2) }{  \sum_{j=1}^{M}{ {\omega}_{j} \!~ \exp(-\frac{1}{2}( y_{i} - {\mu}_{j} )^2)} } \eqsp,
\eeq
where $m \in \llbracket1,M\rrbracket$, $i \in \inter$ and $\param = ({\bm w}, {\bm{\mu}}) \in \Theta$.
In particular, given $\param \in \Param$, the {\sf E-step} updates in \eqref{eq:estep_upd} can be written as
\beq\label{eq:condexp_gmm}
\overline{\bss}_i( \param ) = \big( \underbrace{\widetilde{\omega}_1 ( y_{i} ; \param ),..., \widetilde{\omega}_{M-1} ( y_{i} ; \param )}_{\eqdef \overline{\bss}_i^{(1)}( \param )^\top} , \underbrace{y_i \widetilde{\omega}_1 ( y_{i} ; \param ), ..., y_i \widetilde{\omega}_M ( y_{i} ; \param )}_{\eqdef \overline{\bss}_i^{(2)}( \param )^\top}, \underbrace{y_i}_{\eqdef \overline{\bss}_i^{(3)}( \param )} \big)^\top.
\eeq

Recall that we have used the following regularizer:
\beq \textstyle \label{eq:regu}
\Pen( \param ) = \frac{\delta}{2} \sum_{m=1}^M \mu_m^2 - \epsilon \sum_{m=1}^M  \log ( \omega_m )  - \epsilon \log \big( 1 - \sum_{m=1}^{M-1} \omega_m \big) \eqsp,
\eeq
It can be shown that the regularized {\sf M-step} in \eqref{eq:mstep} evaluates to
\beq \label{eq:mstep_gmm}
\overline{\param} ( {\bm s} )
= \left(
\begin{array}{c}
( 1+\epsilon M )^{-1} \big( {s}_1^{(1)} + \epsilon, \dots,  {s}_{M-1}^{(1)} + \epsilon \big)^\top \vspace{.2cm}\\
 \big( ({s}_1^{(1)} + \delta )^{-1} {s}_1^{(2)}  , \dots, ({s}_{M-1}^{(1)} + \delta )^{-1} {s}_{M-1}^{(2)}  \big)^\top \vspace{.2cm} \\
  \big(1 - \sum_{m=1}^{M-1}s_m^{(1)} +  \delta\big)^{-1} \big( s^{(3)} - \sum_{m=1}^{M-1} s_m^{(2)} \big)
\end{array}
\right)
= \left(
\begin{array}{c}
\overline{\bm{\omega}} ( {\bm s}) \\
\overline{\bm{\mu}} ( {\bm s}) \\
\overline{\mu}_M ( {\bm s})
\end{array}
\right) \eqsp.
\eeq
where we have defined for all $m \in \llbracket1,M\rrbracket$ and $j \in \llbracket1,3\rrbracket$ , $ {s}_m^{(j)}  = n^{-1} \sum\nolimits_{i=1}^n s_{i,m}^{(j)}$.

To analyze the convergence of the EM methods, we verify H\ref{ass:compact} to H\ref{ass:eigen} for the GMM example as follows.

To verify H\ref{ass:compact}, we observe that the set $\Zset$ is the compact interval $\inter[M]$, in addition, the sufficient statistics defined in \eqref{eq:gmm_exp} also leads to a bounded and closed $\Sset$.

To verify H\ref{ass:regularity-phi-psi}, we observe that
the Jacobian matrix $\jacob{\phi}{\param}{\param}$ 
can be computed as
\beq\label{eq:jacob}
\jacob{\phi}{\param}{\param} = \left(
\begin{array}{ccc}
\frac{1}{1 - \sum_{m=1}^{M-1} \omega_m} {\bf 1}{\bf 1}^\top + {\rm Diag}( \frac{\bf 1}{\bm{\omega}} ) & - {\rm Diag}( \bm{\mu} ) & \mu_M {\bf 1} \\
{\bm 0} & {\bm I} & {\bm 0} \\
{\bm 0} & {\bm 0} & 1
\end{array}
\right),
\eeq
where we have denoted $ \frac{\bf 1}{\bm{\omega}} $ as the $(M-1)$-dimensional vector $\big( \frac{\bf 1}{\omega_{1}} , \ldots,  \frac{\bf 1}{\omega_{M-1}} \big)$. 
We observe that it is a bounded matrix and it is smooth \wrt $\param$.

We verify H\ref{ass:expected} next, \ie the Lipschitz continuity of $p( z_i | y_i; \param )$, w.r.t to $\param$ noting that for all $i \in \inter[n]$ and $ m \in \inter[M]$, $p( z_i = m | y_i; \param ) =\EE_{\param}[ \mathbbm{1}_{\{z_i=m\}} | y= y_{i} ] =\widetilde{\omega}_m ( y_{i} ;\param )$. Observe that $p( z_i = m | y_i; \param )$ is given by the softmax function and the desired Lipschitz property follows.

Next, we observe that with the designed penalty, the function $\param \mapsto L(\bss,\param)$ admits a unique global minima with $\op(\bss) \in {\rm int}( \Param)$ for all $\bss \in \Sset$. Second, since $\op(\bss) \in {\rm int}( \Param)$, the Jacobian matrix defined in \eqref{eq:jacob} must be full rank. Lastly, the $L_{\theta}$-Lipschitzness of $\op(\bss)$ can be deduced by inspecting \eqref{eq:mstep_gmm}.  
The above show that Assumption H\ref{ass:reg} is verified.

Finally, we calculate the quantity $\operatorname{B}( \bss )$ defined in \eqref{eq:Bss}. Observe that the Hessian $\hess{L}{\param}(\bss,\param)$ is:
\beq
\hess{L}{\param}(\bss,\param) =  \left(
\begin{array}{ccc}
\frac{1 + \epsilon - \sum_{m=1}^{M-1} s_m^{(1)} }{(1 - \sum_{m=1}^{M-1} \omega_m)^2} {\bf 1}{\bf 1}^\top + {\rm Diag}( \frac{ {\bm s}^{(1)} + \epsilon {\bf 1}}{\bm{\omega}^2} ) & {\bm 0} & {\bm 0} \\
{\bm 0} & {\rm Diag}( {\bm s}^{(1)} + \delta {\bf 1}  ) & {\bm 0} \\
{\bm 0} & {\bm 0} & \delta + 1  - \sum_{m=1}^{M-1} s_m^{(1)}
\end{array}
\right)
\eeq
We can rewrite $\operatorname{B}( \bss )$ as an outer product:
\beq
\operatorname{B}( \bss )  \eqdef\jacob{ \phi }{ \param }{ \mstep{\bss} } \Big( \hess{L}{\param}( {\bss},  \mstep{\bss} )  \Big)^{-1} \jacob{ \phi }{ \param }{ \mstep{\bss} }^\top =  \bm{\mathcal{J}} ( {\bm s} ) \bm{\mathcal{J}} ( {\bm s} )^\top
\eeq
where
\beq\label{eq:Jdef}
\bm{\mathcal{J}} ( {\bm s}  ) \eqdef \jacob{\phi}{\param}{ \mstep{\bm s} } \left(
\begin{array}{ccc}
{\bm H}_{11}^{-\frac{1}{2}} & {\bm 0} & {\bm 0} \\
{\bm 0} & {\rm Diag}( \frac{ \bm 1}{ \sqrt{ {\bm s}^{(1)} + \delta {\bf 1}  }} ) & {\bm 0} \\
{\bm 0} & {\bm 0} & \frac{1}{\sqrt{\delta + 1 - \sum_{m=1}^{M-1} s_m^{(1)}}}
\end{array}
\right)
\eeq
and
\beq
{\bm H}_{11} \eqdef
\frac{1 + \epsilon - \sum_{m=1}^{M-1} s_m^{(1)} }{(1 -  \frac{ {\bf 1}^\top ( {\bm s}^{(1)} + \epsilon {\bf 1} ) }{1+ \epsilon M} )^2} {\bf 1}{\bf 1}^\top + {\rm Diag}( \frac{ (1+ \epsilon M)^2 }{ {\bm s}^{(1)} + \epsilon {\bf 1} } ).
\eeq
Note that $\bm{\mathcal{J}} ( {\bm s}  )$ is a bounded and full rank matrix which yields to the upper and lower bounds on eigenvealues in H\ref{ass:eigen}. From \eqref{eq:Jdef}, we note that $\operatorname{B}( \bss ) =\bm{\mathcal{J}} ( {\bm s}  )\bm{\mathcal{J}} ( {\bm s}  )^\top $ is Lipschitz continuous, \ie, there exists a constant $\Lip{B}$ such that for all $\bss, \bss' \in \Sset^2$, we have $ \| \operatorname{B}( \bss ) - \operatorname{B}( \bss' )  \| \leq \Lip{B} \| {\bss} - {\bss}' \|$.

\subsubsection{Algorithms updates}
In the sequel, for all $i \in \inter[n]$ and iteration $k$, the conditional expectation $\overline{\bss}_{i}^{(k)}$ is defined by \eqref{eq:condexp_gmm} and is equal to:
\beq
\overline{\bss}_{i}^{(k)}
= \left(
\begin{array}{c}
 \big( \widetilde{\omega}_1 ( y_{i}; \hp{k} ), \dots, \widetilde{\omega}_{M-1} ( y_{i} ; \hp{k} ) \big)^\top \\
  \big( y_{i} \widetilde{\omega}_1 ( y_{i}; \hp{k} ), \dots, y_{i}  \widetilde{\omega}_{M-1} ( y_{i} ; \hp{k} ) \big)^\top \\
  y_{i}
\end{array}
\right) \eqsp.
\eeq
At iteration $k$, the several E-steps defined by \eqref{eq:iem} or \eqref{eq:oem} or \eqref{eq:svrgem} or \eqref{eq:sagaem} leads to the definition of the quantity $\hat{\bss}^{(k+1)} $. For the GMM example, after the initialization of the quantity $\hat{\bss}^{(0)} = n^{-1} \sum\nolimits_{i=1}^n \overline{\bss}_i^{(0)} $, those E-steps break down as follows:

\textbf{Batch EM (EM):} for all $i \in \inter$, compute $\overline{\bss}_{i}^{(k)}$ and set 
\beq
\hat{\bss}^{(k+1)} = n^{-1} \sum\nolimits_{i=1}^n \overline{\bss}_i^{(k)} \eqsp.
\eeq

\textbf{Online EM (\SEM):} draw an index $i_k$ uniformly at random on $\inter[n]$, compute $\overline{\bss}_{i_k}^{(k)}$ and set 
\beq
\hat{\bss}^{(k+1)} = (1-\gamma_k)\hat{\bss}^{(k)}  + \gamma_k \overline{\bss}_{i_k}^{(k)} \eqsp.
\eeq

\textbf{Incremental EM (\IEM):} draw an index $i_k$ uniformly at random on $\inter[n]$, compute $\overline{\bss}_{i_k}^{(k)}$ and set 
\beq
\hat{\bss}^{(k+1)} = \hat{\bss}^{(k)}  +  \overline{\bss}_{i_k}^{(k)} -  \overline{\bss}_{i_k}^{(\tau_i^k)} =   n^{-1} \sum\nolimits_{i=1}^n \overline{\bss}_i^{(\tau_i^k)} \eqsp.
\eeq

\textbf{Variance reduced stochastic EM (\SEMVR):} draw an index $i_k$ uniformly at random on $\inter[n]$, compute $\overline{\bss}_{i_k}^{(k)}$
and set 
\beq
\hat{\bss}^{(k+1)} =(1-\gamma) \hat{\bss}^{(k)}  + \gamma \Big( \overline{\bss}_{i_k}^{(k)} -  \overline{\bss}_{i_k}^{(\ell(k))}   + \overline{\bss}^{(\ell(k))}  \Big)
\eeq
where $ \overline{\bss}_{i_k}^{(\ell(k))}$ and $\overline{\bss}^{(\ell(k))}$ were computed at iteration $\ell(k)$, defined as the first iteration number in the epoch that iteration $k$ is in.

\textbf{Fast Incremental EM (\FIEM):} draw two different and independent indices $(i_k, j_k)$ uniformly at random on $\inter[n]$, compute the quantities $\overline{\bss}_{i_k}^{(k)}$ and $\overline{\bss}_{j_k}^{(k)}$ and set 
\beq
\begin{split}
& \hat{\bss}^{(k+1)} =(1-\gamma) \hat{\bss}^{(k)}  + \gamma \Big( \overline{\StocEstep}^{(k)}+ \overline{\bss}_{i_k}^{(k)} - \overline{\bss}_{i_k}^{(t_{i_k}^{k})} \Big)\\
& \overline{\StocEstep}^{(k+1)} = \overline{\StocEstep}^{(k)} + n^{-1}
\big( \overline{\bss}_{j_k}^{(k)}  - \overline{\bss}_{j_k}^{(t_{j_k}^k)} \big)\\
\end{split}
\eeq

Finally, the $k$-th update reads $\hp{k+1} = \overline{\param} (\hat{\bss}^{(k+1)})$ where the function ${\bm s} \to \overline{\param}({\bm s})$ is defined by \eqref{eq:mstep_gmm}.

\subsection{Probabilistic Latent Semantic Analysis}\label{app:plsa}

\subsubsection{Model assumptions}
The constraint set $\Param$ is given by
\beq \label{eq:param_app}
\Param = \left( \times_{d \in \inter[D]} \Delta^K \right) \times \left( \times_{k \in \inter[K]} \Delta^V \right).
\eeq
For the {\sf sE-step} \eqref{eq:sestep} in the EM methods, 
we compute the expected complete data statistics as
\beq\label{eq:condexp_plsa}
\begin{split}
& \textstyle \bssdoc_{i,d,k}(\pardoc,\partop) = \indiacc{d}(\obsdoc_i) \Big( \sum_{\ell =1}^K \pardoc_{d,\ell} \partop_{\ell, \obsword_i} \Big)^{-1} \pardoc_{d,k} \partop_{k, \obsword_i}, \\
& \textstyle \bsstop_{i,k,v}(\pardoc,\partop) =  \indiacc{v}(\obsword_i) \Big( \sum_{\ell =1}^K \pardoc_{\obsdoc_i,\ell} \partop_{\ell, v}  \Big)^{-1} \pardoc_{\obsdoc_i,k} \partop_{k, v},
\end{split}
\eeq
for  each $(i, k, d, v) \in \inter[n] \times \inter[K] \times \inter[D]\times \inter[V]$.
Meanwhile, the regularized {\sf M-step} \eqref{eq:mstep} in the EM methods evaluates to:
\beq \label{eq:mstep_plsa}
\left(
\begin{array}{c}
\opardoc_{d,k} ( {\bm s}) \\
\opartop_{k,v} ( {\bm s})
\end{array}
\right)
= \left(
\begin{array}{c}
\Big( \sum_{i=1}^n \sum_{k'=1}^K {\bm s}^{(t|d)}_{i,d,k'}+\alpha' K \Big)^{-1} \big( \sum_{i=1}^n {\bm s}^{(t|d)}_{i,d,k} +\alpha'\big) \vspace{.1cm}\\
\Big(\sum_{i=1}^n \sum_{\ell=1}^V {\bm s}^{(w|t)}_{i,k,\ell}+  \beta' V \Big)^{-1} \big( \sum_{i=1}^n {\bm s}^{(w|t)}_{i,k,v}+\beta' \big) \vspace{.2cm}
\end{array}
\right),
\eeq
for each $(k,d, v) \in \inter[K]\times\inter[D]\times\inter[V]$.

Using the partition of the sufficient statistics as
$S(  y_i,z_i) = ( S^{(t|d)}( y_i,z_i)^\top , S^{(w|t)}( y_i,z_i)^\top)^\top \in  \rset^{K  D + K  V} $, the partition $\phi( \param ) = ( \phi^{(t|d)}( \param )^\top ,\phi^{(w|t)}( \param )^\top )^\top \in  \rset^{K  D + K V} $, the complete log-likelihood \eqref{eq:comp_like_plsa} can be expressed in the standard form as \eqref{eq:exp} with
\beq \label{eq:plsa_exp}
\begin{split}
& {\bm s}^{(t|d)}_{i,d,k} =  \indiacc{k,d}(z_i,\obsdoc_i) , \quad
   \phi_{d,k}^{(t|d)}(\param) =   \log(\pardoc_{d,k}) \eqsp,\\
  & {\bm s}^{(w|t)}_{i,k,v}= \indiacc{k,v}(z_i,\obsword_i) , \quad \phi^{(w|t)}_{k,v}(\param) = \log (\partop_{k,v}) \eqsp,
\end{split}
\eeq
Assumption  H\ref{ass:compact} is verified with $\Zset = \inter[K]$ and the sufficient statistics defined in \eqref{eq:plsa_exp} that leads to a compact $\Sset$.

By using the vectorization of $\param$ as an $(K-1)D + (V-1)K$-dimensional vector, 
we can calculate the Jacobian as follows. 
In particular,
\beq
\jacob{\pardoc_{d,k}}{\phi_{d',k'}^{(t|d)}}{\param} = \begin{cases}
0 & \text{if}~d' \neq d, \\
\frac{1}{1 - \sum_{\ell=1}^{K-1} \pardoc_{d,\ell}} &  \text{if}~d' = d, k' \neq k \\
\frac{1}{\pardoc_{d,k}} &  \text{if}~d' = d, k' = k.
\end{cases},
\jacob{\partop_{k,v}}{\phi_{k',v'}^{(w|t)}}{\param} = \begin{cases}
0 & \text{if}~k' \neq k, \\
\frac{1}{1 - \sum_{\ell=1}^{V-1} \partop_{k,\ell}} &  \text{if}~k' = k, v' \neq v \\
\frac{1}{\partop_{k,v}} &  \text{if}~k' = k, v' = v.
\end{cases}
\eeq
With the above definitions,
it can be verified that the Jacobian matrix is full rank and smooth \wrt $\param$ for any $\param \in {\rm int}(\Param)$.
This confirms H\ref{ass:regularity-phi-psi}.
%

Next, we verify H\ref{ass:expected}, \ie the Lipschitz continuity of $p( z_i | y_i; \param )$, w.r.t to $\param$. Note that for all $(i,k,d) \in \inter[n] \times \inter[K] \times \inter[D]$, $p( z_i = k | y_i; \pardoc_{d,k},\partop_{k,v} ) =\EE_{\param}[ \indiacc{k,d}(z_i,\obsdoc_i) | y_{i} ] =\bssdoc_{i,k,d}(\pardoc,\partop)$ as defined in \eqref{eq:condexp_plsa}. Observe that as we focus on $\param \in {\rm int}( \Param )$, each of $\pardoc_{d,\ell} \partop_{\ell, \obsword_i}$, $\pardoc_{\obsdoc_i,\ell} \partop_{\ell, v}$ is strictly positive and strictly less than one. The Lipschitz property follows from the expression \eqref{eq:condexp_plsa}.

The expression of the regularized complete log-likelihood, $\param \to L(s,\param)$, is defined as:
\beq \notag
L(s,\param) = - \sum_{k=1}^K\sum_{d=1}^D  {\bm s}^{(t|d)}_{i,k,d} \log(\pardoc_{d,k}) - \alpha' \log(\pardoc_{d,k}) -  \sum_{k=1}^K \sum_{v=1}^V {\bm s}^{(w|t)}_{i,k,v} \log (\partop_{k,v}) - \beta' \log (\partop_{k,v})  \eqsp,
\eeq
This function admits a unique minimum in ${\rm int}(\Param)$ from the strict concavity of the logarithm, as the regularizations are active with $\alpha' , \beta' > 0$.
By the same virtue of the verification of H\ref{ass:regularity-phi-psi}, we observe that  H\ref{ass:reg} 
can be satisfied.

We first calculate the quantity $\operatorname{B}( \bss )$ defined in \eqref{eq:Bss}.
Using the vectorization of $\param$ as a $(K-1)D + (V-1)K$-dimensional vector, we observe that the Hessian of the function $\param \mapsto L(\bss,\param)$ \wrt to $\param$ has a block diagonal structure with $D+K$ blocks --- the $d$th block which corresponds to $\pardoc_{d,\cdot}$ is given by
\beq
\big[ \hess{L}{\param}(\bss,\param) \big]_d =  
\frac{{\bm s}^{(t|d)}_{K,d}  + \alpha'}{(1- \sum_{k=1}^{K-1} \pardoc_{d,k})^2} {\bf 1}{\bf 1}^\top + {\rm Diag}( \frac{ {\bm s}^{(t|d)} + \alpha' {\bf 1}}{(\pardoc)^2} )
\eeq
while the $(D+k)$th block which corresponds to $\partop_{k,\cdot}$ is given by 
\beq 
\big[ \hess{L}{\param}(\bss,\param) \big]_{D+k} = 
\frac{{\bm s}^{(w|t)}_{k,V}  + \beta'}{(1- \sum_{\ell=1}^{V-1} \partop_{k,\ell})^2} {\bf 1}{\bf 1}^\top + {\rm Diag}( \frac{ {\bm s}^{(w|t)} + \beta' {\bf 1}}{(\partop)^2} )
\eeq
Since each block in the above Hessian matrix is positive definite,  the matrix 
\beq
\operatorname{B}( \bss )  \eqdef\jacob{ \phi }{ \param }{ \mstep{\bss} } \Big( \hess{L}{\param}( {\bss},  \mstep{\bss} )  \Big)^{-1} \jacob{ \phi }{ \param }{ \mstep{\bss} }^\top =  \bm{\mathcal{J}} ( {\bm s} ) \bm{\mathcal{J}} ( {\bm s} )^\top
\eeq
 is positive definite and bounded. Furthermore, there exists a constant $\Lip{B} $ such that
$ \| \operatorname{B}( \bss ) - \operatorname{B}( \bss' )  \| \leq \Lip{B} \| {\bss} - {\bss}' \|$.
Finally, this confirms H\ref{ass:eigen}.

\subsubsection{Algorithms updates}
In the sequel, for all $(i, d, k, v) \in \inter[n] \times \inter[D] \times \inter[K] \times \inter[V]$ the conditional expectations $\bssdoc_{i,k,d}(\estpardoc{\iter},\estpartop{\iter})$ and $\bsstop_{i,k,v}(\estpardoc{\iter},\estpartop{\iter})$ are defined by \eqref{eq:condexp_plsa}.
For the pLSA example, after the initialization of the quantity $\big({\bm s}^{(1)}_{k,d}\big)^{0} = n^{-1} \sum\nolimits_{i=1}^n  \bssdoc_{i,k,d}(\estpardoc{0},\estpartop{0}) $ and $\big({\bm s}^{(2)}_{k,v}\big)^{0} = n^{-1} \sum\nolimits_{i=1}^n  \bsstop_{i,k,v}(\estpardoc{0},\estpartop{0}) $, the several E-steps break down as follows:

\textbf{Batch EM (EM):} At iteration $\iter$: update the statistics for all $(d, k, v) \in \inter[D] \times \inter[K] \times \inter[V]$ :
\beq
 \big({\bm s}^{(1)}_{k,d}\big)^{\iter+1} = \sum_{i=1}^{n}  \bssdoc_{i,k,d}(\estpardoc{\iter},\estpartop{\iter})  \quad \textrm{and} \quad \big({\bm s}^{(2)}_{k,v}\big)^{\iter+1} =  \sum_{i=1}^{n} \bsstop_{i,k,v}(\estpardoc{\iter},\estpartop{\iter})
\eeq

\textbf{Online EM (\SEM):} At iteration $\iter$, update the statistics for all $(d, k, v) \in \inter[D] \times \inter[K] \times \inter[V]$ :
\beq
\begin{split}
&  \big({\bm s}^{(1)}_{k,d}\big)^{\iter+1} = (1- \gamma_{\iter}) \big({\bm s}^{(1)}_{k,d}\big)^{\iter} +\gamma_{\iter} \bssdoc_{i_{\iter},k,d}(\estpardoc{\iter},\estpartop{\iter})\\
&   \big({\bm s}^{(2)}_{k,v}\big)^{\iter+1}   = (1-\gamma_{\iter})  \big({\bm s}^{(2)}_{k,v}\big)^{\iter} + \gamma_{\iter}  \bsstop_{i_{\iter},k,v}(\estpardoc{\iter},\estpartop{\iter}) \\
\end{split}
\eeq

\textbf{Incremental EM (\IEM):} At iteration $\iter$, update the statistics for all $(d, k, v) \in \inter[D] \times \inter[K] \times \inter[V]$ :
\beq
\begin{split}
&  \big({\bm s}^{(1)}_{k,d}\big)^{\iter+1}  = \big({\bm s}^{(1)}_{k,d}\big)^{\iter} + \bssdoc_{i_{\iter},k,d}(\estpardoc{\iter},\estpartop{\iter}) - \bssdoc_{i_{\iter},k,d}(\estpardoc{\tau_{i_{\iter}}^{\iter}},\estpartop{\tau_{i_{\iter}}^{\iter}})\\
&   \big({\bm s}^{(2)}_{k,v}\big)^{\iter+1}   =  \big({\bm s}^{(2)}_{k,v}\big)^{\iter} + \bsstop_{i_{\iter},k,v}(\estpardoc{\iter},\estpartop{\iter})  - \bsstop_{i_{\iter},k,v}(\estpardoc{\tau_{i_{\iter}}^{\iter}},\estpartop{\tau_{i_{\iter}}^{\iter}})\\
\end{split}
\eeq

\textbf{Variance reduced stochastic EM (\SEMVR):} At iteration $\iter$, draw an index $i_\iter$ and  update the statistics for all $(d, k, v) \in \inter[D] \times \inter[K] \times \inter[V]$ :
\beq
\begin{split}
&  \big({\bm s}^{(1)}_{k,d}\big)^{\iter+1}  =  (1- \gamma) \big({\bm s}^{(1)}_{k,d}\big)^{\iter} \\
& + \gamma \Big( \bssdoc_{i_{\iter},k,d}(\estpardoc{\iter},\estpartop{\iter}) - \bssdoc_{i_{\iter},k,d}(\estpardoc{(\ell(k))},\estpartop{(\ell(k))}) + \bssdoc(\estpardoc{(\ell(k))},\estpartop{(\ell(k))})\Big)\\
&   \big({\bm s}^{(2)}_{k,v}\big)^{\iter+1}   =   (1- \gamma) \big({\bm s}^{(2)}_{k,v}\big)^{\iter} \\
& + \gamma \Big( \bsstop_{i_{\iter},k,v}(\estpardoc{\iter},\estpartop{\iter}) - \bsstop_{i_{\iter},k,v}(\estpardoc{(\ell(k))},\estpartop{(\ell(k))}) + \bsstop(\estpardoc{(\ell(k))},\estpartop{(\ell(k))})\Big)\\
\end{split}
\eeq

\textbf{Fast Incremental EM (\FIEM):} At iteration $\iter$, draw two indices $(i_\iter, j_\iter)$ independently and update the statistics for all $(d, k, v) \in \inter[D] \times \inter[K] \times \inter[V]$ :
\begin{alignat*}{2}
&  \big({\bm s}^{(1)}_{k,d}\big)^{\iter+1}  = && (1- \gamma) \big({\bm s}^{(1)}_{k,d}\big)^{\iter} \\
& && + \gamma \Big( \bssdoc_{i_{\iter},k,d}(\estpardoc{\iter},\estpartop{\iter}) - \bssdoc_{i_{\iter},k,d}(\estpardoc{(t_{i_\iter}^\iter)},\estpartop{(t_{i_\iter}^\iter)})+ \big( \overline{\StocEstep}^{(1)}_{k,d}\big)^{\iter}  \Big) \\
&\big( \overline{\StocEstep}^{(1)}_{k,d}\big)^{\iter+1}  = && \big( \overline{\StocEstep}^{(1)}_{k,d}\big)^{\iter}  +n^{-1} \Big( \bssdoc_{j_{\iter},k,d}(\estpardoc{\iter},\estpartop{\iter}) - \bssdoc_{j_{\iter},k,d}(\estpardoc{(t_{j_\iter}^\iter)},\estpartop{(t_{j_\iter}^\iter)})\Big)\\
&   \big({\bm s}^{(2)}_{k,v}\big)^{\iter+1}   = &&   (1- \gamma) \big({\bm s}^{(2)}_{k,v}\big)^{\iter} \\
& && + \gamma \Big( \bssdoc_{i_{\iter},k,v}(\estpardoc{\iter},\estpartop{\iter}) - \bssdoc_{i_{\iter},k,v}(\estpardoc{(t_{i_\iter}^\iter)},\estpartop{(t_{i_\iter}^\iter)})+ \big( \overline{\StocEstep}^{(2)}_{k,v}\big)^{\iter}\Big)  \\
&\big( \overline{\StocEstep}^{(2)}_{k,v}\big)^{\iter+1}  = && \big( \overline{\StocEstep}^{(2)}_{k,v}\big)^{\iter} + \gamma n^{-1} \Big( \bssdoc_{j_{\iter},k,v}(\estpardoc{\iter},\estpartop{\iter}) - \bssdoc_{j_{\iter},k,v}(\estpardoc{(t_{j_\iter}^\iter)},\estpartop{(t_{j_\iter}^\iter)})\Big)
\end{alignat*}

Finally, at iteration $\iter$, for $(k,d, v) \in \inter[K]\times\inter[D]\times\inter[V]$, the {\sf M-step} in \eqref{eq:mstep} evaluates to:
\beq \label{eq:mstep_plsa}
\left(
\begin{array}{c}
\estpardoc{\iter+1}\\
\estpartop{\iter+1}
\end{array}
\right)
= \left(
\begin{array}{c}
\big(  \sum_{k'=1}^K \big({\bm s}^{(1)}_{k',d}\big)^{\iter+1} +\alpha' K \big)^{-1} \big( \big({\bm s}^{(1)}_{k,d}\big)^{\iter+1} +\alpha'\big) \vspace{.2cm}\\
\big(\sum_{\ell=1}^V \big({\bm s}^{(2)}_{k,\ell}\big)^{\iter+1} +  \beta' V \big)^{-1} \big( \big({\bm s}^{(2)}_{k,v}\big)^{\iter+1} +\beta' \big) \vspace{.2cm}
\end{array}
\right)\eqsp.
\eeq

\section{Local Linear Convergence of \FIEM}
In this section, we prove that the \FIEM~method converges locally at a linear rate to a stationary point, under a similar set of assumptions as in \citep{chen2018stochastic}. Note that some of the following assumptions can be difficult to verify, and our analysis here is merely a proof of concept. 

Consider a stationary point $\param^\star$ to problem \eqref{eq:em_motivate} and its corresponding sufficient statistics $\bss^\star$, also a stationary  point to \eqref{eq:em_sspace}. To simplify notations, we follow \citep{chen2018stochastic} and write the complete sufficient statistics as $F( \bss' ) \eqdef \overline{\bss} ( \mstep{ \bss' } )$, and also the $i$th sufficient statistics as $f_i( \bss' ) \eqdef \overline{\bss}_i ( \mstep{ \bss' })$. We assume the following:
\begin{assumptionB}\label{assb:hess}
The Hessian matrix $\grd^2 \overline{\cal L}( \param^\star )$ is strictly positive definite such that $\param^\star$ is a strict local minimizer of problem \eqref{eq:em_motivate}.
\end{assumptionB}
\begin{assumptionB} \label{assb:local}
For any $k \geq 1$, we have $\| \hat{\bss}^k - \bss^\star \| \leq \frac{\lambda}{\Lip{\bss}}$, where $\Lip{\bss}$ was defined in our Lemma~\ref{lem:smooth} and $1-\lambda$ is the maximum eigenvalue of the Jacobian matrix $\jacob{F}{\bss}{\bss^\star}$.
\end{assumptionB}
The above assumptions correspond to assumptions (a), (c) in \citep[Theorem 1]{chen2018stochastic}, 
while we note that assumption (b) therein are shown in our Lemma~\ref{lem:smooth}. 

We remark that B\ref{assb:hess} is strictly stronger than H\ref{ass:reg} used in our global convergence analysis. The latter makes assumption on the actual objective function $\overline{\cal L}( \param^\star )$ instead of the surrogate function $\param \rightarrow L( \bss, \param )$. Our proof goes as follows.

\begin{Prop}
Under Assumption~B\ref{assb:hess}, B\ref{assb:local} and the conditions such that our Lemma~\ref{lem:smooth} holds. The \FIEM~method converges linearly such that
\beq
\EE[ \| \hs{k+1} - \bss^\star \|^2 ] \leq (1 - \delta)^{k+1} \| \hs{0} - \bss^\star \|^2,~\forall~k \geq 0,
\eeq
where $\delta = \Theta(1/n)$ with an appropriately chosen step size $\gamma$.
\end{Prop}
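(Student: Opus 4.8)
The plan is to recast the \FIEM\ iteration as a stochastic fixed‑point scheme for the map $F$ and run a two–potential (SAGA‑style) Lyapunov argument, localized around $\bss^\star$. Two deterministic facts drive the analysis: the optimality condition $\bss^\star=F(\bss^\star)$, which is equivalent to $\grd V(\bss^\star)=0$ by Lemma~\ref{lem:semigrad}; and the unbiasedness $\EE[\StocEstep^{(k+1)}\mid{\cal F}_k]=F(\hs{k})$, which holds because $\overline{\StocEstep}^{(k)}=n^{-1}\sum_{i=1}^n f_i(\hs{t_i^k})$ and $i_k$ is drawn uniformly and independently of ${\cal F}_k$. Writing $\hs{k+1}-\bss^\star=(1-\gamma)(\hs{k}-\bss^\star)+\gamma(\StocEstep^{(k+1)}-\bss^\star)$ and using the bias–variance decomposition of the conditional second moment, I would first obtain
\[
\EE[\|\hs{k+1}-\bss^\star\|^2\mid{\cal F}_k]
=\big\|(1-\gamma)(\hs{k}-\bss^\star)+\gamma(F(\hs{k})-\bss^\star)\big\|^2
+\gamma^2\,\EE\big[\|\StocEstep^{(k+1)}-F(\hs{k})\|^2\mid{\cal F}_k\big].
\]

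The mean term is where the local geometry enters. Under B\ref{assb:hess} one has $\grd^2 V(\bss^\star)=\operatorname{B}(\bss^\star)\big(I-\jacob{F}{\bss}{\bss^\star}\big)$ (differentiate the identity $\grd V(\bss)=\operatorname{B}(\bss)(\bss-F(\bss))$ from the proof of Lemma~\ref{lem:semigrad} and use $\bss^\star=F(\bss^\star)$), and this matrix is positive definite, so $\jacob{F}{\bss}{\bss^\star}$ has spectral radius $1-\lambda<1$. Together with the neighbourhood constraint B\ref{assb:local}, a first‑order expansion of $F$ around $\bss^\star$ yields the local contraction $\|F(\hs{k})-\bss^\star\|\le(1-\lambda)\|\hs{k}-\bss^\star\|$ (if necessary, measured in the $\operatorname{B}(\bss^\star)$‑weighted norm in which $\jacob{F}{\bss}{\bss^\star}$ is self‑adjoint, whose equivalence constants are controlled by $\upsilon_{\min},\upsilon_{\max}$ through H\ref{ass:eigen}). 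By the triangle inequality this gives $\|(1-\gamma)(\hs{k}-\bss^\star)+\gamma(F(\hs{k})-\bss^\star)\|\le(1-\gamma\lambda)\|\hs{k}-\bss^\star\|$ for $\gamma\in[0,1]$.

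For the variance term I would reuse the computation behind Lemma~\ref{lem:aux1}: since $\StocEstep^{(k+1)}-F(\hs{k})=\big(f_{i_k}(\hs{k})-f_{i_k}(\hs{t_{i_k}^k})\big)-\EE_{i_k}\big[f_{i_k}(\hs{k})-f_{i_k}(\hs{t_{i_k}^k})\big]$ is a centred draw, the conditional variance inequality together with the $\Lip{\bss}$‑Lipschitz bound of Lemma~\ref{lem:smooth} gives $\EE[\|\StocEstep^{(k+1)}-F(\hs{k})\|^2\mid{\cal F}_k]\le\Lip{\bss}^2 n^{-1}\sum_i\|\hs{k}-\hs{t_i^k}\|^2\le 2\Lip{\bss}^2\|\hs{k}-\bss^\star\|^2+2\Lip{\bss}^2 n^{-1}\sum_i\|\hs{t_i^k}-\bss^\star\|^2$. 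Introducing the memory potential $H^{(k)}\eqdef n^{-1}\sum_{i=1}^n\EE[\|\hs{t_i^k}-\bss^\star\|^2]$ and using that the independent index $j_k$ refreshes each coordinate to $\hs{t_i^{k+1}}=\hs{k}$ with probability $1/n$, I obtain the companion recursion
\[
H^{(k+1)}=\tfrac1n\,\EE[\|\hs{k}-\bss^\star\|^2]+\big(1-\tfrac1n\big)H^{(k)}.
\]

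Finally I would close with the Lyapunov function $T^{(k)}\eqdef\EE[\|\hs{k}-\bss^\star\|^2]+c\,H^{(k)}$. Assembling the mean and variance bounds with the recursion for $H^{(k)}$ and taking full expectations gives $T^{(k+1)}\le\rho_1\,\EE[\|\hs{k}-\bss^\star\|^2]+\rho_2\,H^{(k)}$ with
\[
\rho_1=(1-\gamma\lambda)^2+2\gamma^2\Lip{\bss}^2+c/n,\qquad
\rho_2=2\gamma^2\Lip{\bss}^2+c\,\big(1-\tfrac1n\big).
\]
Choosing $\gamma=\Theta(1/n)$ and $c=\Theta(1/n)$ makes both $\rho_1,\rho_2\le 1-\delta$ with $\delta=\Theta(\gamma\lambda)=\Theta(1/n)$, whence $T^{(k+1)}\le(1-\delta)T^{(k)}$ and $T^{(k)}\le(1-\delta)^kT^{(0)}$. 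Since the tables are initialized at $\hs{0}$ we have $H^{(0)}=\|\hs{0}-\bss^\star\|^2$ and $T^{(0)}=(1+c)\|\hs{0}-\bss^\star\|^2$; absorbing the factor $1+c=1+\Theta(1/n)$ into a marginally smaller $\delta=\Theta(1/n)$ and using $\EE[\|\hs{k+1}-\bss^\star\|^2]\le T^{(k+1)}$ delivers the stated bound. The only genuinely delicate step is the local contraction of $F$: converting the eigenvalue hypothesis of B\ref{assb:local} into an honest norm contraction (the Jacobian is self‑adjoint only in the $\operatorname{B}(\bss^\star)$‑weighted inner product, which is why H\ref{ass:eigen} is invoked) and controlling the first‑order Taylor remainder uniformly over the neighbourhood pinned down by B\ref{assb:local}; everything else—the variance bound, the memory recursion, and the $\Theta(1/n)$ calibration of $\gamma,c,\delta$—is routine bookkeeping that parallels the global \FIEM\ proof of Theorem~\ref{thm:svrg}.
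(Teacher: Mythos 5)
Your proposal is correct in its skeleton and tracks the paper's proof for most of the way: the same bias--variance split of $\EE[\|\hs{k+1}-\bss^\star\|^2\mid\mcf_k]$ using unbiasedness of $\StocEstep^{(k+1)}$, the same variance bound $2\Lip{\bss}^2\big(\EE[\|\hs{k}-\bss^\star\|^2]+n^{-1}\sum_i\EE[\|\hs{t_i^k}-\bss^\star\|^2]\big)$ (the paper reaches it via Lemma~3 of \citet{defazio2014saga} with $f_i(\bss^\star)$ as the reference point, you via the conditional variance inequality plus the Lipschitz bound of Lemma~\ref{lem:smooth} --- equivalent routes), and the same memory-refresh recursion driven by the independence of $j_k$. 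Where you genuinely diverge is in two places. First, for the mean term the paper simply cites the analysis of (9) in \citet{chen2018stochastic} to get $(1-\gamma\lambda/2)\|\hs{k}-\bss^\star\|^2$, whereas you reconstruct the contraction from the spectral bound on $\jacob{F}{\bss}{\bss^\star}$ via the identity $\grd V(\bss)=\operatorname{B}(\bss)(\bss-F(\bss))$; this is more self-contained, and you correctly flag the Taylor-remainder/weighted-norm issue as the delicate point (it is exactly what the citation hides). Second, and more substantively, the closings differ: the paper unrolls the memory recursion into a geometric sum and runs an induction on $h_k\le(1-\delta)^k h_0$, which forces it through a non-rigorous approximation step $\big(1-n\delta\big)^{-1}\approx 1+n\delta$ valid only when $n\delta\ll 1$; your two-term Lyapunov function $T^{(k)}=\EE[\|\hs{k}-\bss^\star\|^2]+c\,H^{(k)}$ with one-step geometric decay is the standard SAGA-style closure and avoids that approximation entirely, which is a genuine improvement in rigor.

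One calibration slip to fix: the condition you state, $\rho_1,\rho_2\le 1-\delta$, does not imply $T^{(k+1)}\le(1-\delta)T^{(k)}$, because the memory term enters $T^{(k)}$ with weight $c<1$. The correct requirement is
\beq
\rho_1\le 1-\delta
\quad\text{and}\quad
\rho_2 = 2\gamma^2\Lip{\bss}^2 + c\big(1-\tfrac1n\big)\le c\,(1-\delta),
\eeq
i.e.\ $2\gamma^2\Lip{\bss}^2\le c\big(\tfrac1n-\delta\big)$. This is satisfiable within your stated regime (e.g.\ $c= 4\gamma^2\Lip{\bss}^2 n$ together with $\gamma=\Theta(1/n)$ small enough that $\delta=\gamma\lambda\le\frac{1}{2n}$), so the argument closes, but as written the implication is false. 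Similarly, absorbing the factor $1+c$ from $T^{(0)}=(1+c)\|\hs{0}-\bss^\star\|^2$ into a smaller rate requires $c<\delta$, which again holds only after tuning the constants in $\gamma$ and $c$; it is worth making that explicit rather than waving at it.
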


\begin{proof} (Sketch)
For $k \in \nset^*$, denote by $\mcf_{k}$ the $\sigma$-algebra generated by the random variables $i_0, j_0,\dots,i_{k}, j_{k}$. Consider 
\beq
\begin{split}
& \EE \big[ \| \hs{k+1} - \bss^\star \|^2 | \mcf_{k} \big] = \EE  \big[ \| \hs{k} - \gamma ( \hs{k} - \StocEstep^{(k+1)} ) - \bss^\star \|^2 | \mcf_{k} \big] \\
& = \EE \big[ \| (1-\gamma) \hs{k} + \gamma F( \hs{k} ) - \bss^\star + \gamma \big( \StocEstep^{(k+1)} - F(\hs{k}) \big) \|^2 | \mcf_{k} \big]
\end{split}
\eeq
Note that as $\EE[ \StocEstep^{(k+1)} - F(\hs{k}) | \mcf_{k} ] = 0$, we have
\beq \label{eq:linear_0mean}
\begin{split}
& \EE \big[ \| \hs{k+1} - \bss^\star \|^2 | \mcf_{k} \big] \\
& = \EE \big[ \| (1-\gamma) \hs{k} + \gamma F( \hs{k} ) - \bss^\star \|^2 | \mcf_{k} \big] + \gamma^2 \EE \big[ \| \StocEstep^{(k+1)} - F(\hs{k}) \|^2 | \mcf_{k} \big]
\end{split}
\eeq
Repeating the analysis in (9) of \citep{chen2018stochastic}, we arrive at the upper bound
\beq
\EE \big[ \| (1-\gamma) \hs{k} + \gamma F( \hs{k} ) - \bss^\star \|^2 | \mcf_{k} \big] 
\leq ( 1 - \gamma \lambda / 2 ) \| \hs{k} - \bss^\star \|^2 
\eeq
On the other hand, applying \citep[Lemma 3]{defazio2014saga} shows that
\beq
\begin{split}
 \EE \big[ \| \StocEstep^{(k+1)} - F(\hs{k}) \|^2 | \mcf_{k} \big] 
& \leq 2 \Big( \| f_{i_k} ( \hs{\tau_{i_k}^k} ) - f_{i_k} ( \bss^\star ) \|^2 + \| f_{i_k} ( \hs{k}) - f_{i_k} ( \bss^\star ) \|^2  \Big) \\
& \leq 2 \Lip{\bss}^2 \Big( \| \hs{\tau_{i_k}^k}  - \bss^\star \|^2 + \| \hs{k} - \bss^\star \|^2 \Big) 
\end{split}
\eeq
Denote the total expectation as $h_k \eqdef \EE [ \| \hs{k} - \bss^\star \|^2 ]$, and
taking the total expectation on both sides   yields
\beq
 \EE \big[ \| \StocEstep^{(k+1)} - F(\hs{k}) \|^2 \big] \leq 
 2 \Lip{\bss}^2 \big( h_k + {\textstyle \frac{1}{n} \sum_{i=1}^n} h_{\tau_{i}^k} \big)
\eeq
Substituting the above into \eqref{eq:linear_0mean} yields
\beq \label{eq:before_recur}
h_{k+1} \leq \Big( 1 - \gamma \frac{\lambda}{2} + 2 \gamma^2 \Lip{\bss}^2 \Big) h_k + 2\gamma^2 \Lip{\bss}^2 \big( {\textstyle \frac{1}{n} \sum_{i=1}^n} h_{\tau_{i}^k} \big)
\eeq
Moreover, we observe the following recursion through evaluating the expectation
\beq
{\frac{1}{n} \sum_{i=1}^n} h_{\tau_{i}^k} = \frac{1}{n} h_{k-1} + \Big(1 - {\frac{1}{n}} \Big) {\frac{1}{n} \sum_{i=1}^n} h_{\tau_{i}^{k-1}} \leq \frac{1}{n} \sum_{\ell=0}^{k-1} 
\Big(1 - {\frac{1}{n}} \Big)^{k-\ell-1} h_\ell
\eeq 
Therefore, \eqref{eq:before_recur} simplifies to
\beq \label{eq:after_recur}
h_{k+1} \leq \Big( 1 - \gamma \frac{\lambda}{2} + 2 \gamma^2 \Lip{\bss}^2 \Big) h_k + \frac{2\gamma^2 \Lip{\bss}^2}{n} \sum_{\ell=0}^{k-1} 
\Big(1 - {\frac{1}{n}} \Big)^{k-\ell-1} h_\ell
\eeq
To this end, we let $a = \frac{\lambda}{2}, b = 2 \Lip{\bss}^2, c = 2 \Lip{\bss}^2$ and consider the following inequality, 
\beq
h_{k+1} \leq \big( 1 - \gamma a + \gamma^2  b \big) h_k + \frac{\gamma^2 c}{n} \sum_{\ell=0}^{k-1} \Big(1 - {\frac{1}{n}} \Big)^{k-\ell-1} h_\ell 
\eeq
We claim that for a sufficiently small step size $\gamma$, there exists  $\delta \in (0,1]$ such that $h_{k} \leq (1 - \delta)^k h_0$ for all $k$.
The proof can be achieved using induction. The base case is straightforward since:
\beq
h_1 \leq (1 - \gamma a + \gamma^2 b ) h_0 
\eeq
For the induction case, we assume that $h_{\tau} \leq (1-\delta)^{\tau} h_0$ for $\tau=1,2....,k$. We observe that the induction hypothesis implies
\beq
\begin{split}
\frac{h_{k+1}}{h_0} & \leq \big( 1 - \gamma a + \gamma^2  b \big) (1-\delta)^{k} + \frac{\gamma^2 c}{n} \sum_{\ell=0}^{k-1} \Big(1 - {\frac{1}{n}} \Big)^{k-\ell-1} (1-\delta)^{\ell} \\
& \leq \big( 1 - \gamma a + \gamma^2  b \big) (1-\delta)^{k} + \frac{ \gamma^2 c }{n} (1 - \delta)^{k-1} \frac{1}{1 - \frac{1-1/n}{1-\delta}} \\
& = (1-\delta)^{k} \left\{ \big( 1 - \gamma a + \gamma^2  b \big) + \gamma^2 c \frac{1}{1 - n \delta}  \right\} \\
& \overset{(a)}{\approx} (1-\delta)^{k} \left\{ \big( 1 - \gamma a + \gamma^2  b \big) + \gamma^2 c (1 + \delta n )  \right\} \\
& \leq (1-\delta)^{k} \left\{ \big( 1 - \gamma a + \gamma^2  b \big) + \gamma^2 c (1 + n ) \right\}
\end{split}
\eeq
where the approximation holds if $n \delta \ll 1$. 
Lastly, if 
\beq
\gamma \leq \frac{a}{2} (b + c (1+n))^{-1}
\eeq
Then $h_{k+1} \leq (1-\delta)^{k+1} h_0$ with $\delta \leq \gamma a - \gamma^2 (b + c(1+n)) = {\cal O}(1/n)$.
\end{proof}

\end{document}